%% file: main_v2.tex
\documentclass{article}

\usepackage{microtype}
\usepackage{graphicx}
\usepackage{subcaption}
\usepackage{booktabs} % for professional tables
\usepackage{wrapfig}
\usepackage{aliascnt}
\usepackage[most]{tcolorbox}
\usepackage{tabularx}

\usepackage{hyperref}

\renewcommand{\sectionautorefname}{Sec.}

\usepackage[accepted]{icml2026}

\usepackage{amsmath}
\usepackage{amssymb}
\usepackage{mathtools}
\usepackage{amsthm}
\usepackage{multirow}

\usepackage[capitalize,noabbrev]{cleveref}

\theoremstyle{plain}
\newtheorem{theorem}{Theorem}[section]

\newaliascnt{proposition}{theorem}
\newtheorem{proposition}[proposition]{Proposition}
\aliascntresetthe{proposition}

\newaliascnt{lemma}{theorem}
\newtheorem{lemma}[lemma]{Lemma}
\aliascntresetthe{lemma}

\newaliascnt{corollary}{theorem}
\newtheorem{corollary}[corollary]{Corollary}
\aliascntresetthe{corollary}

\theoremstyle{definition}

\theoremstyle{remark}
\newtheorem{remark}[theorem]{Remark}

\usepackage[textsize=tiny]{todonotes}

\input{definitions}

\icmltitlerunning{Conditional Diffusion Sampling}

\theoremstyle{definition}\newtheorem{result}[theorem]{Result}
\theoremstyle{remark}\newtheorem{example}[theorem]{Example}

\definecolor{commentcolor}{RGB}{0,128,128}

\def\sectionautorefname{Section}
\def\subsectionautorefname{Subsection}

\renewcommand{\subsectionautorefname}{Subsec.}

\newenvironment{boxedproposition}
  {\begin{tcolorbox}[
      enhanced, % Required for breakable to work optimally
      breakable, % This is the magic key
      colback=gray!10,
      colframe=gray!10,
      boxrule=0pt,
      arc=0pt,
      left=5pt, right=5pt, top=5pt, bottom=5pt
   ]
   \begin{proposition}}
  {\end{proposition}
   \end{tcolorbox}}

\newenvironment{boxedexample}
  {\begin{tcolorbox}[
      colback=blue!5,
      colframe=blue!5,
      boxrule=0pt,
      arc=0pt,
      left=5pt, right=5pt, top=5pt, bottom=5pt
   ]
   \begin{example}}
  {\end{example}
   \end{tcolorbox}}

\newcounter{algolinenumber}
\newcommand{\savealgonumber}{\setcounter{algolinenumber}{\value{ALC@line}}}
\newcommand{\restorealgonumber}{\setcounter{ALC@line}{\value{algolinenumber}}}

\newtcolorbox{stagebox}[1]{
  colback=blue!3,      % Very light grey background for content
  colframe=blue!5,     % Frame color same as background (effectively invisible)
  colbacktitle=blue!3, % Slightly darker grey for the title bar
  coltitle=black,       % Title text color
  title={#1},
  fonttitle=\small,
  fontupper=\small,
  sharp corners,        % Minimalist: no rounded corners
  left=0pt, right=0pt, top=0pt, bottom=0pt, % Zero padding so code aligns
  boxsep=2pt,           % Slight internal spacing
  toptitle=1mm, bottomtitle=0mm, % Spacing around the title text
  titlerule=0pt
}

\newenvironment{changes}{\color{black}}{}

\definecolor{darkblue}{RGB}{0, 0, 153}
\definecolor{firebrick}{RGB}{178, 34, 34}

\begin{document}

\renewcommand{\sectionautorefname}{Sec.}
\renewcommand{\subsectionautorefname}{Subsec.}

\twocolumn[
  \icmltitle{Conditional Diffusion Sampling}
  % \icmltitle{Conditional Diffusion Sampling: Sampling from unnormalized distributions using Conditional Interpolants}

  % It is OKAY to include author information, even for blind submissions: the
  % style file will automatically remove it for you unless you've provided
  % the [accepted] option to the icml2026 package.

  % List of affiliations: The first argument should be a (short) identifier you
  % will use later to specify author affiliations Academic affiliations
  % should list Department, University, City, Region, Country Industry
  % affiliations should list Company, City, Region, Country

  % You can specify symbols, otherwise they are numbered in order. Ideally, you
  % should not use this facility. Affiliations will be numbered in order of
  % appearance and this is the preferred way.
  % \icmlsetsymbol{equal}{*}
  \icmlsetsymbol{away}{*}

  \begin{icmlauthorlist}
    \icmlauthor{Francisco M. Castro-Macías}{a,away}
    \icmlauthor{Pablo Morales-Álvarez}{a}
    \icmlauthor{Saifuddin Syed}{b}
    \icmlauthor{Daniel Hernández-Lobato}{c}
    \icmlauthor{Rafael Molina}{a}
    \icmlauthor{José Miguel Hernández-Lobato}{d}
    % \icmlauthor{Firstname7 Lastname7}{comp}
    % %\icmlauthor{}{sch}
    % \icmlauthor{Firstname8 Lastname8}{sch}
    % \icmlauthor{Firstname8 Lastname8}{yyy,comp}
    %\icmlauthor{}{sch}
    %\icmlauthor{}{sch}
  \end{icmlauthorlist}

  \icmlaffiliation{a}{University of Granada}
  \icmlaffiliation{b}{University of British Columbia}
  \icmlaffiliation{c}{Universidad Autónoma de Madrid (EPS and CIAFF)}
  \icmlaffiliation{d}{University of Cambridge}
  % \icmlaffiliation{sch}{School of ZZZ, Institute of WWW, Location, Country}

  \icmlcorrespondingauthor{Francisco M. Castro-Macías}{fcastro@ugr.es}
  % \icmlcorrespondingauthor{Firstname2 Lastname2}{first2.last2@www.uk}

  % You may provide any keywords that you find helpful for describing your
  % paper; these are used to populate the "keywords" metadata in the PDF but
  % will not be shown in the document
  \icmlkeywords{Markov Chain Monte Carlo, Parallel Tempering, Diffusion-based sampling}

  \vskip 0.3in
]

% this must go after the closing bracket ] following \twocolumn[ ...

% This command actually creates the footnote in the first column listing the
% affiliations and the copyright notice. The command takes one argument, which
% is text to display at the start of the footnote. The \icmlEqualContribution
% command is standard text for equal contribution. Remove it (just {}) if you
% do not need this facility.

% Use ONE of the following lines. DO NOT remove the command.
% If you have no special notice, KEEP empty braces:
\printAffiliationsAndNotice{*Work done while visiting the University of Cambridge.}  % no special notice (required even if empty)
% Or, if applicable, use the standard equal contribution text:
% \printAffiliationsAndNotice{\icmlEqualContribution}

\begin{abstract}
Sampling from unnormalized multimodal distributions with limited density evaluations remains a fundamental challenge in machine learning and natural sciences.
Successful approaches construct a bridge between a tractable reference and the target distribution.
Parallel Tempering (PT) serves as the gold standard, while recent diffusion-based approaches offer a continuous alternative at the cost of neural training.
In this work, we introduce Conditional Diffusion Sampling (CDS), a framework that combines these two paradigms. 
To this end, we derive Conditional Interpolants, a class of stochastic processes whose transport dynamics are governed by an exact, closed-form stochastic differential equation (SDE), requiring no neural approximation.
Although these dynamics require sampling from a non-trivial initialization distribution, we show both theoretically and empirically that the cost of this initialization diminishes for sufficiently short diffusion times.
CDS leverages this by a two-stage procedure: (1) PT is used to efficiently sample the initial distribution, and then (2) samples are transported via the transport SDE. 
This combination couples the robust global exploration of PT with efficient local transport. 
Experiments suggest that CDS has the potential to achieve a superior trade-off between sample quality and density evaluation cost compared to state-of-the-art samplers.
\end{abstract}

\vspace{-5mm}
\section{Introduction}
\label{sec:introduction}

\begin{figure}[]
\centering
\includegraphics[width=1.0\columnwidth]{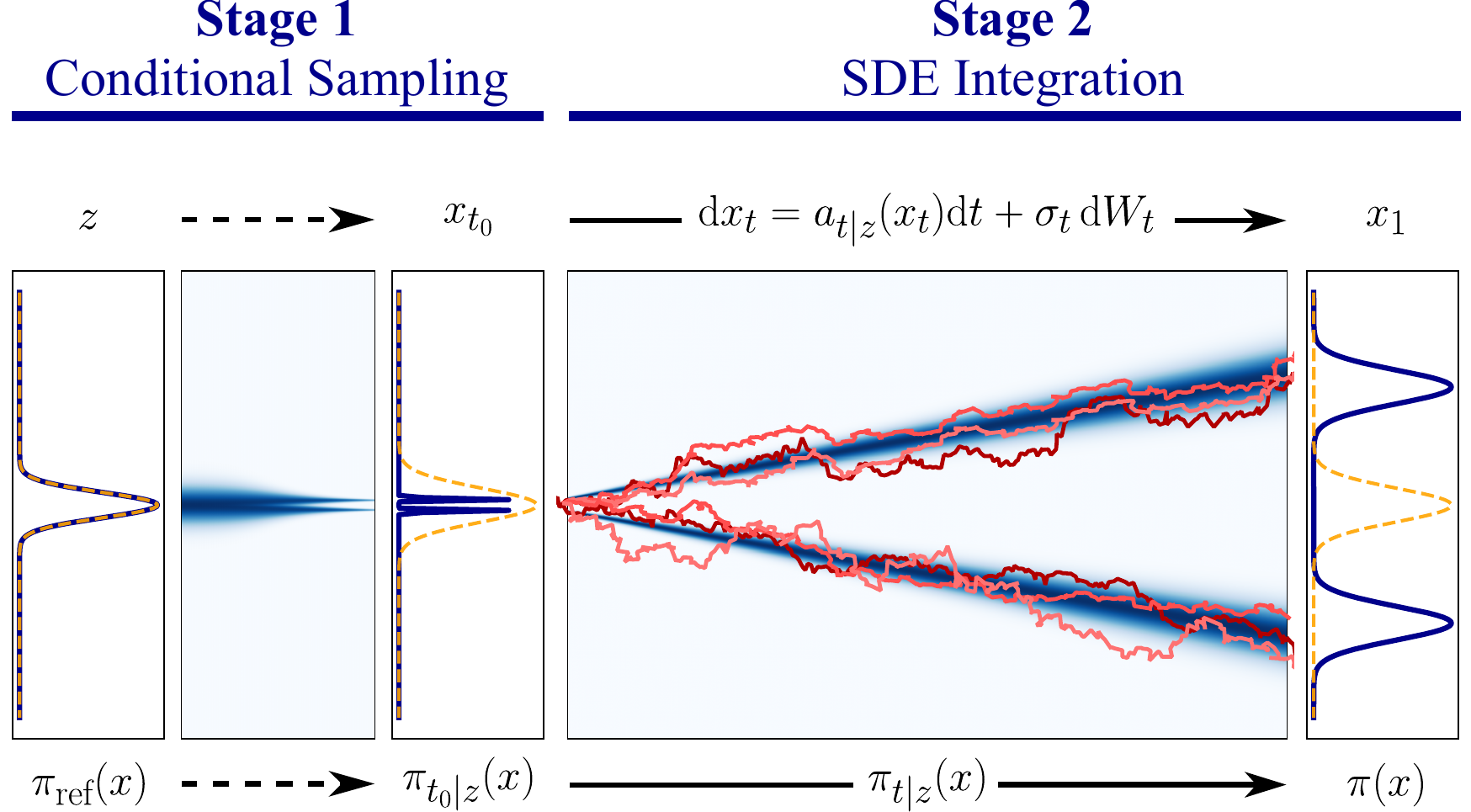}
\caption{
\textbf{Overview of CDS.} In the first stage, Parallel Tempering (PT) transforms initial samples $z$ from the reference $\piref$ (\textcolor{orange}{orange}) into samples from the initialization distribution $\pi_{t_0 \mid z}$ (\textcolor{blue}{blue}). In the second stage, these samples are transported to the target distribution $\pi$ (\textcolor{blue}{blue}) by integrating the closed-form SDE in \autoref{eq:interpolation_sde}.
}
\label{fig:main_figure}
\vspace{-0.5cm}
\end{figure}

We consider the problem of drawing independent samples from a target probability distribution $\nu$ on a state space $\mathcal{X} \subset \Rbb^D$.
We assume that $\nu$ admits a positive density $\pi \colon \mathcal{X} \to (0,+\infty)$ known only up to a normalization constant,
\begin{equation}
\pi(x) = Z^{-1} \tilde{\pi}(x), \quad Z = \int_{\mathcal{X}} \tilde{\pi}(x),\mathrm{d}x,
\end{equation}
and that we have access only to the unnormalized density $\tilde{\pi}$.
This classical problem arises across machine learning \cite{izmailov2021bayesian} and the natural sciences \cite{kroese2014monte}.
In these applications, evaluating $\tilde{\pi}$ is often computationally expensive, e.g., due to molecular force-field calculations \cite{noe2019boltzmann} or large neural networks \cite{izmailov2021bayesian}.
Consequently, it is crucial to design samplers that achieve high sample quality with as few density evaluations as possible.

% Succesful approaches for this problem: annealing, create a bridge between two distributions. The initial distribition is a reference from which sampling is easy, and we move samples along the sequence according to some mechanism.
% The most popular approach is PT --> different MCMC chains in parallel, they swap states to propagate information.

A common strategy is to construct a bridge between a tractable reference distribution and the target.
A prominent example is Parallel Tempering (PT, \citet{geyer1991markov,hukushima1996exchange}), which defines a sequence of intermediate distributions and runs multiple Markov chains in parallel, exchanging states to propagate information toward the target.
Modern non-reversible variants of PT dominate their reversible counterparts \cite{syed2022non}, but can be slow to converge if the reference shares meagre overlap with the target \cite{surjanovic2024uniform}.

% \begin{figure*}[t]
%     \centering
%     \includegraphics[width=\textwidth]{figures/samples_gm2.pdf}
    
%     \captionof{figure}{
%     \textbf{
%     Comparison of ground truth and generated samples for the GM-2 target from the Gaussian Mixture (GM) task.} 
%     All methods utilize a fixed budget of $2 \cdot 10^3$ density evaluations. Only the proposed CDS and DiGS methods successfully capture all modes under this limited budget. See \autoref{fig:gm_samples} for results on the GMNU-2 target.
%     }
%     \label{fig:gm_samples_main}
%     \vspace{-5mm}
% \end{figure*}

\begin{figure*}[t]
    \centering
    \includegraphics[width=1.0\linewidth]{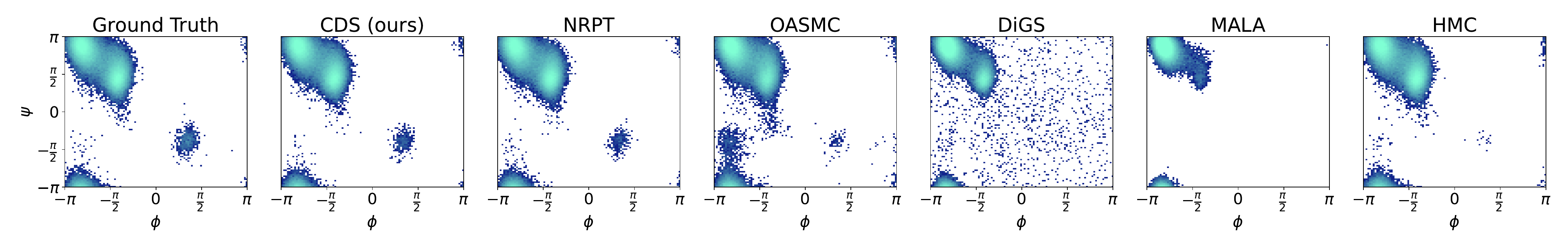}
    \caption{
        \textbf{Ramachandran histograms of Alanine Dipeptide (ALDP) in vacuum at $T = 300 \rm{K}$.} All methods utilize a fixed budget of $2 \cdot 10^5$ density evaluations. Only the proposed CDS and NRPT successfully capture all modes under this limited budget.
    }
    \label{fig:aldp_ramachandran}
    \vspace{-5mm}
\end{figure*}

% Recently, generative modelling has popularized diffusion processes as another way of bridging two distributions. The general idea is to move samples from the reference toward the target following a transport ODE / SDE. The dynamics are initialized sampling from the reference, then following the SDE until reaching the target.
%  This has been generalized in the form of stochastic interpolants
% However --> generative modelling formulation require either (a) training neural networks on previously generated data, which violates our assumptions, or (b) adapting the training process, which results in a waste of density evaluations. 

More recently, generative modeling has popularized diffusion-based approaches for bridging distributions \cite{song2021score}, in which samples are transported continuously via a diffusion or flow.
This perspective has been unified under the theory of Stochastic Interpolants \cite{albergo2025stochastic}, and extended to unnormalized targets via Neural Diffusion Samplers \cite{akhound2024iterated,nusken2024transport,albergo2025nets,akhound2025progressive,zhang2025accelerated}.
However, these methods typically require either training on pre-existing data or learning transport maps through iterative optimization, both of which are costly in terms of target density evaluations \cite{he2025no}.
\begin{changes}
As such, they operate in a complementary, amortized regime to MCMC methods.
\end{changes}

% In this work we analize a special case of interpolants -- which we call conditional interpolants -- whose associated SDEs / ODEs admit a closed form expression. 
% This means: their drift term only relies on the score of the target and on the functional form of the interpolant map; both tractcable

In this work, we introduce Conditional Interpolants, a special class of interpolants that follow a conditional, rather than marginal, probability path.
Their associated transport dynamics admit a closed-form, learning-free expression, with drift depending only on the target score and the interpolant map, both of which are tractable.
This exactness imposes a constraint: the dynamics cannot be initialized at $t=0$ and instead require samples from the probability path at some $t_0>0$.

We show that this requirement is, in fact, advantageous.
As $t_0 \to 0$, the probability path concentrates toward a Dirac measure centered at a reference sample, making the sampling process of the initialization distribution highly efficient.
Building on this insight, we propose Conditional Diffusion Sampling (CDS), a two-stage method that (i) efficiently samples from the intermediate distribution using PT for near-zero $t_0$, and (ii) transports these samples to the target via the closed-form conditional diffusion.
An overview of CDS is shown in \autoref{fig:main_figure}.

Our contributions are:
(i) the derivation of Conditional Interpolants, a general class of stochastic interpolants with exact, closed-form dynamics;
(ii) a theoretical analysis of the initialization distribution, showing vanishing transport cost as diffusion time decreases, alongside empirical evidence of efficient PT sampling;
(iii) the introduction of CDS, combining global exploration via PT with efficient conditional diffusion; and
(iv) extensive evaluation across 8 target distributions and 4 tasks, demonstrating superior trade-off between sample quality and density evaluation cost as shown in \autoref{fig:aldp_ramachandran}.

\section{Background}
\label{sec:background}

% This section reviews the key concepts underlying CDS. 
% In \autoref{subsec:mcmc}, we summarize Markov Chain Monte Carlo (MCMC) methods, with a focus on Parallel Tempering (PT). 
% In \autoref{subsec:diffusions_flows_interpolants}, we review diffusion-based approaches for constructing probabilistic bridges between distributions.

\subsection{Markov Chain Monte Carlo}
\label{subsec:mcmc}

Markov Chain Monte Carlo (MCMC, \cite{meyn2012markov}) methods aim to sample from an unnormalized target distribution $\nu$ by constructing a Markov chain whose invariant distribution is $\nu$.
This is typically achieved by designing a transition kernel that leaves $\nu$ invariant, such as Metropolis--Hastings (MH, \citet{metropolis1953equation,hastings1970monte}), MALA \cite{roberts1996exponential}, or Hamiltonian Monte Carlo (HMC, \citet{duane1987hybrid}).
We provide a brief introduction to these kernels in \autoref{app:theory_markov_kernels}.

\textbf{Annealing-based Methods.}
In high-dimensional and multimodal settings, standard MCMC kernels often suffer from poor mixing, becoming trapped in individual modes and producing highly correlated samples \cite{meyn2012markov}. This motivates annealing-based strategies, which introduce a sequence of annealing distributions $\nu_{\beta}$ interpolating between a tractable reference $\nuref$ (e.g., a Gaussian or tempered target) at $\beta=0$ and the target $\nu$ at $\beta=1$. A common choice is the geometric path, with densities $\pi_{\beta}$ given by
\begin{equation}\label{eq:geometric_bridge}
    \pi_{\beta}(x) \propto \piref(x)^{1-\beta}\,\pi(x)^{\beta},
\end{equation}
where $\piref$ is the density of the reference distribution. 
\begin{changes}
Given a schedule $0=\beta_0<\cdots<\beta_N=1$, samples are progressively transported from $\nuref$ to $\nu$ through these intermediate distributions. 
Importantly, performance is highly sensitive to the schedule and degrades when the overlap between the reference and the target is poor \cite{syed2021parallel,syed2024optimised}.
\end{changes}

\textbf{Parallel Tempering (PT).}
PT is an annealing-based method that runs several chains in parallel, one per distribution in the annealing schedule. 
It alternates between local MCMC updates and swap moves between adjacent chains. 
Low-$\beta$ chains explore smoother landscapes, and swap moves propagate this exploration toward the target, improving global mixing.
% More details, together with an algorithmic description, are provided in \autoref{app:parallel-tempering}.
We provide more details in \autoref{app:parallel-tempering}.
% See  for details.
% Given an annealing schedule $0=\beta_0<\cdots<\beta_N=1$, PT runs one chain per distribution, alternating between local MCMC updates and swap moves between adjacent chains. 
% Low-$\beta$ chains explore smoother landscapes, and swap moves propagate this exploration toward the target, improving global mixing.
% See \autoref{app:parallel-tempering} for details.
% \begin{changes}
% Crucially, PT performance is commonly assessed via the number of Round Trips (RTs). RTs are strongly correlated with the overlap between the reference and target distributions, with higher values indicating better exploration.
% \end{changes}

% The success of PT is associated with its mechanism for bridging two distributions.
% More recently, generative modeling has popularized an alternative, continuous-time perspective on this idea.
% We describe it in the next subsection.

\subsection{Diffusions and Interpolants}
\label{subsec:diffusions_flows_interpolants}

\textbf{Diffusions.}
A stochastic process $\left\{ x_t \right\}= \left\{ x_t \colon 0 \leq t \leq 1 \right\}$ is an (Itô) diffusion process if it is the solution to an stochastic differential equation (SDE) of the form
\begin{align}\label{eq:general_sde}
    \dd x_t = a_t(x_t) \dd t + \sigma_t \dd W_t,
\end{align}
where $x_t \in \Rbb^D$, $(t,x) \mapsto a_t(x) \in \Rbb^{D}$ is the drift vector field, $t \mapsto \sigma_t \in \left( 0, +\infty \right)$ is the diffusion coefficient, and $W_t$ is a $D$-dimensional Wiener process.
In the limit where the diffusion coefficient vanishes ($\sigma_t \to 0$), the dynamics reduce to a deterministic flow governed by an Ordinary Differential Equation (ODE), with marginals satisfying the continuity equation. We discuss this connection in \autoref{app:conditional_flows}.

If the marginal $x_t$ at time $t$ admits a time-dependent density $x \mapsto p_t(x)$, then $p_t$ satisfies the Fokker-Planck-Kolmogorov (FPK) equation \cite{sarkka2019applied}:
\begin{equation}\label{eq:fokker-planck}
    \frac{\partial}{\partial t} p_t= - \operatorname{div}\left( p_t a_t \right) + \frac{\sigma_t^2}{2} \Delta p_t.
\end{equation}
Conversely, if $p_t$ solves the FPK equation
and the coefficients $a_t$, $\sigma_t$ satisfy standard regularity and growth
conditions, then there exists a stochastic process solving the SDE in
\autoref{eq:general_sde} in the weak sense whose time marginals are given by $p_t$ \cite{sarkka2019applied}.
Under appropriate choice of drift term $a_t$, one may enforce $p_0 = \piref$ (a fixed tractable reference distribution), and $p_1 = \pi$ (the target distribution) \cite{song2021score}.
In this case, the SDE in \autoref{eq:general_sde} defines a continuous probabilistic bridge between $\piref$ and $\pi$.

% Notably, when $\sigma_t = 0$, the SDE in \autoref{eq:general_sde} simplifies into a deterministic ordinary differential equation that transports a reference density to a target density. In that case, the stochastic process $\left\{ x_t \right\}$ is known as a flow, and its time-dependent density is specified via the continuity equation, see \autoref{TODO} for more details.

\textbf{Stochastic Interpolants.}
\citet{albergo2025stochastic} provides a unifying framework for flows and diffusions, known as Stochastic Interpolants.
Instead of defining the process via an SDE or ODE directly, one specifies the process explicitly as an interpolation between samples:
\begin{equation}\label{eq:interpolant}
    x_t = F_t(z, x), \quad z \sim \nuref, \quad x \sim \nu,
\end{equation}
where $\nu$ is the target distribution and $\nuref$ is a tractable reference distribution, and $F_t$ is a differentiable map satisfying the boundary conditions $F_0(z, x) = z$ and $F_1(z, x) = x$.
\begin{boxedexample}[Linear Interpolant]
    We consider the linear interpolant as a running example throughout this work. For $\mathcal{X}=\Rbb^D$, it is defined as:
    \begin{equation}\label{eq:linear_interpolant}
        x_t = F_t(z, x) = (1-t) z + tx,
    \end{equation}
    with $z \sim \nuref$, and $x \sim \nu$. This is the canonical choice in flow matching frameworks \cite{lipman2023flow}.
\end{boxedexample}
The resulting marginals define a probability path from $\nuref$ to $\nu$.
Crucially, the framework guarantees the existence of a drift field $a_t$ such that the marginals of $x_t$ satisfy the FPK equation (or a velocity field $u_t$ satisfying the continuity equation in the deterministic limit).
This duality allows one to characterize the interpolant $x_t$ equivalently as the solution to a diffusion SDE or a flow ODE.

In practice, the vector fields governing these dynamics are intractable and are approximated using neural networks trained on samples from $\nu$.
Since we assume no prior access to such samples, this approach is unsuitable.
%in our setting.
Yet, the interpolant perspective motivates our construction.
In the next section, we introduce Conditional Interpolants, a class of interpolants admitting closed-form dynamics, and leverage them to design Conditional Diffusion Sampling (CDS).

\section{Conditional Diffusion Sampling (CDS)}

% This section introduces a framework for sampling 
% We first present Conditional Interpolants in \autoref{subsec:conditional_interpolants}, and derive their associated transport dynamics in \autoref{subsec:conditional_diffusions}.
% Next, \autoref{subsec:sampling_initial} studies the initialization of these dynamics.
% Finally, in \autoref{subsec:sampling_procedure} we propose a new sampling algorithm.

% \subsection{Overview}

\begin{changes}

\textbf{Overview.}
First, we provide a high-level overview of Conditional Diffusion Sampling (CDS), our framework for sampling from unnormalized multimodal distributions.

The central idea of CDS is to reduce a difficult global sampling problem into two manageable stages. 
We condition on a reference point $z \sim \nuref$ and construct a \emph{conditional path of distributions} $\{\nu_{t \mid z}\}_{t \in (0,1]}$ that gradually evolves from a point mass at $z$ to the full target distribution $\nu$. 

The path is defined via a \emph{Conditional Interpolant} map $F_{t \mid z}$ (defined in \autoref{subsec:conditional_interpolants}), which allows us to derive closed-form transport dynamics that need a careful initialization.
As illustrated in \autoref{fig:main_figure}, our sampling procedure leverages this in two stages:

\textbf{Stage 1: Conditional Sampling (\autoref{subsec:sampling_initial}).} 
We initialize the process at a small time $t_0 > 0$ by sampling $x_{t_0} \sim \nu_{t_0 \mid z}$.
As illustrated in \autoref{fig:gm2_evolution}, $\nu_{t_0 \mid z}$ highly concentrates around $z$, ensuring substantial overlap with the reference distribution $\nuref$, and making global exploration highly efficient.
In practice, using Parallel Tempering (PT) to sample $\nu_{t_0 \mid z}$ yields improved swap efficiency and mode exploration, see \autoref{subsec:impact_t0}.

\textbf{Stage 2: SDE Integration (\autoref{subsec:transport_dynamics}).} 
We transport the approximate samples from $\nu_{t_0 \mid z}$ to the target $\nu$ at $t=1$. 
Because the interpolant dynamics admit a closed-form SDE, we can transport and refine these samples without any approximation. 
Crucially, the SDE provides continuous refinement, correcting samples along the trajectory, see \autoref{subsec:sde_vs_invtrasnf}.

The remainder of this section details each component: \autoref{subsec:conditional_interpolants} introduces Conditional Interpolants, \autoref{subsec:transport_dynamics} derives their transport dynamics (Stage 2), \autoref{subsec:sampling_initial} analyzes the initialization (Stage 1), and \autoref{subsec:algorithmic_details} presents the full algorithm.
Notably, Stage 2 is presented before Stage 1, as the derivation of the transport dynamics clarify the need for careful initialization.
\end{changes}

\subsection{Conditional Interpolants}
\label{subsec:conditional_interpolants}

\begin{figure*}[t]
    \centering
    % First Subfigure
    \begin{subfigure}[b]{0.18\textwidth}
        \centering
        \includegraphics[width=\textwidth]{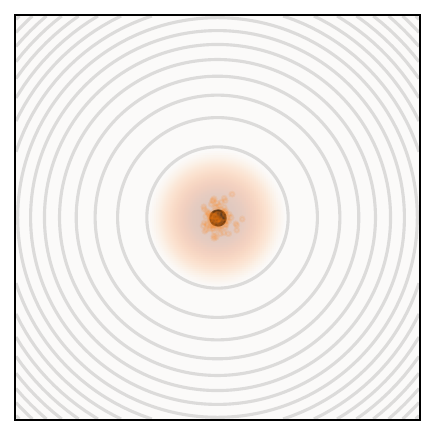}
        \caption{$t=0.0$}
    \end{subfigure}
    \begin{subfigure}[b]{0.18\textwidth}
        \centering
        \includegraphics[width=\textwidth]{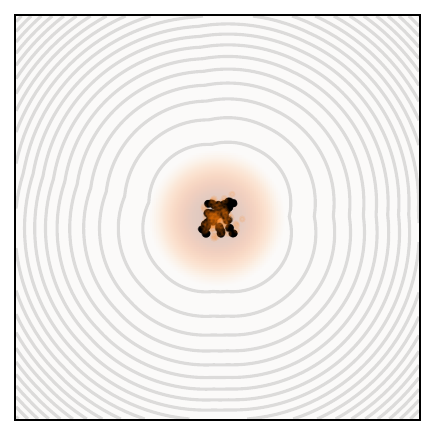}
        \caption{$t=0.1$}
    \end{subfigure}
    \begin{subfigure}[b]{0.18\textwidth}
        \centering
        \includegraphics[width=\textwidth]{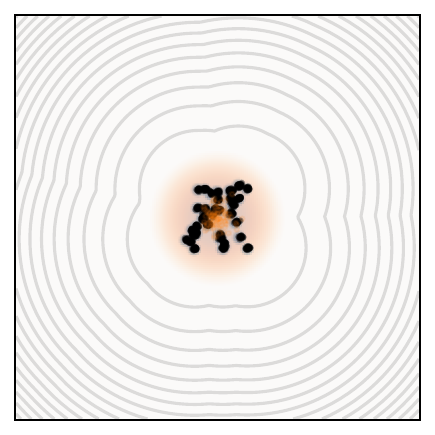}
        \caption{$t=0.2$}
    \end{subfigure}
    \begin{subfigure}[b]{0.18\textwidth}
        \centering
        \includegraphics[width=\textwidth]{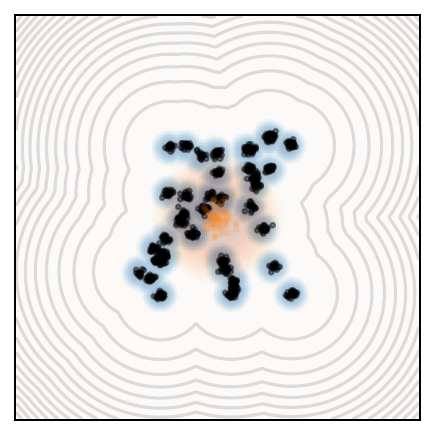}
        \caption{$t=0.5$}
    \end{subfigure}
    \begin{subfigure}[b]{0.18\textwidth}
        \centering
        \includegraphics[width=\textwidth]{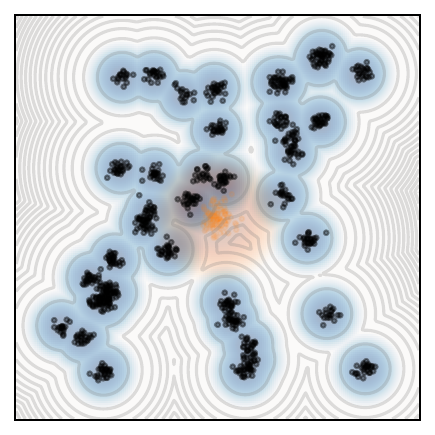}
        \caption{$t=1.0$}
    \end{subfigure}
    \caption{\textbf{Density evolution and exact samples from $\pi_{t \mid z}$.} This plot illustrates the linear interpolant for a fixed $z$ sampled from $\mathcal{N}(0,I)$. As $t \to 0$, the target distribution (\textcolor{blue}{blue}) increasingly concentrates inside the reference (\textcolor{orange}{orange}).}
    \label{fig:gm2_evolution}
    \vspace{-5mm}
\end{figure*}

Let $\nuref$ denote a tractable reference distribution (with density $\piref$) from which sampling is straightforward.
Following \citet{albergo2025stochastic}, we consider stochastic interpolants as given by \autoref{eq:interpolant}.
Rather than focusing on the marginal distribution of $x_t$, we consider the conditional distribution of $x_t$ given a reference variable $z \sim \nuref$.

Fix $z \sim \nuref$ and define the conditional interpolation map $F_{t \mid z}(\cdot) = F_t(z, \cdot)$, so that
\begin{equation}\label{eq:interpolant_conditional}
x_t = F_{t \mid z}(x), \quad x \sim \nu.
\end{equation}

\begin{changes}
\textbf{Definition.}
A \emph{Conditional Interpolant} is the family of random variables defined by \autoref{eq:interpolant_conditional}, where for each $z \sim \nuref$ and $t \in (0,1]$, the map $F_{t \mid z} : \mathcal{X} \to \mathcal{X}$ is a diffeomorphism. This induces a conditional distribution $\nu_{t \mid z}$ as the pushforward of $\nu$ through $F_{t \mid z}$.
\end{changes}

% Let $\nuref$ denote a tractable reference distribution (with density $\piref$) from which sampling is straightforward. 
% Following \citet{albergo2025stochastic}, we consider Stochastic Interpolants as given by \autoref{eq:interpolant}. 
% Rather than focusing on the marginal distribution of $x_t$, we will be interested in the conditional distribution of $x_t$ given the reference variable $z \sim \nuref$. 
% Accordingly, we fix $z \sim \nuref$ and consider the associated interpolation map $F_{t \mid z}(\cdot) = F_t(z, \cdot)$.
% With this notation, the interpolant becomes
% \begin{equation}\label{eq:interpolant_conditional}
%     x_t = F_{t \mid z}(x), \quad x \sim \nu.
% \end{equation}

% \textbf{Conditional Assumption.}
% We assume that for any $t \in (0, 1]$ and $z \in \mathcal{X}$, the map $x \mapsto F_{t \mid z}(x)$ is a diffeomorphism (invertible with a differentiable inverse). 
% This assumption is mild, satisfied by most interpolants used in practice. 

\textbf{Conditional Density.}
Let $\nu_{t \mid z}$ denote the distribution of the random variable $x_t$ conditioned on $z$. 
Its density, denoted $\pi_{t \mid z}$, is determined by the change of variables formula:
\begin{equation}\label{eq:p_t_x_z}
    \pi_{t \mid z}(x) = \left| \det \mathrm{J}F_{t \mid z} \big( F_{t \mid z}^{-1}(x) \big) \right|^{-1} \pi \big( F_{t \mid z}^{-1}(x) \big),
\end{equation}
where $\mathrm{J}F_{t \mid z}$ denotes the Jacobian matrix of the map $F_{t \mid z}$.

\begin{boxedexample}
    For the linear interpolant defined in \autoref{eq:linear_interpolant}, the conditional density takes the closed form:
    \begin{equation}\label{eq:linear_density}
        \pi_{t \mid z}(x) = t^{-D} \, \pi \left( \frac{x - (1-t) z}{t} \right).
    \end{equation}
\end{boxedexample}

Under standard regularity assumptions, as $t \to 0$, the measure $\nu_{t \mid z}$ concentrates mass around $z$. Formally,
\begin{equation}
    \lim_{t \to 0} W_1(\delta_z, \nu_{t \mid z}) = 0,
\end{equation}
where $\delta_z$ is a Dirac delta centered at $z$, and $W_1(\cdot, \cdot)$ denotes the Wasserstein-1 distance; see \autoref{lemma:convergence_dirac_mass} for a proof.
Intuitively, $\nu_{t \mid z}$ interpolates between a Dirac delta $\delta_z$ at $t = 0$ and the target distribution $\nu$ at $t=1$, see \autoref{fig:gm2_evolution}. 
Indeed, $\nu_{t \mid z}$ corresponds to the displacement interpolant between $\nu$ and $\delta_{z}$ \cite{mccann1997convexity}.
\begin{remark}
    We can recover the marginal of $x_t$ from this conditional distribution as
    \begin{equation}\label{eq:marginal}
        \pi_t \left( x \right) = \int_{\mathcal{X}} \pi_{t \mid z} \left( x \right) \piref\left( z \right) \dd z.
    \end{equation}
\end{remark}

\subsection{Transport Dynamics}
\label{subsec:transport_dynamics}

We characterize the dynamics associated with the proposed family of interpolants.
First, we define the conditional velocity field $u_{t \mid z}$ as the instantaneous time-evolution of the interpolant map:
\begin{equation}
    u_{t \mid z}(x) = \frac{\partial F_{t \mid z}}{\partial t} \left( F_{t \mid z}^{-1} (x) \right).
\end{equation}
This vector field describes the deterministic trajectory of the interpolant defined in \autoref{eq:interpolant_conditional}.
\begin{boxedexample}
    For the linear interpolant introduced in \autoref{eq:linear_interpolant}, the velocity field takes the simple form:
    \begin{equation}
        u_{t \mid z}(x) = \frac{x - z}{t}.
    \end{equation}
\end{boxedexample}

The velocity field $u_{t \mid z}$ governs the deterministic transport of probability mass, but does not use any information about the target distribution; we elaborate about this in \autoref{app:conditional_flows}.
To account for this, we can introduce stochasticity while preserving the marginal distributions. 
The following proposition establishes the link between the interpolant's density and a specific FPK equation.

\begin{boxedproposition}
    For any $t \mapsto \sigma_t \in (0, \infty)$, the conditional density satisfies the following FPK equation:
    \begin{equation}
        \frac{\partial}{\partial t} \pi_{t \mid z} = - \operatorname{div}\left( \pi_{t \mid z} a_{t \mid z} \right) + \frac{\sigma_t^2}{2} \Delta \pi_{t \mid z},
    \end{equation}
    where $a_{t \mid z} =  u_{t \mid z} + \frac{\sigma_t^2}{2} \nabla \log \pi_{t \mid z}$.

    \begin{proof}
        See \autoref{lemma:continuity_fpk}.
    \end{proof}
    
\end{boxedproposition}

Motivated by this, we define the corresponding Conditional Diffusion via the following SDE:
\begin{equation}\label{eq:interpolation_sde}
    \dd x_t = \left( u_{t \mid z}(x_t) + \frac{\sigma_t^2}{2} \nabla \log \pi_{t \mid z} (x_t) \right) \dd t + \sigma_t \, \dd W_t.
\end{equation}
Here, the noise schedule $\sigma_t$ controls the path stochasticity without altering the time-dependent density $\pi_{t \mid z}$ \cite{song2021score}. 
Its impact is analyzed in \autoref{app:additional_experiments}.
% The impact of this hyperparameter is analyzed in \autoref{app:additional_experiments}.

A key advantage over standard score-based generative modeling is that the score function, $\nabla \log \pi_{t \mid z}$, does not need to be learned. 
Instead, it is calculated directly from the target density $\pi$ via the change of variables formula and can be evaluated efficiently (see \autoref{lemma:app_score_conditional_interpolant}).

\begin{boxedexample}
    For the linear interpolant, the score term in the SDE simplifies to:
    \begin{equation}\label{eq:score_cond_interpolant_linear}
        \nabla \log \pi_{t \mid z} (x) = t^{-1} \nabla \log \pi \left( \frac{x - (1-t) z}{t} \right).
    \end{equation}
\end{boxedexample}

The dynamics in \autoref{eq:interpolation_sde} balance the deterministic transport $u_{t \mid z}$ with a stochastic correction guided by the score. 
The noise term allows the trajectory to explore the state space, while the score term continuously corrects the path towards high-density regions of the target $\pi$.

\textbf{Singularities and Initialization.}
Numerical integration of \autoref{eq:interpolation_sde} requires careful handling of the boundary at $t=0$.
Since the interpolant $F_{t \mid z}$ is non-invertible at $t=0$, the velocity field $u_{t \mid z}$ exhibits a singularity at $t=0$, causing trajectories to diverge if integrated directly from $t=0$.
This behavior mirrors the singularities observed in diffusion models and flow matching, where vanishing noise variance leads to unbounded scores or vector fields \cite{song2021score, lipman2023flow}.

To circumvent this, we initialize the process at a near-zero time $t_0 > 0$. 
Crucially, as shown in \autoref{app:additional_experiments}, a deterministic initialization $x_{t_0} = z$ yields poor performance because the diffusion cannot sufficiently expand from a point mass to cover the support of $\nu_{t_0 \mid z}$.
Therefore, we initialize by sampling $x_{t_0} \sim \nu_{t_0 \mid z}$. We discuss efficient sampling strategies for this distribution in the following subsection.

\subsection{Sampling the Initial State}
\label{subsec:sampling_initial}

We aim to sample from $\nu_{t_0 \mid z}$ for near-zero $t_0 > 0$. 
As $t \to 0$, the measure $\nu_{t \mid z}$ concentrates around the anchor point $z$, converging to the Dirac measure $\delta_z$. Accordingly, samples from this regime are expected to lie close to $z$.

To formalize this behavior, we analyze how a Markov kernel transforms when its invariant distribution is modified from $\nu$ to $\nu_{t \mid z}$. 
The following result relates convergence under the transformed kernel to the Lipschitz properties of the interpolant. 
% Background on Markov kernels is provided in \autoref{app:theory_markov_kernels}.
See \autoref{app:theory_markov_kernels} for background on Markov kernels.

\begin{boxedproposition}
    Let $K$ be any Markov kernel invariant with respect to $\nu$. We define the rescaled Markov kernel $K_{t \mid z}$ for any $x \in \mathcal{X}$ and measurable set $A \subset \mathcal{X}$ as:
    \begin{equation}
        K_{t \mid z}(x, A) = K\left( F_{t \mid z}^{-1}(x), F_{t \mid z}^{-1}(A) \right).
    \end{equation}
    Then, the kernel $K_{t \mid z}$ is invariant with respect to $\nu_{t \mid z}$.
    Furthermore, let $L_{t}$ be the Lipschitz constant of the interpolant $F_{t \mid z}$. If we initialize the chain at the reference point $z$, the Wasserstein-1 distance to the target satisfies:
    \begin{equation}\label{eq:W1_Kt_bound}
        W_1\left( K_{t \mid z}^{n} \left( \delta_z \right), \nu_{t \mid z} \right) \leq L_{t} \, W_1\left( K^n \left( \delta_{x_0} \right), \nu \right),
    \end{equation}
    where $x_0 = F_{t \mid z}^{-1}(z)$.

    \begin{proof}
        See \autoref{prop:markov_kernels_bilipschitz}.
    \end{proof}
\end{boxedproposition}

Equation \ref{eq:W1_Kt_bound} implies that whenever $L_t \leq 1$, the sampling error -- measured by the Wasserstein-1 distance -- is strictly lower when targeting $\nu_{t \mid z}$ than when targeting the original distribution $\nu$.
Moreover, if $\lim_{t \to 0} L_t = 0$, the error vanishes as $t$ decreases.
While this property does not hold for arbitrary bijections, it is satisfied by standard interpolants used in generative modeling, such as linear and trigonometric paths \cite{albergo2025stochastic}.
Geometrically, contracting the space reduces the transport cost between the initialization and the target $\nu_{t \mid z}$ proportionally. 

\begin{boxedexample}
    In the linear interpolant case, the bound in \autoref{eq:W1_Kt_bound} becomes an equality, implying that the error decays exactly linearly with $t$:
    \begin{equation}\label{eq:W1_Kt_bound_linear}
        W_1\left( K_{t \mid z}^{n} \left( \delta_z \right), \nu_{t \mid z} \right) = t \, W_1\left( K^n \left( \delta_{z} \right), \nu \right).
    \end{equation}
    See \autoref{cor:markov_kernels_linear_interpolants} for a proof.
\end{boxedexample}

Thus, for any fixed number of MCMC steps $n$, samples become arbitrarily close to $\nu_{t \mid z}$ in the optimal transport sense as $t \to 0$.

However, a small transport cost to $\nu_{t_0 \mid z}$ does not necessarily imply faster mixing. 
For non-linear interpolants, geometric distortions induced by the condition number of $F_{t \mid z}$ may hinder exploration. 
Even in the linear case, $\nu_{t_0 \mid z}$ retains the multimodal structure of $\nu$, differing only by a global contraction. Consequently, standard MCMC methods may still mix poorly. 
To mitigate this issue, we employ PT to sample from $\nu_{t_0 \mid z}$, as described in the next section.

% We emphasize that, although the transport cost to $\nu_{t_0 \mid z}$ becomes negligible as $t_0 \to 0$, sampling from $\nu_{t_0 \mid z}$ is not necessarily easier than sampling from the original distribution $\nu$ in terms of mixing time. For general non-linear interpolants, the condition number of the map (the ratio of expansion to contraction) may introduce geometric distortions. Furthermore, even in the linear case, the rescaled distribution $\nu_{t_0 \mid z}$ preserves the intrinsic complexity and multimodal structure of $\nu$, differing only by a global contraction.
% As a consequence, standard MCMC methods may still suffer from poor mixing. To mitigate this issue, we resort to PT to sample from $\nu_{t_0 \mid z}$, as detailed in the next section.

\subsection{Algorithmic Details}
\label{subsec:algorithmic_details}

\begin{changes}

The procedure of CDS is summarized in \autoref{alg:cond_diffusion_sampler}. 
It samples from the target distribution $\nu$ in two stages: (i) sampling from the conditional distribution $\nu_{t_0 \mid z}$, and (ii) integrating the conditional SDE in \autoref{eq:interpolation_sde}.

\textbf{Computational Budget Allocation.}
Given a fixed computational budget, performance depends on how computation is split across the two stages.
We control this trade-off via two hyperparameters: the number of steps in Stage 1 ($K$) and in Stage 2 ($N$).
We find that sufficient effort must be devoted to Stage 1 to ensure enough global exploration, while retaining enough integration steps to accurately transport the samples, see \autoref{app:ablation}.

\textbf{Choice of Sampler in Stage 1.}
Stage 1 aims to sample from $\nu_{t_0 \mid z}$ for $t_0 > 0$.
While any sampler could be used, $\nu_{t_0 \mid z}$ remains multimodal, making local MCMC methods prone to poor mixing.
Annealing-based methods are better suited, as they progressively bridge the reference and target distributions, and benefit from the fact that $\nu_{t_0 \mid z}$ approaches the reference as $t_0 \to 0$.
In practice, we find that PT outperforms other choices, see \autoref{app:sampler_at_initialization}.

\textbf{Numerical Integration.}
We use the Euler--Maruyama scheme to integrate the SDE, though any suitable solver could be employed.
The diffusion variance $\sigma_t$ controls the stochasticity of the process, and can be specified as a noise schedule. 
Following diffusion models~\cite{song2021score}, integration can be enhanced with an $M$ steps of an MCMC corrector targeting $\nu_{t \mid z}$ to reduce discretization error.
In our setting, this is particularly effective since the log-density is available, enabling Metropolis--Hastings corrections.
See \autoref{app:ablation} for the corresponding analysis of these hyperparameters.

\end{changes}

\begin{algorithm}[tb]
  \caption{Conditional Diffusion Sampling (CDS)}
  \label{alg:cond_diffusion_sampler}
  
  % --- Part 0: Initialization (No Box) ---
  \begin{algorithmic}[1]
    \REQUIRE { \small
        Number of PT steps $K$, time schedule $\left\{ 0 < t_0 < \cdots < t_N = 1 \right\}$, noise schedule $\left\{ \sigma_0, \ldots, \sigma_{N-1} \right\}$, number of corrector steps $M$.
    }
  \end{algorithmic}

  % --- Part 1: Stage 1 Box ---
  \begin{stagebox}{\textbf{STAGE 1: Conditional Sampling}}
  % \begin{stagebox}
      \begin{algorithmic}[1]
        \STATE $z \sim \nuref$ 
        \STATE \COMMENT{Run PT to sample from $\nu_{t_0 \mid z}$.}
        \STATE \COMMENT{Starting from $z$, $K$ steps.}
        \STATE $x_0 \gets \operatorname{PTSampler}\left( z, \nu_{t_0 \mid z}, K \right)$
        \savealgonumber
      \end{algorithmic}
  \end{stagebox}

  % --- Part 2: Stage 2 Box ---
  \begin{stagebox}{\textbf{STAGE 2: SDE Integration}}
      \begin{algorithmic}[1]
        \restorealgonumber
        \FOR{$n=0$ {\bfseries to} $N-1$} 
            \STATE $\xi \sim \mathcal{N}\left(0, I \right)$
            \STATE $\Delta t_n \gets t_{n+1} - t_n$
            \STATE $a_n \gets u_{t_n \mid z}\left( x_n \right) +  \frac{\sigma_n^2}{2} \nabla \log \pi_{t_n \mid z} \left( x_n \right)$
            \STATE $y \gets x_{n} + \Delta t_n a_n + \sigma_n \sqrt{\Delta t_n}\xi $ 
            \STATE \COMMENT{Run a corrector to sample from $\nu_{t_{n+1} \mid z}$.}
            \STATE \COMMENT{Starting from $y$, $M$ steps.}
            \STATE $x_{n+1} \gets \operatorname{Corrector}\left( y, \nu_{t_{n+1} \mid z}, M \right)$
        \ENDFOR    
        \RETURN $x_N$ 
      \end{algorithmic}
  \end{stagebox}
% \vspace{-0.6em}
\end{algorithm}
% \vspace{-2em}

% To exploit this, we note that PT transports samples from a reference distribution to a target distribution. Intuitively, it should be easy to move samples from delta_z to \pi_t0 using PT. At least, it should be easier as t0 -> 0. 

\section{Related Work}

We review related methods for sampling from unnormalized multimodal distributions.
Since most effective approaches rely on constructing a bridge between a tractable reference distribution and the target, we organize prior work according to the mechanism used to define this bridge.

\subsection{Annealing-based Methods}
\label{subsec:related_work_annealing}

Annealing defines a sequence of intermediate densities that gradually transport samples from a reference to a target, see \autoref{eq:geometric_bridge}.
Annealing-based methods differ in how they navigate this sequence.
PT runs multiple Markov chains in parallel, periodically proposing swaps between adjacent chains to facilitate information exchange.
In contrast, Annealed Importance Sampling (AIS, \cite{neal2001annealed}) and Sequential Monte Carlo (SMC, \cite{del2006sequential}) propagate a population of weighted particles sequentially, with SMC additionally employing resampling to prevent mode collapse.

The performance of annealing methods is highly sensitive to the schedule and number of distributions, and can suffer from mass teleportation, where probability mass abruptly shifts between disjoint modes \cite{woodard2009sufficient}. Optimizing the schedule and increasing its density can mitigate these issues, but performance still sharply deteriorates when adjacent distributions share little overlap \citep{syed2024optimised}. For challenging problems with high discrepancy between $\nuref$ and $\nu$, the number of interpolating distributions required for stable sampling can be prohibitively expensive.

While CDS also leverages PT for initialization, it targets a conditional distribution $\nu_{t_0 \mid z}$ that concentrates near the reference for near-zero $t_0$, ensuring substantial overlap between successive distributions. This makes it easier to propagate samples from $\nuref$ to $\nu_{t_0 \mid z}$ than directly to $\nu$.

\subsection{Diffusion-based Methods}
\label{subsec:related_work_diffusion}

% Recently, generative modelling has popularized another way of bridging distributions through diffusion processes.  

Neural samplers amortize sampling by training a neural network using both samples (from the model or previous iterations) and the target density. As we argue in Sec. 4.2, this training is computationally expensive. 

\textbf{Diffusion-based Neural Samplers.}
Neural Samplers amortize sampling by training neural networks to generate samples from a target distribution \cite{arbel2021annealed,midgley2023flow}.
Motivated by diffusion models, several works adapt these formulations to design new Neural Samplers \cite{nusken2024transport,he2025training,havens2025adjoint,akhound2025progressive,albergo2025nets,rissanen2025progressive}.
\begin{changes}
These methods are trained using both target density evaluations and samples, either produced by earlier model iterations or generated via MCMC. 
While training is computationally expensive, inference requires only a single forward pass with no additional density evaluations. 
Therefore, Neural Samplers operate in a different but complementary regime from MCMC methods.
\end{changes}

% However, the lack of ground-truth samples makes training difficult and often far less efficient than standard MCMC.
% Indeed, \citet{he2025no,rissanen2025progressive} show that running PT and fitting a diffusion model to its samples outperforms end-to-end training in both sample quality and computational cost.
% In contrast, CDS requires no neural training, as it relies on a transport SDE available in closed form.

\textbf{Non-amortized Methods.}
This line of work aims to exploit diffusion-based ideas without resorting to neural network training.
Diffusive Gibbs Sampling (DiGS, \cite{chen2024diffusive}) introduces Gaussian convolutions to define a noisy auxiliary distribution that bridges isolated modes.
However, it relies on a Metropolis-within-Gibbs procedure that scales poorly to high dimensions.
Reverse Diffusion Monte Carlo (RDMC, \cite{huang2024reverse}) expresses the score of the marginal distribution in terms of expectations under the denoising posterior.
To approximate this, it employs a nested MCMC procedure in which multiple samples from the denoising posterior are drawn at each iteration.
Consequently, several density evaluations are required per iteration.

Both DiGS and RDMC rely on marginal probability paths with intractable scores. 
In contrast, the proposed CDS follows a conditional probability path whose score admits a closed-form expression, enabling efficient sampling without approximations.

\section{Experiments}\label{sec:experiments}
In this section, we evaluate CDS with a linear interpolant on eight target distributions across four diverse tasks.
We study the impact of the initial time \(t_0\) on PT communication efficiency in \autoref{subsec:impact_t0} and compare CDS with state-of-the-art samplers in \autoref{subsec:comparing_sota}.
Experimental details are provided in \autoref{app:experimental_setup}, with additional experiments in \autoref{app:additional_experiments}\footnote{The code is available at \href{https://github.com/Franblueee/conditional_diffusion_sampling}{\nolinkurl{github.com/Franblueee/conditional_diffusion_sampling}}.}

\begin{table}[t]
\caption{Sampling tasks considered in this work. See \autoref{app:sampling_tasks} for more details.}
\label{tab:tasks_summary}
\centering % Used instead of \begin{center} to avoid extra vertical space
\small
% \scshape % Replaces the deprecated \begin{sc}
    % \linewidth sets the table to fit the column.
    % 'X' tells the first column to fill available space and wrap text.
    \begin{tabularx}{1.0\columnwidth}{@{} X c c @{}}
        \toprule
        Task name & Target dist. & Dim. ($D$) \\ \midrule
        \multirow{4}{=}{Gaussian Mixture (GM)} & GM-2 & 2\\
         & GMNU-2 & 2 \\
         & GM-16 & 16 \\
         & GMNU-16 & 16 \\ \midrule
        \multirow{2}{=}{Lennard-Jones (LJ)} & LJ-13 & $3 \times 13 = 39$\\
         & LJ-55 & $3 \times 55 = 165$ \\ \midrule
        Alanine Dipeptide (ALDP) & ALDP & $3 \times 22 = 66$ \\ \midrule
        Bayesian Neural Network (BNN) & BNN & $550$ \\ \bottomrule
    \end{tabularx}
    \vspace{-5mm}
\end{table}

% \begin{figure*}[ht]
% \centering
% \includegraphics[width=0.98\textwidth]{figures/rtc_perf_vs_time_alltasks_4cols_v2.pdf}
% \caption{
% \textbf{Round Trips (RTs, higher is better) and sample error (lower is better) as a function of $t_0$.}
% Decreasing $t_0$ from $1.0$ generally increases RT counts and reduces sample error, indicating improved mixing and sample quality.
% }
% \label{fig:rtc}
% \vspace{-2mm}
% \end{figure*}

\textbf{Sampling Tasks.}
The tasks considered for evaluation are summarized in \autoref{tab:tasks_summary}.
In total, we consider eight target distributions across four tasks spanning synthetic benchmarks, physical systems, molecular dynamics, and high-dimensional Bayesian inference problems. 
These tasks cover multimodality, complex energy landscapes, and structured posteriors in a wide range of dimensions. 
Notably, for the ALDP and BNN targets, density and score evaluations are computationally expensive, making them particularly interesting for assessing sampling efficiency.
Full details are provided in \autoref{app:sampling_tasks}.

\begin{changes}
\textbf{Sample Quality Metrics.}
We evaluate sample quality using standard task-specific metrics (lower is better for all): the Wasserstein-2 ($W_2$) distance for the GM and LJ tasks; the Kullback-Leibler (KL) divergence between Ramachandran plots for ALDP; and the test negative log-likelihood (NLL) for the BNN task.
Additional metrics and implementation details are provided in \autoref{app:metrics} and \ref{app:additional_experiments}.
\end{changes}

% We evaluate our method on a diverse set of eight target distributions across four tasks.
% These tasks span synthetic benchmarks, physical systems, and high-dimensional Bayesian inference problems. 
% Specifically, we consider multimodal Gaussian mixtures (GM), particle systems governed by Lennard--Jones (LJ) potentials, molecular dynamics for Alanine Dipeptide (ALDP), and posterior sampling in a Bayesian Neural Network (BNN), with dimensionalities ranging from $D=2$ to $D=550$. 
% Together, these tasks probe complementary challenges such as severe multimodality, complex energy landscapes, and high-dimensional structured posteriors. 
% Notably, for the ALDP and BNN targets, density and score evaluations are computationally expensive, making them particularly interesting for assessing sampling efficiency.
% Full task specifications and experimental details are provided in \autoref{app:sampling_tasks}.

\subsection{Impact of Initial Time $t_0$ in Communication and Sample Quality}
\label{subsec:impact_t0}

\begin{changes}
We analyze the communication efficiency of PT when targeting the conditional distribution $\nu_{t \mid z}$, and how it impacts sample quality.
Recall that CDS begins by sampling $\nu_{t_0 \mid z}$ for a chosen $t_0 > 0$.
As $t \to 0$, $\nu_{t \mid z}$ concentrates toward a Dirac delta at $z$ (drawn from the reference distribution), increasing overlap with the reference and potentially improving communication.
We measure efficiency using Round Trips (RTs), which is the number of traversals between reference and target distributions. Higher RTs indicate better mixing \cite{syed2022non}. We report the Global Communication Barrier and provide further details in \autoref{app:additional_experiments}.

As shown in \autoref{fig:rtc}, decreasing $t_0$ from $1.0$ consistently increases RTs across tasks, confirming that sampling from $\nu_{t_0 \mid z}$ is more efficient than sampling $\nu$ directly, and leading to improved sample quality. 
However, as $t_0 \to 0$, both efficiency and quality deteriorate as the distribution becomes excessively concentrated, which reduces overlap between replicas. 
As illustrated in \autoref{fig:gm2_evolution}, this suggests an optimal range for $t_0$: small enough to enhance overlap with the reference, yet large enough to avoid degeneracy.
A strategy for selecting $t_0$ is discussed in \autoref{app:optimizing_initial_time}.
\end{changes}

\begin{figure}[]
\centering
\includegraphics[width=0.9\columnwidth]{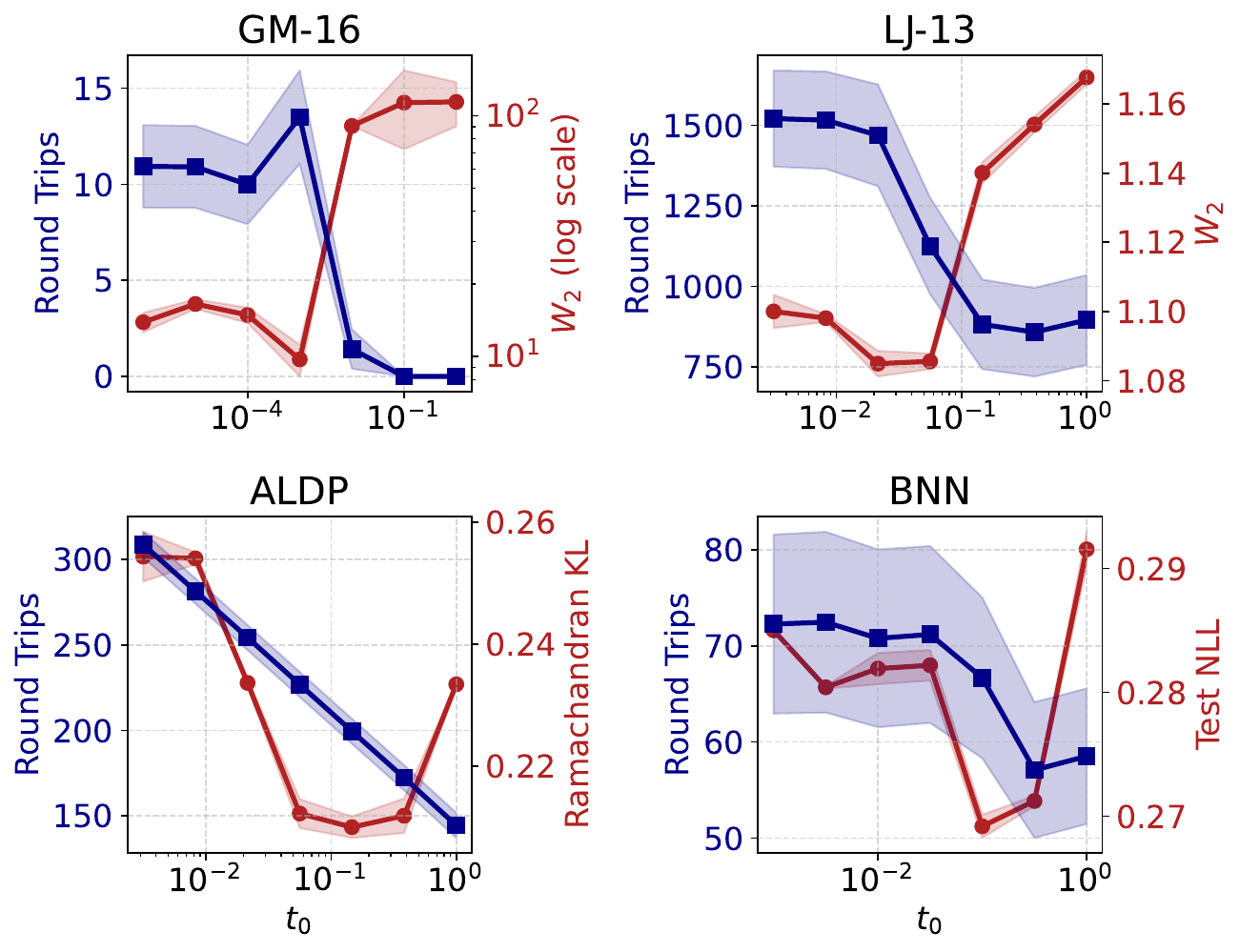}
\caption{
\textbf{ \textcolor{darkblue}{Round Trips} (RTs, higher is better) and \textcolor{firebrick}{sampling error} (lower is better) as a function of $t_0$.}
Decreasing $t_0$ from $1.0$ generally increases RT counts and reduces sampling error, indicating improved mixing and sample quality.
}
\label{fig:rtc}
\vspace{-5mm}
\end{figure}

\subsection{Analysis of the Transport Mechanism}
\label{subsec:sde_vs_invtrasnf}

\begin{figure}[]
\centering
\includegraphics[width=0.85\columnwidth]{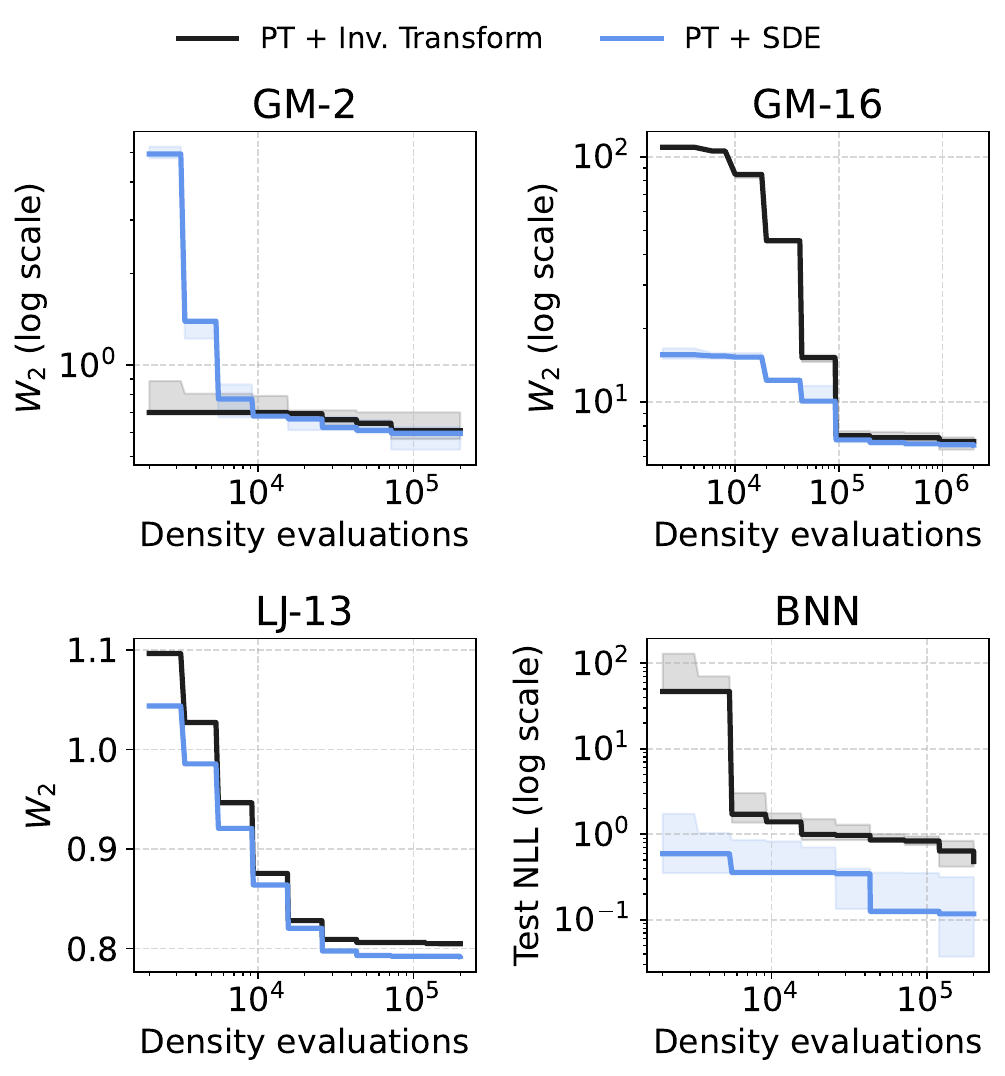}
\caption{
\textbf{Comparison between SDE-based transport and the inverse interpolation map.} We evaluate the effect of two strategies for transporting samples from $\nu_{t_0 \mid z}$ back to the target distribution $\nu$. The SDE-based approach consistently achieves better performance across tasks.
}
\label{fig:pareto_inv_transf}
\vspace{-5mm}
\end{figure}

\begin{changes}
We study the impact of the SDE integration in CDS's Stage 2. 
After sampling from $\nu_{t_0 \mid z}$, samples must be transported to the target $\nu$. 
In our method, this is done by integrating the interpolation SDE \cref{eq:interpolation_sde} from $t_0$ to $1$.

Alternatively, one can apply the inverse interpolation map $F_{t_0 \mid z}^{-1}$ to directly map samples to $\nu$. 
If $\nu_{t_0 \mid z}$ were sampled perfectly, this transformation would recover perfect samples from $\nu$. 
In practice, however, approximation errors are amplified by the inverse map.
In contrast, the SDE dynamics provide an advantage: they continuously refine the samples during transport, improving mixing and progressively correcting deviations from the target distribution.

We empirically compare CDS with this baseline, where PT is used to sample from $\nu_{t_0 \mid z}$ followed by $F_{t_0 \mid z}^{-1}$. Results in \autoref{fig:pareto_inv_transf} show that SDE-based transport consistently outperforms the inverse mapping. 
The latter performs slightly better on GM-2 under small budgets, due to a trade-off: fewer SDE steps leave room for more exploration during the PT phase.
This advantage disappears in more complex target distributions, where the corrective effect of the SDE becomes important.
\end{changes}

\subsection{Comparison with Other Sampling Methods}\label{subsec:comparing_sota}

\begin{figure*}[ht]
    \centering
    % --- Table Part ---
    % \captionof allows you to create a Table caption inside a figure environment
    \captionof{table}{
    \textbf{Aggregated Mean Hypervolume Ratio (HVR) across sampling tasks.} Higher HVR values $(\uparrow)$ denote superior performance in terms of both convergence and coverage of the optimal Pareto front \cite{zitzler2003performance}. See \autoref{app:metrics} for details.
    }
    \label{tab:hvr_results}
% \textcolor{red}{\rule{0.6cm}{0.2cm}}
    
    \resizebox{0.85\textwidth}{!}{%
        \begin{tabular}{lcccccc}
            \toprule
            Method &  CDS (ours) & NRPT & OASMC & DiGS & HMC & MALA \\
            \midrule
            HVR $(\uparrow)$ & $\mathbf{0.9976 \pm 0.0015}$ & $0.9827 \pm 0.0083$ & $0.9287 \pm 0.0277$ & $0.5464 \pm 0.1550$ & $0.6263 \pm 0.1261$ & $0.5241 \pm 0.1494$ \\
            \bottomrule
        \end{tabular}%
    }

    % % --- Figure Part ---
    \begin{subfigure}[c]{0.8\textwidth}
        \centering
        \includegraphics[width=1.0\linewidth]{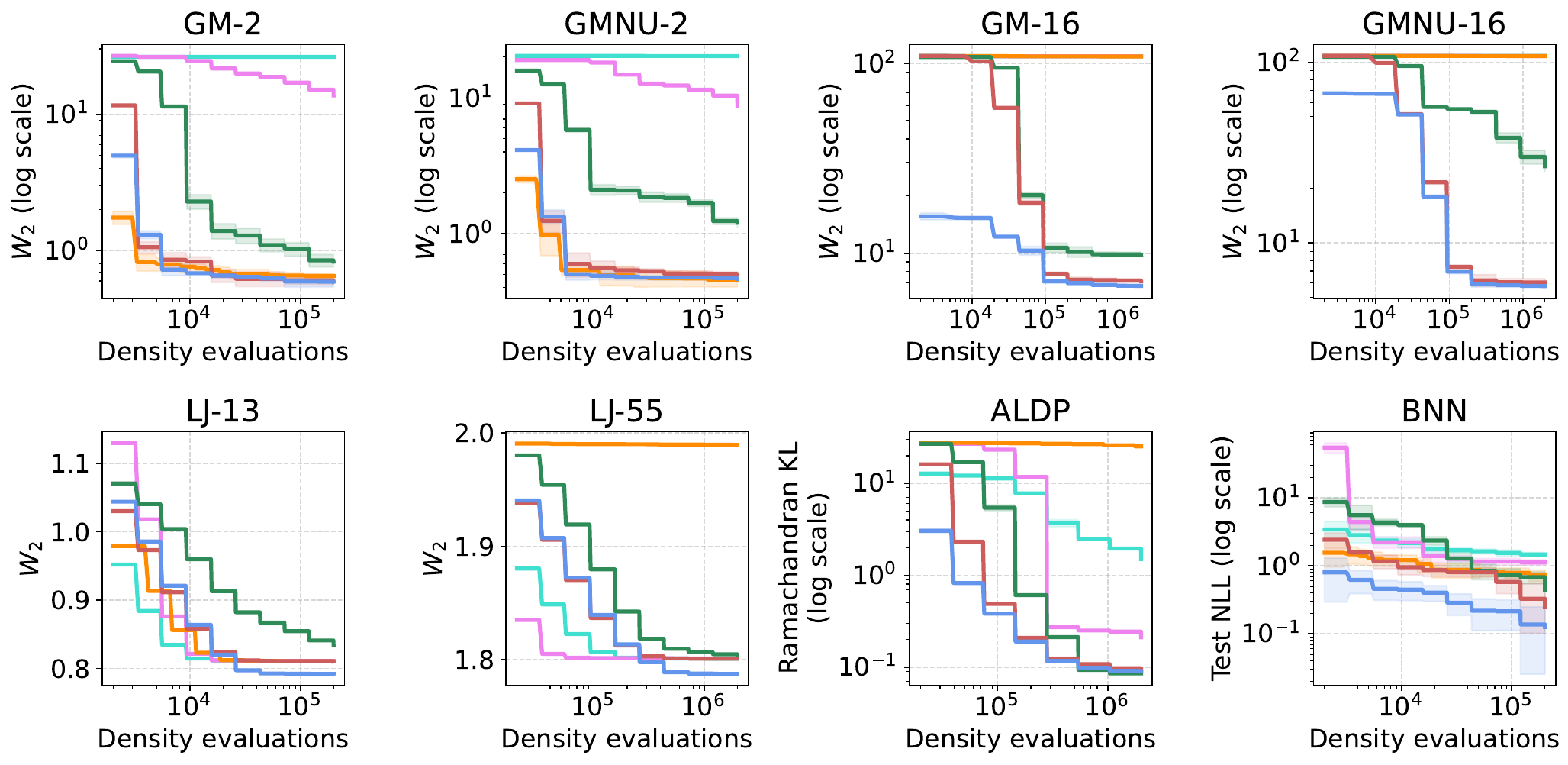}
    \end{subfigure}
    \begin{subfigure}[c]{0.15\textwidth}
        \centering
        \includegraphics[width=1.0\linewidth]{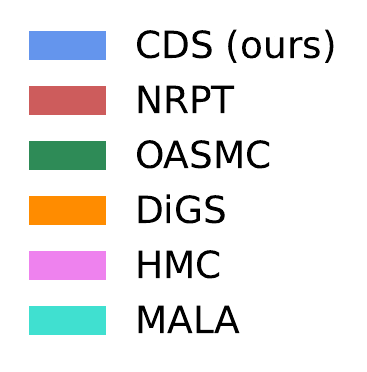}
    \end{subfigure}

    \captionof{figure}{
    \textbf{Pareto fronts for sampling performance across eight target distributions.}
    The proposed CDS method achieves competitive or superior performance compared to state-of-the-art samplers, demonstrating higher efficiency by requiring fewer density evaluations for the same level of accuracy.
    }
    \label{fig:pareto_summary}
    \vspace{-6mm}
\end{figure*}

We benchmark CDS against five state-of-the-art MCMC methods, analyzing the trade-off between sample quality and total density evaluations.
Full experimental details are provided in \autoref{app:experimental_setup}.

\textbf{Baselines.}
We select three recent state-of-the-art methods: Non-Reversible PT (NRPT)~\cite{syed2022non}, Optimized Annealed SMC (OASMC)~\cite{syed2024optimised}, and Diffusive Gibbs Sampling (DiGS)~\cite{chen2024diffusive}.
We also include two MCMC baselines: Hamiltonian Monte Carlo (HMC) and Metropolis--Adjusted Langevin Algorithm (MALA).
We exclude RDMC from this comparison as it proved computationally intractable for the tasks considered, consistent with findings in \citet{chen2024diffusive}.
\begin{changes}
    In \autoref{app:additional_baselines}, we include a comparison against No-U-Turn Sampler (NUTS, \citet{hoffman2014no}), Stein Variational Gradient Descent (SVGD, \citet{liu2016stein}), and the Metropolis Adjusted Microcanonical Sampler (MAMS, \citet{robnik2025metropolis}).
    As discussed in \cref{subsec:related_work_diffusion}, we exclude Neural Samplers as they operate in a different regime. 
\end{changes}
% Neural Samplers are also omitted, as recent work \citep{rissanen2025progressive} shows they are generally outperformed by Parallel Tempering methods in both sample quality and computational cost.

\textbf{Evaluation Protocol.}
Sampling performance is assessed via average Pareto fronts, capturing the trade-off between sample quality and computational cost. 
In addition to task-specific quality metrics, we report the Mean Hypervolume Ratio (HVR) \cite{zitzler2003performance}, computed on normalized Pareto fronts to enable comparability across objectives with different scales. 
Further details are provided in \autoref{app:metrics}.

\textbf{Results Overview.}
Overall results are summarized in Table~\ref{tab:hvr_results}.
CDS, with a mean HVR of $0.9863$, significantly outperforms the standard MCMC baselines (HMC, MALA) and the strongest competitor, NRPT.
Thus, it achieves the best trade-offs between cost and sampling accuracy.

\textbf{Per-task Analysis.}
Representative trade-off curves are shown in \autoref{fig:pareto_summary}, with additional results reported in \autoref{app:additional_experiments}.
On the GM task, the proposed method exhibits stable performance across dimensions.
While CDS and DiGS outperform all samplers in low-dimensional settings (GM-2 and GMNU-2), DiGS performance deteriorates in the 16-dimensional cases (GM-16 and GMNU-16), likely due to the limited efficiency of its Metropolis-within-Gibbs mechanism. 
In contrast, CDS maintains its efficiency and achieves the best trade-offs in these medium-dimensional regimes. 

CDS remains highly competitive on physical system benchmarks.
On LJ, MALA and HMC achieve the best Pareto fronts (as expected, since local samplers suffice to capture the target distribution in this task).
CDS performs on par with NRPT and even surpasses it in the high-density evaluation budget regime.
In contrast, performance on ALDP reveals a clear separation: MALA and HMC degrade severely, while CDS rivals NRPT for the best results, followed by OASMC and DiGS.
Notably, CDS remains robust despite the unfavorable interaction between the LJ potential (also present in ALDP) and the linear interpolant, which drives interparticle distances toward zero as $t \to 0$.
Even under these adverse conditions, which are induced by the interpolant choice, CDS avoids catastrophic failure and performs remarkably well.

In the highest-dimensional BNN task ($D=550$), CDS significantly outperforms all samplers, demonstrating superior capability in exploring complex, multimodal posteriors.

\textbf{Qualitative Analysis.}
We analyze how quantitative performance differences translate into qualitative sample fidelity.
\autoref{fig:gm_samples} visualizes samples from the GM task using $2 \cdot 10^3$ density evaluations. Both CDS and DiGS recover all modes with correct proportions, whereas competing methods fail to discover all of them. Energy histograms for the LJ systems (\autoref{fig:histograms_lj13_lj55_reduced}) indicate that CDS matches the target energy distribution most accurately, closely followed by NRPT. For the ALDP task, the Ramachandran histograms (\autoref{fig:aldp_ramachandran}) show that only CDS and NRPT successfully recover the two modes with accurate density allocations, though NRPT captures the central isolated mode more effectively.

\begin{figure}[]
\centering
\includegraphics[width=0.95\columnwidth]{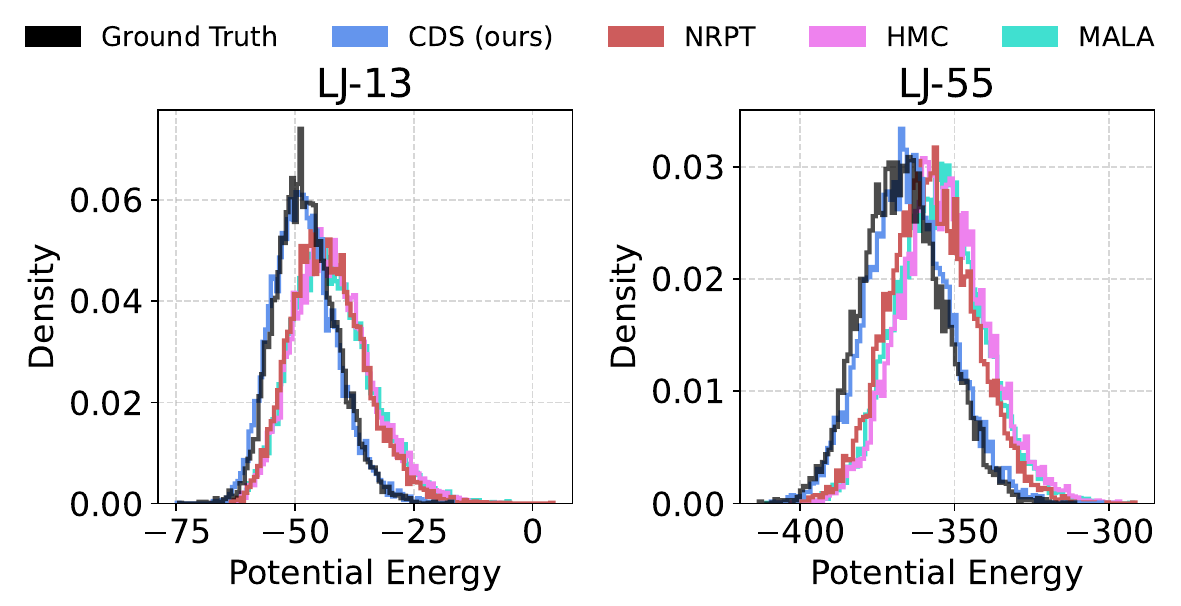}
\caption{
\textbf{Comparison of Lennard-Jones (LJ) potential energy histograms.} For clarity, results for the remaining methods are provided in \autoref{fig:histograms_lj13_lj55_all}. All samplers use a fixed budget of $2 \cdot 10^4$ and $2 \cdot 10^5$ density evaluations for the LJ-13 and LJ-55 targets, respectively.
CDS provides the most accurate match to the ground truth histograms, closely followed by NRPT.
}
\label{fig:histograms_lj13_lj55_reduced}
\vspace{-5mm}
\end{figure}

\section{Conclusions}
\label{sec:conclusion}

In this work, we introduced Conditional Diffusion Sampling (CDS), a training-free framework that combines the robust global exploration of Parallel Tempering (PT) with the efficient local transport of diffusion processes. 
By deriving Conditional Interpolants, we obtained exact, closed-form SDEs that transport samples from an accessible initialization distribution to the target.
Both theoretical and empirical results show that the initialization cost vanishes at short diffusion times.
Our extensive evaluation on high-dimensional benchmarks demonstrates that CDS achieves a superior trade-off between sample quality and computational cost.

\textbf{Limitations and Future Work.}
While CDS demonstrates robust performance across varying dimensionalities, our experimental analysis highlights the following two limitations.
First, our results in the LJ and ALDP tasks suggest that the choice of interpolant is critical for densities with singularities, as it may force trajectories through numerical instabilities in high-energy regions. 
Second, we observed that the choice of the initialization time $t_0$ involves a delicate trade-off: excessively small values cause the initialization distribution $\nu_{t_0 \mid z}$ to become extremely peaked, hindering communication efficiency in the PT stage.

Crucially, these limitations stem primarily from the specific use of the linear interpolant rather than the CDS framework itself.
This opens a promising research direction: designing better interpolants that take into account the underlying geometry of the target density to further improve performance.

\section*{Acknowledgements}

FMCM acknowledges contract FPU21/01874 (Ministerio de Universidades).
PMA acknowledges grant C-EXP-153-UGR23 (Consejería de Universidad, Investigación e Innovación and European Union ERDF Andalusia Program 2021-2027).
SS acknowledges support from NSERC Discover Grant RGPIN-2026-07176.
DHL acknowledges support from project PID2022-139856NB-I00 (MCIN/AEI/10.13039/501100011033 and FEDER, UE), project IDEA-CM (TEC-2024/COM-89, Comunidad de Madrid), and the ELLIS Unit Madrid.
JMHL acknowledges support from EPSRC funding under grant EP/Y028805/1.
RMS, PMA, FMCM acknowledges support from project PID2022-140189OB-C22 (MCIN/AEI/10.13039/501100011033 and FEDER, UE). 

\section*{Impact Statement}
This paper presents work whose goal is to advance the field of Machine Learning. There are many potential societal consequences of our work, none of which we feel must be specifically highlighted here.

\bibliography{references}
\bibliographystyle{icml2026}

%%%%%%%%%%%%%%%%%%%%%%%%%%%%%%%%%%%%%%%%%%%%%%%%%%%%%%%%%%%%%%%%%%%%%%%%%%%%%%%
%%%%%%%%%%%%%%%%%%%%%%%%%%%%%%%%%%%%%%%%%%%%%%%%%%%%%%%%%%%%%%%%%%%%%%%%%%%%%%%
% APPENDIX
%%%%%%%%%%%%%%%%%%%%%%%%%%%%%%%%%%%%%%%%%%%%%%%%%%%%%%%%%%%%%%%%%%%%%%%%%%%%%%%
%%%%%%%%%%%%%%%%%%%%%%%%%%%%%%%%%%%%%%%%%%%%%%%%%%%%%%%%%%%%%%%%%%%%%%%%%%%%%%%
\newpage
\appendix
\onecolumn

\section{Extended Background: Parallel Tempering}\label{app:parallel-tempering}

In this section, we describe Parallel Tempering (PT), a Markov chain Monte Carlo (MCMC) method designed to sample from a complex target distribution $\nu$.
We follow the exposition by \citet{syed2022non}. Let $\nuref$ be a reference distribution from which independent and identically distributed (i.i.d.) sampling is tractable. 
In the following, we assume $\nu$ and $\nuref$ have densities $\pi$ and $\piref$, respectively.

PT constructs a sequence of $N+1$ distributions interpolating between the reference $\piref$ and the target $\pi$. This is governed by an annealing schedule $\mathcal{B}_{N} = \left\{ 0 = \beta_{0} < \beta_{1} < \dots < \beta_{N} = 1 \right\}$. The distribution for the $n$-th chain, $\pi_{\beta_{n}}$, is typically defined as:
\begin{equation}
    \pi_{\beta}(x) \propto \piref(x)^{1-\beta} \pi(x)^{\beta}.
\end{equation}

The algorithm simulates a Markov chain on the expanded state space $\mathcal{X}^{N+1}$ targeting the product distribution:
\begin{equation}
    \Pi(\bx) = \pi_{\beta_{0}}(x^{0}) \pi_{\beta_{1}}(x^{1}) \cdots \pi_{\beta_{N}}(x^{N}),
\end{equation}
where $\bx = \left( x^0, \ldots, x^N\right)$. Each iteration of the PT algorithm consists of two distinct steps: local exploration and communication.

\textbf{Local exploration.}
A $\pi_{\beta_n}$-invariant Markov kernel $K_{\beta_{n}}$ (e.g., MALA or HMC) is applied to update the state of each chain independently. 
See \autoref{app:theory_markov_kernels} for an introduction to Markov kernels.

\textbf{Communication.}
Swaps between the states of adjacent chains are proposed. 
They are accepted or rejected according to a Metropolis-Hastings criterion. 
The acceptance probability $\alpha_{n}\left( \bx \right)$ is given by:
\begin{equation}\label{eq:pt_swap_prob}
    \alpha_{n}\left( \bx \right) = \max \left\{ 1, \frac{\pi_{\beta_{n}}(x^{n+1}) \pi_{\beta_{n+1}}(x^{n})}{\pi_{\beta_{n}}(x^{n}) \pi_{\beta_{n+1}}(x^{n+1})} \right \}.
\end{equation}
Efficient communication implies that these swaps are frequently accepted, allowing samples from the tractable $\nuref$ to traverse the path to $\nu$.
Non-reversible PT variants alternate between swaps of odd chains and swaps of even chains. They have proven to outperform their reversible counterparts \cite{syed2022non}. The pseudocode for Non-Reversible PT (NRPT) is shown in \autoref{alg:nrpt}.

\begin{algorithm}[h!]
\caption{Non-Reversible Parallel Tempering (NRPT)}
\label{alg:nrpt}
\begin{algorithmic}[1]
\REQUIRE{
    Number of iterations $K$, annealing schedule $\mathcal{B}_{N}$.
}
\STATE Initialize $\bx_0 = (x_0^0, x_0^1, \dots, x_0^N)$ with $x_0^n \sim \piref$.
\FOR{$k = 1$ to $K$}
    \STATE \COMMENT{Local Exploration}
    \FOR{$n = 0$ to $N$}
        \STATE $x_k^n \leftarrow \text{Sample from } K_{\beta_n}(x_{k-1}^n, \cdot)$.
    \ENDFOR
    
    \STATE \COMMENT{Communication (Swaps)}
    
    \FOR{$n\equiv k \mod 2$}
        \STATE $\alpha_n \leftarrow \alpha_n\left( \bx_k \right)$ \COMMENT{See \autoref{eq:pt_swap_prob}}
        \STATE Draw $u \sim \mathcal{U}(0, 1)$
        \IF{$u < \alpha_n$}
            \STATE $x_k^{n+1}, x_k^{n} \leftarrow x_k^{n}, x_k^{n+1}$
        \ENDIF
    \ENDFOR
\ENDFOR
\RETURN{$\left( \bx_1, \ldots, \bx_K \right)$}
\end{algorithmic}
\end{algorithm}

\textbf{Round Trips.}
A round trip is defined as the event where a specific particle, initialized at the reference distribution (index $0$), traverses up to the target distribution (index $N$) and returns to the reference.
The round trip rate $\tau_N$ is the frequency of round trips over $K$ iterations. A higher round trip rate indicates that the chains are exchanging information efficiently and that the target distribution is being effectively explored by samples originating from the reference.

\textbf{Global Communication Barrier (GCB).}
The performance of PT can be analyzed in the asymptotic limit where the number of chains $N \to \infty$. 
In this regime, the efficiency is characterized by the Global Communication Barrier (GCB), denoted $\Lambda(\nuref, \nu)$.
It is defined as the accumulated local communication barriers across the path:
\begin{equation}
    \Lambda(\nuref, \nu) = \frac{1}{2} \int_{0}^{1} \lambda\left( \beta \right) \, \dd \beta, \quad \lambda\left( \beta \right) = \int \left | l(x) - l(y) \right | \pi_{\beta}(x) \pi_{\beta}(y) \, \dd x \, \dd y,
\end{equation}
where $l(\cdot)$ is the log-likelihood ratio $l(\cdot) = \log \pi(\cdot) - \log \piref(\cdot)$.
The asymptotic round trip rate converges to a value determined by the GCB:
\begin{equation}
    \lim_{N \to \infty} \tau_N = \frac{1}{2 + 2\Lambda(\nuref, \nu)}.
\end{equation}
Consequently, a lower GCB indicates higher communication efficiency and a faster restart rate.
From \cite{surjanovic2022parallel}, the GCB is upper bounded by the square root of the symmetric KL divergence:
\begin{equation}
    \Lambda(\nuref, \nu) \leq \sqrt{\frac{1}{2} \SKL \left( \nuref, \nu \right) },
\end{equation}
where $\SKL \left( \nuref, \nu \right) = \KL \left( \nuref, \nu \right) + \KL \left( \nu, \nuref \right)$.

\section{Conditional Flows}
\label{app:conditional_flows}

\textbf{Flows.}
When the diffusion coefficient in \autoref{eq:general_sde} vanishes, $\sigma_t = 0$, the SDE in \autoref{eq:general_sde} reduces to the ordinary differential equation (ODE):
\begin{equation}\label{eq:general_ode}
\dd x_t = u_t(x_t) \, \dd t,
\end{equation}
where $(t,x) \mapsto a_t(x) \in \Rbb^{D}$ is the velocity vector field.
In this case, the FPK equation \autoref{eq:fokker-planck} degenerates into the continuity equation
\begin{equation}
\frac{\partial}{\partial t} p_t
= - \operatorname{div}\left(p_t u_t\right),
\end{equation}
which characterizes the evolution of densities transported by the deterministic flow induced by $u_t$.
As in diffusions, one can choose the velocity field $u_t$ such that $p_0 = \piref$, and $p_1 = \pi$ \cite{lipman2023flow}. Thus, the ODE \autoref{eq:general_ode} defines a continuous probabilistic bridge between $\piref$ and $\pi$.

\textbf{The Conditional ODE.}
For a reference variable $z \sim \nuref$, the interpolant defined in \autoref{eq:interpolant_conditional} describes a time-dependent probability path. 
This path can be characterized by an ODE associated with the following velocity field:
\begin{gather}\label{eq:interpolation_ode}
    \dd x_t = u_{t \mid z} (x_t) \, \dd t, \\
    u_{t \mid z}(x) = \frac{\partial F_{t \mid z}}{\partial t} \left( F_{t \mid z}^{-1} (x) \right).
\end{gather}
This vector field is well-defined for $t \in (0, 1]$. 
By construction, if $\{ x_t \}$ solves this ODE, its marginal density at time $t$ is exactly $\pi_{t \mid z}$, as the density path satisfies the associated continuity equation, see \autoref{lemma:continuity_fpk}.

\begin{boxedexample}
    For the linear interpolant introduced in \autoref{eq:linear_interpolant}, the vector field takes the simple form:
    \begin{equation}
        u_{t \mid z}(x) = \frac{x - z}{t}.
    \end{equation}
\end{boxedexample}

The boundary conditions defining the interpolant make it impossible for $F_{t \mid z}$ to be invertible at $t=0$.
As a consequence, the vector field $u_{t \mid z}$ generally exhibits a singularity at $t=0$, causing solutions to diverge as $t \to 0$. 
This mirrors the boundary singularities observed in diffusion models and flow matching, where vanishing noise variance leads to unbounded vector fields or scores \cite{song2021score, lipman2023flow}.
Consequently, numerical integration cannot proceed from $t=0$. Instead, one must initialize the process at a initial time $t_0 > 0$ with a sample $x_{t_0} \sim \nu_{t_0 \mid z}$. 
Crucially, for the linear interpolant, the trajectory for $t > t_0$ is entirely deterministic given $x_{t_0}$ and $z$, as it is given by the integration of the velocity field in \autoref{eq:general_ode}. 
This reveals a limitation of the ODE formulation: the target distribution $\nu$ influences the process \emph{only} through the initial sample $x_{t_0}$. Once initialized, the dynamics do not utilize the target information to refine the trajectory.

\section{Optimizing the Initial Time $t_0$}
\label{app:optimizing_initial_time}

A critical design choice within the proposed CDS framework is the selection of the initial time instant, $t_0$, used to sample from $\nu_{t_0 \mid z}$ and initialize the integration. In this section, we propose an optimization-based approach to determine this value.

We propose to minimize the Symmetric Kullback-Leibler (SKL) divergence between the reference distribution $\nuref$ and the target $\nu_{t \mid z}$ with respect to $t$. The SKL is defined as:
\begin{equation}
    \operatorname{SKL}\left( \nuref, \nu_{t \mid z} \right) = \operatorname{KL}\left( \nuref, \nu_{t \mid z} \right) + \operatorname{KL}\left( \nu_{t \mid z}, \nuref \right),
\end{equation}
where $\operatorname{KL}(\cdot, \cdot)$ denotes the standard Kullback-Leibler divergence. Consequently, we define the optimal start time $t_0$ as:
\begin{equation}
    t_0 = \arg \min_{t} \operatorname{SKL}\left( \nuref, \nu_{t \mid z} \right).
\end{equation}
This criterion is motivated by the relationship between the SKL and the Global Communication Barrier (GCB), denoted as $\Lambda(\cdot, \cdot)$. As shown in \cite{surjanovic2022parallel}, the SKL provides an upper bound on the GCB:
\begin{equation}
    \Lambda \left( \nuref, \nu_{t \mid z} \right) \leq \sqrt{ \frac{1}{2} \operatorname{SKL}\left( \nuref, \nu_{t \mid z} \right) }.
\end{equation}
In annealing-based methods such as Parallel Tempering (PT) and Optimized Sequential Monte Carlo (OSMC) \cite{syed2021parallel,syed2024optimised}, the GCB serves as a global measure of transport difficulty along the annealing path. Lower values imply easier communication and higher sampling efficiency.
Since direct minimization of the GCB is often intractable, the SKL serves as an effective and differentiable proxy \cite{surjanovic2022parallel}. 

In CDS, when PT or OSMC is used in the first stage, minimizing this objective amounts to maximizing the communication efficiency of PT or OSMC.
Next, we obtain closed-form expressions for the derivative of SKL with respect to $t$ and propose an online optimization strategy. 

\subsection{Online optimization}

Since the SKL divergence is differentiable with respect to $t$, we can employ stochastic optimization to find the optimal value $t_0$. The gradient with respect to $t$ is given below.

\begin{lemma}
    The derivative of the SKL with respect to $t$ is given by:    
    \begin{equation}
        \frac{\partial}{\partial t} \operatorname{SKL}( \nuref, \nu_{t \mid z}) = \mathbb{E}_{x \sim \nu_{t \mid z}} \left[ \left( \frac{\partial}{\partial t} \log \pi_{t \mid z}(x) \right) \log \frac{\pi_{t \mid z}(x)}{\piref(x)} \right] - \mathbb{E}_{x \sim \nuref} \left[ \frac{\partial}{\partial t} \log \pi_{t \mid z}(x) \right].
    \end{equation}
\end{lemma}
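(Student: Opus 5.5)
The plan is to split the SKL into its two KL terms, differentiate each under the integral sign, and use the score-function identity $\partial_t \pi_{t \mid z} = \pi_{t \mid z}\,\partial_t \log \pi_{t \mid z}$ together with conservation of mass. Write
\begin{equation*}
\operatorname{SKL}(\nuref, \nu_{t \mid z}) = \int \piref(x)\log\frac{\piref(x)}{\pi_{t \mid z}(x)}\,\dd x + \int \pi_{t \mid z}(x)\log\frac{\pi_{t \mid z}(x)}{\piref(x)}\,\dd x .
\end{equation*}
Throughout, I will assume enough regularity of $\pi$ and of $t \mapsto F_{t \mid z}$ on $(0,1]$ to differentiate under the integral sign. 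Concretely, I would ask that $\partial_t \pi_{t \mid z}$ and $\partial_t \pi_{t \mid z}\,\log(\pi_{t \mid z}/\piref)$ be dominated by integrable functions locally uniformly in $t$, and that $\piref\,\partial_t \log \pi_{t \mid z}$ be dominated in the same way. For the linear interpolant, \autoref{eq:linear_density} makes $\pi_{t \mid z}$ explicit, so these are mild tail conditions on $\pi$.

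For the first KL term, $\piref$ does not depend on $t$, so only $-\int \piref \log \pi_{t \mid z}$ varies with $t$. Its derivative is $-\mathbb{E}_{x \sim \nuref}[\partial_t \log \pi_{t \mid z}(x)]$, which is exactly the second term in the statement.

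For the second KL term, the product rule gives
\begin{equation*}
\frac{\partial}{\partial t}\int \pi_{t \mid z}\log\frac{\pi_{t \mid z}}{\piref}
= \int \partial_t \pi_{t \mid z}\,\log\frac{\pi_{t \mid z}}{\piref} + \int \pi_{t \mid z}\,\partial_t \log \pi_{t \mid z}.
\end{equation*}
The last integral equals $\int \partial_t \pi_{t \mid z} = \frac{\dd}{\dd t}\int \pi_{t \mid z} = 0$, because $\pi_{t \mid z}$ is a normalized density for every $t$, being the pushforward of $\nu$ under the diffeomorphism $F_{t \mid z}$. In the first integral I substitute $\partial_t \pi_{t \mid z} = \pi_{t \mid z}\,\partial_t \log \pi_{t \mid z}$, which turns it into $\mathbb{E}_{x \sim \nu_{t \mid z}}[(\partial_t \log \pi_{t \mid z})\log(\pi_{t \mid z}/\piref)]$. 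Adding the two contributions yields the claimed formula.

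The main obstacle is justifying the interchange of derivative and integral, especially in the zero-mass step, since $\pi_{t \mid z}$ has a moving, concentrating support as $t$ varies. A cleaner route for that step, which I would try first, is to avoid differentiating $\int \pi_{t \mid z}$ directly. Instead I would change variables $x = F_{t \mid z}(y)$, so that $\int \pi_{t \mid z}(x)\log(\pi_{t \mid z}(x)/\piref(x))\,\dd x = \mathbb{E}_{y \sim \nu}[\log \pi_{t \mid z}(F_{t \mid z}(y)) - \log \piref(F_{t \mid z}(y))]$, and then differentiate inside a $t$-independent expectation. Checking that this agrees with the formula above uses the continuity equation from \autoref{lemma:continuity_fpk}, so I would state the result under the regularity assumptions and verify them explicitly for the linear interpolant.
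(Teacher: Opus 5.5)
Your proposal is correct and matches the paper's proof: the paper also splits the SKL into the two KL terms, differentiates under the integral sign, eliminates $\int \pi_{t \mid z}\,\partial_t \log \pi_{t \mid z}$ by conservation of mass, and uses that $\piref$ does not depend on $t$ for the other term. The paper states no domination conditions, so your regularity assumptions are a useful addition, and the change-of-variables route is not needed; reconciling it with the stated formula would also require an integration by parts with vanishing boundary terms, not just the continuity equation.
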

\begin{proof}
    Let $J(t) = \operatorname{SKL}(\nuref, \nu_{t \mid z})$. We decompose this into the forward and reverse KL divergences:
    \begin{equation}
        J(t) = \underbrace{\int \pi_{t \mid z}(x) \log \frac{\pi_{t \mid z}(x)}{\piref(x)} \, dx}_{A_t} + \underbrace{\int \piref(x) \log \frac{\piref(x)}{\pi_{t \mid z}(x)} \, dx}_{B_t}.
    \end{equation}
    
    For $A_t$, we differentiate under the integral sign and use the identity $\frac{\partial}{\partial t} \pi_{t \mid z} = \pi_{t \mid z} \frac{\partial}{\partial t} \log \pi_{t \mid z}$:
    \begin{equation}
        \frac{\partial}{\partial t} A_t = \int \pi_{t \mid z}(x) \left( \frac{\partial}{\partial t} \log \pi_{t \mid z}(x) \right) \log \frac{\pi_{t \mid z}(x)}{\piref(x)} \, dx + \underbrace{\int \pi_{t \mid z}(x) \frac{\partial}{\partial t} \log \pi_{t \mid z}(x) \, dx}_{= \frac{\partial}{\partial t} \int \pi_{t \mid z}(x) dx = 0}.
    \end{equation}
    Thus, $\frac{\partial}{\partial t} A_t = \mathbb{E}_{x \sim \nu_{t \mid z}} \left[ \left( \frac{\partial}{\partial t} \log \pi_{t \mid z}(x) \right) \log \frac{\pi_{t \mid z}(x)}{\piref(x)} \right]$.
    
    For $B_t$, since $\piref$ is independent of $t$:
    \begin{equation}
        \frac{\partial}{\partial t} B_t = \int \piref(x) \frac{\partial}{\partial t} \left( \log \piref(x) - \log \pi_{t \mid z}(x) \right) \, dx = - \int \piref(x) \frac{\partial}{\partial t} \log \pi_{t \mid z}(x) \, dx.
    \end{equation}
    Thus, $\frac{\partial}{\partial t} B_t = - \mathbb{E}_{x \sim \nuref} \left[ \frac{\partial}{\partial t} \log \pi_{t \mid z}(x) \right]$.    
    Summing the derivatives of $A_t$ and $B_t$ yields the result.
\end{proof}

\begin{boxedexample}
    In the case of the linear interpolant, the derivative of SKL with respect to $t$ is given by 
    \begin{equation}
        \frac{\partial}{\partial t} \operatorname{SKL}(\nuref, \nu_{t \mid z}) = \mathbb{E}_{x \sim \nu_{t \mid z}} \left[ s_{t \mid z}(x) \log \frac{\pi_{t \mid z}(x)}{\piref(x)} \right] - \mathbb{E}_{x \sim \nuref} \left[ s_{t \mid z}(x) \right].
    \end{equation}
    where $s_{t \mid z}(x) = - \frac{1}{t^2} (x - z)^\top \nabla \log \pi_{t \mid z}\left( x \right) - \frac{d}{t}$.
\end{boxedexample}

In practice, this gradient can be used within an online stochastic gradient descent scheme.
Starting from an initial guess $t_0$, we run the first-stage sampler (e.g., PT or OASMC) and use the generated particles from $\nu_{t_0 \mid z}$ to estimate the expectations and update $t$ iteratively.
This optimization can be performed during the burn-in phase of the initial sampler and continued until the first stage of CDS is completed.
Importantly, this procedure incurs no additional target density evaluations, as all required gradients and density values can be cached for each particle.

\section{Theoretical Results}

\subsection{Conditional Interpolants}
\label{app:theory_conditional_interpolants}

\textbf{Setting.}
Let $F \colon [0, 1] \times \mathcal{X} \times \mathcal{X} \to \mathcal{X}$ be a differentiable map denoted as $F_t(z, x) = F(t, z, x)$, satisfying the boundary conditions $F_0(z, x) = z$ and $F_1(z, x) = x$.
For a fixed $z \in \mathcal{X}$, we define the map $F_{t \mid z} \colon \mathcal{X} \to \mathcal{X}$ as
\begin{equation}
    F_{t \mid z}(x) = F_t(z, x).
\end{equation}
We assume that $F_{t \mid z}$ is a diffeomorphism (i.e., invertible with a differentiable inverse).

Let $\nu$ be a probability measure defined on $\mathcal{X}$. For a fixed $z \in \mathcal{X}$, define the interpolated process:
\begin{equation}
    x_t = F_{t \mid z}(x), \quad x \sim \nu.
\end{equation}
Let $\nu_{t \mid z}$ denote the distribution of the random variable $x_t$ conditioned on $z$. Formally, this is the pushforward measure:
\begin{equation}
    \nu_{t \mid z} = F_{t \mid z} \# \nu.
\end{equation}

\begin{lemma}[Convergence to Dirac mass]\label{lemma:convergence_dirac_mass}
    Assume that $\mathcal{X}$ is a separable normed vector space.
    Assume the following regularity conditions:
    \begin{enumerate}
        \item $\nu$ has a finite first-order moment.
        \item For $\nu$-almost all $x$, $\lim_{t \to 0} \| F_{t \mid z}(x) - z \| = 0$.
        \item There exists an integrable function $g: \mathcal{X} \to \left[ 0, + \infty \right)$ such that for all sufficiently small $t$, $\| F_{t \mid z}(x) - z \| \leq g(x)$.
    \end{enumerate}
    Then, the conditional distribution $\nu_{t \mid z}$ converges to the Dirac mass $\delta_z$ in the Wasserstein-1 distance:
    \begin{equation}
        \lim_{t \to 0} W_1(\delta_z, \nu_{t \mid z}) = 0,
    \end{equation}
\end{lemma}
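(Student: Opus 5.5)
The plan is to use the fact that the Wasserstein-1 distance to a Dirac mass has an explicit closed form, and then finish with dominated convergence. When one of the two measures is $\delta_z$, the only coupling with $\nu_{t \mid z}$ is the product coupling $\delta_z \otimes \nu_{t \mid z}$, because any coupling with first marginal $\delta_z$ must be concentrated on $\{z\} \times \mathcal{X}$. Therefore
\begin{equation*}
    W_1(\delta_z, \nu_{t \mid z}) = \int_{\mathcal{X}} \| y - z \| \, \dd \nu_{t \mid z}(y).
\end{equation*}
Since $\nu_{t \mid z} = F_{t \mid z} \# \nu$, the change of variables formula for pushforward measures rewrites this as
\begin{equation*}
    W_1(\delta_z, \nu_{t \mid z}) = \int_{\mathcal{X}} \| F_{t \mid z}(x) - z \| \, \dd \nu(x).
\end{equation*}
Separability of $\mathcal{X}$ is used only so that the Borel structure and measurability of the norm are standard and $W_1$ is well defined.

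Before taking limits, I will check that $\nu_{t \mid z}$ has a finite first moment for small $t$, so that $W_1$ is finite. This follows from condition 3: $\int \|F_{t \mid z}(x) - z\| \, \dd\nu(x) \le \int g \, \dd\nu < \infty$. The triangle inequality then gives $\int \|y\| \, \dd\nu_{t\mid z}(y) \le \|z\| + \int g \, \dd\nu$. Condition 1 is not strictly needed for this step, but it guarantees that $\nu$ itself lies in the Wasserstein space.

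Now take any sequence $t_n \to 0$. The integrands $h_n(x) = \|F_{t_n \mid z}(x) - z\|$ have three properties:
\begin{itemize}
    \item they are measurable, since $F$ is differentiable and hence continuous in $x$;
    \item they converge to $0$ for $\nu$-almost every $x$, by condition 2;
    \item they are dominated by the integrable function $g$ for all large $n$, by condition 3.
\end{itemize}
The dominated convergence theorem then gives $\int h_n \, \dd\nu \to 0$. Since the sequence $t_n \to 0$ was arbitrary, the limit holds as $t \to 0$.

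The main obstacle is modest: justifying that the coupling with a Dirac mass is unique. This needs one short argument: a coupling $\gamma$ with $\gamma(\{z\}^c \times \mathcal{X}) = 0$ must equal $\delta_z \otimes \nu_{t\mid z}$ on product sets, and hence everywhere. A second, minor point is that "integrable" in condition 3 should be read as $\nu$-integrable. Everything else is a direct application of dominated convergence.
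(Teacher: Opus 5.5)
Your proposal is correct and follows essentially the same route as the paper. You write $W_1(\delta_z, \nu_{t \mid z}) = \int \| F_{t \mid z}(x) - z \| \, \nu(\dd x)$ via the pushforward and conclude by dominated convergence under conditions 2 and 3, and your extra details (uniqueness of the coupling with a Dirac mass, passing to sequences $t_n \to 0$, and noting that condition 1 is never actually used) only make explicit what the paper's short proof leaves implicit.
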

\begin{proof}
    In any normed space, the Wasserstein-1 distance between a Dirac mass $\delta_z$ and an arbitrary probability measure $\mu$ is given exactly by the expected distance to $z$:
    \begin{equation}
        W_1(\delta_z, \mu) = \int_{\mathcal{X}} \| z - y \| \, \mu(\dd y).
    \end{equation}
    Substituting $\mu = \nu_{t \mid z}$ and applying the change of variables formula:
    \begin{equation}
        W_1(\delta_z, \nu_{t \mid z}) = \int_{\mathcal{X}} \| z - F_{t \mid z}(x) \| \, \nu(\dd x).
    \end{equation}
    We analyze the limit of this integral as $t \to 0$. 
    By the first regularity condition, the integrand converges to 0 for almost all $x$. 
    By the second regularity condition, the integrand is bounded by the integrable function $g(x)$. 
    Therefore, by the Dominated Convergence Theorem:
    \begin{equation}
        \lim_{t \to 0} \int_{\mathcal{X}} \| z - F_{t \mid z}(x) \| \, \mu(\dd x) = 0.
    \end{equation}
\end{proof}

\begin{lemma}[Conditional Density]
    Let $\pi$ be the density of $\nu$. Then, the density of $\nu_{t \mid z}$ is given by:
    \begin{equation}
        \pi_{t \mid z}(x) = \left| \det \mathrm{J}F_{t \mid z} \big( F_{t \mid z}^{-1}(x) \big) \right|^{-1} \pi \big( F_{t \mid z}^{-1}(x) \big),
    \end{equation}
    where $\mathrm{J}F_{t \mid z}$ denotes the Jacobian matrix of the map $F_{t \mid z}$.
\end{lemma}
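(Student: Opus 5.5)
The plan is to derive the formula from the defining property of the pushforward, $\nu_{t \mid z} = F_{t \mid z} \# \nu$, using the standard change of variables theorem for diffeomorphisms of $\mathcal{X} \subset \Rbb^D$. Fix $t \in (0,1]$ and $z$, and abbreviate $G = F_{t \mid z}$. By assumption $G$ is a diffeomorphism, so $G^{-1}$ is differentiable. Differentiating $G \circ G^{-1} = \mathrm{id}$ with the chain rule gives $\mathrm{J}G(G^{-1}(x))\,\mathrm{J}G^{-1}(x) = I$. Hence $\mathrm{J}G$ is nonsingular at every point, and $\det \mathrm{J}G^{-1}(x) = \det \mathrm{J}G(G^{-1}(x))^{-1}$.

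Next, take an arbitrary measurable set $A \subset \mathcal{X}$. By definition of the pushforward,
\begin{equation}
\nu_{t \mid z}(A) = \nu(G^{-1}(A)) = \int_{G^{-1}(A)} \pi(y)\,\dd y .
\end{equation}
Substitute $y = G^{-1}(x)$, with $x$ ranging over $A$; this uses the multivariate change of variables theorem for the $C^1$ diffeomorphism $G^{-1}$. The integral becomes
\begin{equation}
\int_A \pi(G^{-1}(x))\,\bigl|\det \mathrm{J}G^{-1}(x)\bigr|\,\dd x .
\end{equation}
Inserting the determinant identity from the first step turns the integrand into the claimed expression. Since the identity holds for every measurable $A$, the integrand is a density of $\nu_{t \mid z}$ with respect to Lebesgue measure. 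By Radon--Nikodym it is unique almost everywhere, which gives the stated formula.

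The only delicate point is the regularity needed for the change of variables theorem. It requires $G^{-1}$ to be a $C^1$ (or at least locally Lipschitz) bijection between open sets. I would add the mild assumption that $\mathcal{X}$ is open and $F_{t \mid z}$ is $C^1$, which is implicit in the word "diffeomorphism". For nonnegative measurable $\pi$, the formula then follows from the theorem via monotone class or simple-function approximation.

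As a sanity check, for the linear interpolant $G(y) = (1-t)z + ty$ the Jacobian is $tI$ with determinant $t^D$. The formula then recovers \autoref{eq:linear_density}.
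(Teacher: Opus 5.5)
Your proposal is correct and follows essentially the same route as the paper: write $\nu_{t \mid z}(A)=\nu(F_{t \mid z}^{-1}(A))$, substitute $y=F_{t \mid z}^{-1}(x)$ via the change-of-variables theorem, apply the inverse-function identity for the Jacobian determinant, and conclude a.e.\ equality because $A$ is arbitrary. Your added remarks on the regularity the change-of-variables theorem needs, and on existence and a.e.\ uniqueness of the density, are minor refinements of the same argument.
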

\begin{proof}
    Let $A \subseteq \mathcal{X}$ be an arbitrary measurable set. Let $\pi_{t \mid z}$ be the density of $\nu_{t \mid z}$. By the definition of the pushforward measure, we relate the integrals over the densities as follows:
    \begin{equation}
        \int_A \pi_{t \mid z}(x) \, \mathrm{d}x = \nu_{t \mid z}(A) = \nu \left( F_{t \mid z}^{-1}(A) \right) = \int_{F_{t \mid z}^{-1}(A)} \pi(y) \, \mathrm{d}y.
    \end{equation}
    We apply the change of variables formula to the integral on the right-hand side. Let $x = F_{t \mid z}(y)$.  
    By the Inverse Function Theorem, it holds:
    \begin{equation}
        \left| \det \mathrm{J} (F_{t \mid z}^{-1})(x) \right| = \left| \det \left( \mathrm{J} F_{t \mid z} (F_{t \mid z}^{-1}(x)) \right)^{-1} \right| = \left| \det \mathrm{J} F_{t \mid z} (F_{t \mid z}^{-1}(x)) \right|^{-1}.
    \end{equation}
    Substituting this back into the integral and applying the change of variables formula, we obtain:
    \begin{equation}
         \int_A \pi_{t \mid z}(x) \, \mathrm{d}x = \int_{F_{t \mid z}^{-1}(A)} \pi(y) \, \mathrm{d}y = \int_A \pi \left( F_{t \mid z}^{-1}(x) \right) \left| \det \mathrm{J} F_{t \mid z} \left( F_{t \mid z}^{-1}(x) \right) \right|^{-1} \, \mathrm{d}x.
    \end{equation}
    Since this equality holds for any measurable set $A$, the integrands must be equal almost everywhere. Therefore:
    \begin{equation}
        \pi_{t \mid z}(x) = \pi \left( F_{t \mid z}^{-1}(x) \right) \left| \det \mathrm{J} F_{t \mid z} \left( F_{t \mid z}^{-1}(x) \right) \right|^{-1}.
    \end{equation}
\end{proof}

\begin{lemma}[Conditional Score]\label{lemma:app_score_conditional_interpolant}
    Let $\pi$ be the density of $\nu$, and assume it to be differentiable and strictly positive. Then, $\pi_{t \mid z}$ is differentiable and its score is given by:
    \begin{equation}\label{eq:score_cond_interpolant}
        \nabla_x \log \pi_{t \mid z}(x) = \left[ \mathrm{J}F_{t \mid z}\left( F_{t \mid z}^{-1}(x) \right) \right]^{-\top} \nabla \log \pi \left( F_{t \mid z}^{-1}(x) \right) - \nabla_x \log \left| \det \mathrm{J}F_{t \mid z}(x) \right|.
    \end{equation}
\end{lemma}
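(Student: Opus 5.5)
The plan is to take logarithms in the conditional density formula of the preceding lemma (Conditional Density) and differentiate with the chain rule. Write $G_{t \mid z} = F_{t \mid z}^{-1}$. Because $\pi$ is strictly positive and $\left| \det \mathrm{J}F_{t \mid z} \right|$ never vanishes ($F_{t \mid z}$ is a diffeomorphism), $\pi_{t \mid z}$ is strictly positive. Its logarithm splits as
\begin{equation*}
    \log \pi_{t \mid z}(x) = \log \pi\big( G_{t \mid z}(x) \big) - \log \left| \det \mathrm{J}F_{t \mid z}\big( G_{t \mid z}(x) \big) \right|.
\end{equation*}
Differentiability of $\pi_{t \mid z}$ then follows once each term is differentiable. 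The first term is a composition of the differentiable maps $\log \pi$ and $G_{t \mid z}$. The second term is the composition of $x \mapsto \log|\det \mathrm{J}F_{t\mid z}(x)|$ with $G_{t \mid z}$. For the second term I will need $F_{t \mid z}$ to be $C^2$, so that $\det \mathrm{J}F_{t \mid z}$ is differentiable; I will state this tacitly as part of the smoothness of the interpolant.

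\textbf{First term.} By the inverse function theorem, $\mathrm{J}G_{t \mid z}(x) = \big[ \mathrm{J}F_{t \mid z}(G_{t \mid z}(x)) \big]^{-1}$. The chain rule for gradients, $\nabla (f \circ G)(x) = \mathrm{J}G(x)^\top \nabla f(G(x))$, then gives
\begin{equation*}
    \nabla_x \log \pi\big( G_{t \mid z}(x) \big) = \big[ \mathrm{J}F_{t \mid z}(F_{t \mid z}^{-1}(x)) \big]^{-\top} \nabla \log \pi\big( F_{t \mid z}^{-1}(x) \big).
\end{equation*}
This is exactly the first term in \autoref{eq:score_cond_interpolant}.

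\textbf{Second term.} This term contributes $-\nabla_x \log|\det \mathrm{J}F_{t \mid z}(F_{t\mid z}^{-1}(x))|$. I will read the last term of \autoref{eq:score_cond_interpolant} as the gradient of this composed function of $x$, which is what the derivation produces. If an explicit form is wanted, the same chain rule combined with Jacobi's formula $\partial_i \log|\det A| = \operatorname{tr}(A^{-1}\partial_i A)$ expresses it as
\begin{equation*}
    \big[\mathrm{J}F_{t \mid z}(y)\big]^{-\top} \nabla_y \log|\det \mathrm{J}F_{t \mid z}(y)|, \qquad y = F_{t \mid z}^{-1}(x).
\end{equation*}
As a sanity check, for the linear interpolant the Jacobian is $tI$. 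The determinant term is then constant, and the formula reduces to \autoref{eq:score_cond_interpolant_linear}.

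\textbf{Main obstacle.} The calculus is routine. The delicate step is the second term: making its meaning precise (at which point the log-determinant is evaluated) and justifying its differentiability, which needs the extra $C^2$ regularity of $F_{t \mid z}$ noted above.
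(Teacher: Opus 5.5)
Your proposal is correct and follows essentially the same route as the paper. The paper also takes logarithms of the change-of-variables density, then uses the chain rule together with the inverse function theorem to obtain the $[\mathrm{J}F_{t \mid z}(F_{t \mid z}^{-1}(x))]^{-\top}\nabla\log\pi(F_{t \mid z}^{-1}(x))$ term.

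The paper's proof stops after that first term, so your handling of the second term fills in details it leaves implicit. You correctly read that term as the gradient of $x \mapsto \log|\det \mathrm{J}F_{t \mid z}(F_{t \mid z}^{-1}(x))|$, since the statement's evaluation at $x$ is a slip. Your explicit form via Jacobi's formula is right, and you are also right that differentiability of $\pi_{t \mid z}$ needs $F_{t \mid z}$ to be $C^2$.
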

\begin{proof}
    From the result of the previous Lemma, the density $\pi_{t \mid z}$ is given by:
    \begin{equation}
        \pi_{t \mid z}(x) = \pi \left( F_{t \mid z}^{-1}(x) \right) \left| \det \mathrm{J}F_{t \mid z} \left( F_{t \mid z}^{-1}(x) \right) \right|^{-1}.
    \end{equation}
    Taking the logarithm of both sides, we obtain:
    \begin{equation}
        \log \pi_{t \mid z}(x) = \log \pi \left( F_{t \mid z}^{-1}(x) \right) - \log \left| \det \mathrm{J}F_{t \mid z} \left( F_{t \mid z}^{-1}(x) \right) \right|.
    \end{equation}
    We now compute the gradient with respect to $x$, denoted by $\nabla_x$. Applying the gradient operator to the equation gives:
    \begin{equation}
        \nabla_x \log \pi_{t \mid z}(x) = \nabla_x \left[ \log \pi \left( F_{t \mid z}^{-1}(x) \right) \right] - \nabla_x \left[ \log \left| \det \mathrm{J}F_{t \mid z} \left( F_{t \mid z}^{-1}(x) \right) \right| \right].
    \end{equation}
    All we need to do is to develop the right-hand side. Let $y = F_{t \mid z}^{-1}(x)$. By the chain rule:
    \begin{equation}
        \nabla_x \left[ \log \pi \left( F_{t \mid z}^{-1}(x) \right) \right] = \left( \mathrm{J} F_{t \mid z}^{-1}(x) \right)^\top \nabla_y \log \pi(y) \Big|_{y=F_{t \mid z}^{-1}(x)}.
    \end{equation}
    The Inverse Function Theorem yields:
    \begin{equation}
        \mathrm{J} F_{t \mid z}^{-1}(x) = \left[ \mathrm{J} F_{t \mid z} \left( F_{t \mid z}^{-1}(x) \right) \right]^{-1}.
    \end{equation}
    Substituting this back into the gradient expression and using the property $(A^{-1})^\top = A^{-\top}$:
    \begin{equation}
        \nabla_x \left[ \log \pi \left( F_{t \mid z}^{-1}(x) \right) \right] = \left[ \mathrm{J}F_{t \mid z}\left( F_{t \mid z}^{-1}(x) \right) \right]^{-\top} \nabla \log \pi \left( F_{t \mid z}^{-1}(x) \right).
    \end{equation}
\end{proof}

\begin{proposition}[Continuity and FPK Equations]\label{lemma:continuity_fpk}
    Assume that $F$ is $C^1$ in $t$ and $C^2$ in $x$.
    Let $u_{t \mid z} \colon \mathcal{X} \to \mathcal{X}$ be defined by:
    \begin{equation}
        u_{t \mid z}(x) = \left( \frac{\partial F_{t \mid z}}{\partial t} \right) \left( F_{t \mid z}^{-1} (x) \right).
    \end{equation}
    \begin{enumerate}
        \item The density $\pi_{t \mid z}$ satisfies the following continuity equation:
        \begin{equation}
            \frac{\partial}{\partial t} \pi_{t \mid z} = - \operatorname{div}\left( \pi_{t \mid z} u_{t \mid z} \right),
        \end{equation}
        \item Furthermore, $\pi_{t \mid z}$ satisfies the following FPK equation:
        \begin{equation}
            \frac{\partial}{\partial t} \pi_{t \mid z} = - \operatorname{div}\left( \pi_{t \mid z} a_{t \mid z} \right) + \frac{\sigma_t^2}{2} \Delta \pi_{t \mid z},
        \end{equation}
        where $a_{t \mid z} =  u_{t \mid z} + \frac{\sigma_t^2}{2} \nabla \log \pi_{t \mid z}$.
    \end{enumerate}    
\end{proposition}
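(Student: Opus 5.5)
We first prove the continuity equation in its weak form and then derive the FPK equation from it algebraically. Fix $z$ and a test function $\varphi \in C_c^\infty(\mathcal{X})$. By the definition $\nu_{t \mid z} = F_{t \mid z} \# \nu$, we have
\begin{equation*}
    \int \varphi(x)\, \pi_{t \mid z}(x)\, \dd x = \int \varphi\big(F_{t \mid z}(y)\big)\, \pi(y)\, \dd y .
\end{equation*}
Next we differentiate both sides in $t$. On the right, $F$ is $C^1$ in $t$, and on compact time intervals inside $(0,1]$ the quantity $\nabla\varphi \cdot \partial_t F_{t \mid z}$ is bounded wherever $F_{t\mid z}(y)\in\operatorname{supp}\varphi$. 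We will treat this as the integrability condition needed to differentiate under the integral, either assuming it or noting that it holds for the interpolants of interest. This gives
\begin{equation*}
    \frac{\dd}{\dd t}\int \varphi\, \pi_{t \mid z} = \int \nabla\varphi\big(F_{t \mid z}(y)\big)\cdot \partial_t F_{t \mid z}(y)\, \pi(y)\,\dd y = \int \nabla\varphi(x)\cdot u_{t \mid z}(x)\, \pi_{t \mid z}(x)\,\dd x .
\end{equation*}
The second equality pushes forward again and uses the definition $u_{t \mid z}(x) = \partial_t F_{t \mid z}(F_{t \mid z}^{-1}(x))$. Integrating by parts, with $\varphi$ compactly supported so that no boundary terms appear, turns the right-hand side into $-\int \varphi \operatorname{div}(\pi_{t \mid z} u_{t \mid z})$. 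This is the weak form of item 1.

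To upgrade the weak identity to a pointwise one, we use the explicit formula for $\pi_{t \mid z}$ from the Conditional Density lemma. We take $\pi$ to be $C^1$, which is already assumed in Lemma~\ref{lemma:app_score_conditional_interpolant}. Since $F$ is $C^2$ in $x$, $\det \mathrm{J}F_{t\mid z}$ is $C^1$. Together these make $\pi_{t \mid z}$ continuously differentiable in $(t,x)$ and make $\pi_{t\mid z}u_{t\mid z}$ a $C^1$ field, so both sides of item 1 are continuous functions. Two continuous functions that agree against every test function agree everywhere. The main obstacle is exactly this regularity and differentiation-under-the-integral bookkeeping. It needs joint regularity of $F^{-1}$ in $(t,x)$, which comes from the implicit function theorem applied to $F_{t\mid z}(y)=x$, using that $\mathrm{J}F_{t\mid z}$ is invertible for $t>0$.

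Item 2 then follows by pure algebra. Using $\pi_{t\mid z}\nabla\log\pi_{t\mid z} = \nabla \pi_{t\mid z}$, which is valid because $\pi>0$ implies $\pi_{t\mid z}>0$, we get
\begin{equation*}
    -\operatorname{div}\big(\pi_{t \mid z} a_{t \mid z}\big) + \tfrac{\sigma_t^2}{2}\Delta \pi_{t \mid z} = -\operatorname{div}\big(\pi_{t \mid z} u_{t \mid z}\big) - \tfrac{\sigma_t^2}{2}\operatorname{div}\big(\nabla \pi_{t \mid z}\big) + \tfrac{\sigma_t^2}{2}\Delta\pi_{t \mid z} = -\operatorname{div}\big(\pi_{t \mid z} u_{t \mid z}\big).
\end{equation*}
By item 1 this equals $\partial_t \pi_{t\mid z}$. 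This step requires $\pi_{t\mid z}\in C^2$ in $x$, so that $\Delta\pi_{t\mid z}$ exists classically. That holds if $\pi\in C^2$ and $F$ is $C^3$ in $x$. Otherwise the identity can be stated in the weak sense, where the two $\sigma_t^2$ terms cancel against test functions without any further smoothness.
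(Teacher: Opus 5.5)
Your proof is correct, but for item 1 it takes a different route from the paper; item 2 is the same add-and-subtract algebra in both.

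\textbf{The paper's route.} It argues pointwise in Lagrangian coordinates. It differentiates the identity $\pi_{t\mid z}(F_{t\mid z}(y))\det\mathrm{J}F_{t\mid z}(y)=\pi(y)$ in $t$. The first factor is handled by the chain rule and the determinant by Jacobi's formula. It then identifies $\operatorname{tr}\bigl(\mathrm{J}F_{t\mid z}^{-1}\,\partial_t\mathrm{J}F_{t\mid z}\bigr)$ with $(\operatorname{div}u_{t\mid z})\circ F_{t\mid z}$ and substitutes $y=F_{t\mid z}^{-1}(x)$.

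\textbf{What your route buys.} Your test-function derivation is more general. The weak identity $\frac{\dd}{\dd t}\int\varphi\,\pi_{t\mid z}=\int\nabla\varphi\cdot u_{t\mid z}\,\pi_{t\mid z}$ needs only $F\in C^1$ with a jointly continuous inverse. It uses no smoothness of $\pi$ and no differentiation of the Jacobian determinant. It also lets you state item 2 weakly with only $\pi\in C^1$, whereas the paper's classical item 2 tacitly needs $\Delta\pi_{t\mid z}$ to exist.

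\textbf{What it costs.} The pointwise statement asserted in the proposition only comes after your upgrade step. That step uses the same implicit regularity as the paper: differentiability of $\det\mathrm{J}F_{t\mid z}$ in $t$ and of $\partial_tF_{t\mid z}$ in $x$. These are mixed partials of $F$, which ``$C^1$ in $t$ and $C^2$ in $x$'' does not literally provide. You also need an extra interchange of $\frac{\dd}{\dd t}$ and $\int$ on the left-hand side. The paper's direct computation reaches the pointwise identity in fewer steps.

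\textbf{A small simplification.} Your hedge about differentiating under the integral can be dropped. The implicit-function solution of $F_{t\mid z}(y)=x$ must coincide with the global inverse, so $(t,x)\mapsto F_{t\mid z}^{-1}(x)$ is continuous. Hence for $t\in[a,b]\subset(0,1]$ the relevant $y$ lie in the compact image of $[a,b]\times\operatorname{supp}\varphi$. On that compact set the integrand is dominated by a constant times $\pi$.
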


\begin{proof}
    \hfil 
    \begin{enumerate}
        \item First, we observe that, as a consequence of change of variables formula, the following identity holds for any $y \in \mathcal{X}$:
        \begin{equation}
            \pi_{t \mid z}\left( F_{t \mid z}(y) \right) J_t(y) = \pi(y),
        \end{equation}
        where $J_t(y) = \det \mathrm{J}F_{t \mid z}(y)$. 
        We differentiate with respect to $t$:
        \begin{equation}\label{eq:chain_rule}
            \left[ \frac{\partial}{\partial t} \left(  \pi_{t \mid z} \circ F_{t \mid z} \right) \left( y \right) \right] J_t(y) + \pi_{t \mid z}\left( F_{t \mid z}(y) \right) \left[ \frac{\partial}{\partial t} J_t (y)  \right] = 0.
        \end{equation}
        We analyze the first term. Applying the chain rule:
        \begin{equation}
            \frac{\partial}{\partial t} \left(  \pi_{t \mid z} \circ F_{t \mid z} \right) \left( y \right)  = \frac{\partial \pi_{t \mid z}}{\partial t}\left( F_{t \mid z}\left( y \right) \right) + \nabla \pi_{t \mid z}\left( F_{t \mid z}\left( y \right) \right) \cdot u_{t \mid z}\left( F_{t \mid z}\left( y \right) \right),
        \end{equation}
        where we have used $\frac{\partial F_{t \mid z}}{\partial t}(y) = u_{t \mid z}(F_{t \mid z}(y))$.
        For the second term, we use Jacobi's formula for the derivative of a determinant \cite{petersen2008matrix}:
        \begin{equation}
            \frac{\partial}{\partial t} \det(A_t) = \det(A_t) \operatorname{tr}\left(A_t^{-1} \frac{\partial}{\partial t} A_t \right)
        \end{equation}
        Taking $A_t = \mathrm{J}F_{t \mid z}$, we obtain:
        \begin{equation}
            \frac{\partial}{\partial t} J_t(y) = J_t(y) \operatorname{div}\left( u_{t \mid z}\left( F_{t \mid z}\left( y \right) \right) \right).
        \end{equation}
        Substituting these into \autoref{eq:chain_rule}, and observing that $J_t(y) \neq 0$, we obtain:
        \begin{equation}
            \left( \frac{\partial \pi_{t \mid z}}{\partial t}\left( F_{t \mid z}\left( y \right) \right) + \nabla \pi_{t \mid z}\left( F_{t \mid z}\left( y \right) \right) \cdot u_{t \mid z}\left( F_{t \mid z}\left( y \right) \right) \right) + \pi_{t \mid z}\left( F_{t \mid z}\left( y \right) \right)  \operatorname{div}\left( u_{t \mid z}\left( F_{t \mid z}\left( y \right) \right) \right) = 0.
        \end{equation}
        Now, let $x \in \mathcal{X}$, and $y = F_{t \mid z}^{-1}(x)$. The above equation becomes:
        \begin{equation}
            \left( \frac{\partial \pi_{t \mid z}}{\partial t}\left( x \right) + \nabla \pi_{t \mid z}\left( x \right) \cdot u_{t \mid z}\left( x \right) \right) + \pi_{t \mid z}\left( x \right)  \operatorname{div}\left( u_{t \mid z}\left( x \right) \right) = 0.
        \end{equation}
        Using the identity $\operatorname{div}(\pi u) = \nabla \pi \cdot u + \pi \operatorname{div}(u)$, we conclude:
        \begin{equation}
            \frac{\partial \pi_{t \mid z}}{\partial t}(x) + \operatorname{div}\left( \pi_{t \mid z} u_{t \mid z} \right)(x) = 0.
        \end{equation}

        \item We start from the continuity equation derived above and add and subtract the term $\frac{\sigma_t^2}{2} \Delta \pi_{t \mid z}$ to the right-hand side:
        \begin{equation}
            \frac{\partial \pi_{t \mid z}}{\partial t} = - \operatorname{div}\left( \pi_{t \mid z} u_{t \mid z} \right) - \frac{\sigma_t^2}{2} \Delta \pi_{t \mid z} + \frac{\sigma_t^2}{2} \Delta \pi_{t \mid z}.
        \end{equation}
        Using $\Delta \pi = \operatorname{div}(\nabla \pi)$, we obtain:
        \begin{equation}
            \frac{\partial \pi_{t \mid z}}{\partial t} = - \operatorname{div}\left( \pi_{t \mid z} u_{t \mid z} + \frac{\sigma_t^2}{2} \nabla \pi_{t \mid z} \right) + \frac{\sigma_t^2}{2} \Delta \pi_{t \mid z}.
        \end{equation}
        Next, we use $\nabla \pi_{t \mid z} = \pi_{t \mid z} \nabla \log \pi_{t \mid z}$ to rewrite the term inside the divergence:
        \begin{equation}
            \pi_{t \mid z} u_{t \mid z} + \frac{\sigma_t^2}{2} \nabla \pi_{t \mid z} = \pi_{t \mid z} \left( u_{t \mid z} + \frac{\sigma_t^2}{2} \nabla \log \pi_{t \mid z} \right),
        \end{equation}
        which yields the desired equality by substituting the definition of the drift $a_{t \mid z} = u_{t \mid z} + \frac{\sigma_t^2}{2} \nabla \log \pi_{t \mid z}$.
    \end{enumerate}
        
\end{proof}

\subsection{Markov Kernels and Bijections}
\label{app:theory_markov_kernels}

\textbf{Markov Kernels.}
We follow the exposition by \citet{ollivier2009ricci}.
Let $(\mathcal{X}, d)$ be a complete separable metric space equipped with its Borel $\sigma$-algebra $\mathcal{B}(\mathcal{X})$. We denote by $\mathcal{P}(\mathcal{X})$ the set of probability measures on $\mathcal{X}$.

A Markov transition kernel is a map $K: \mathcal{X} \times \mathcal{B}(\mathcal{X}) \to [0, 1]$ satisfying:
\begin{enumerate}
    \item For every $x \in \mathcal{X}$, the map $A \mapsto K(x, A)$ is a probability measure on $\mathcal{X}$.
    \item For every $A \in \mathcal{B}(\mathcal{X})$, the map $x \mapsto K(x, A)$ is measurable.
\end{enumerate}

We define the action of the kernel $K$ on a measure $\mu \in \mathcal{P}(\mathcal{X})$ (from the left) as the measure $K(\mu)$ given by:
\begin{equation}
    [K(\mu)](A) = \int_{\mathcal{X}} K(x, A) \, \mathrm{d}\mu(x), \quad \text{for all } A \in \mathcal{B}(\mathcal{X}).
\end{equation}
We define the $n$-step transition kernel $K^n$ recursively by $K^1 = K$ and $K^n(x, A) = \int_{\mathcal{X}} K(y, A) K^{n-1}(x, \mathrm{d}y)$. Consistent with the operator notation, $K^n(\mu)$ denotes the measure obtained by applying the kernel $n$ times.

A probability measure $\nu$ is said to be invariant with respect to $K$ if $K(\nu) = \nu$.

\textbf{Wasserstein Distance.}
For any two probability measures $\mu, \nu \in \mathcal{P}(\mathcal{X})$, the $L^1$-Wasserstein distance $W_1(\mu, \nu)$ is defined by
    \begin{equation}
        W_1(\mu, \nu) = \inf_{\pi \in \Pi(\mu, \nu)} \int_{\mathcal{X} \times \mathcal{X}} d(x, y) \, \mathrm{d}\pi(x, y),
    \end{equation}
    where $\Pi(\mu, \nu)$ is the set of all couplings of $\mu$ and $\nu$ (i.e., measures on $\mathcal{X} \times \mathcal{X}$ with marginals $\mu$ and $\nu$).

\begin{proposition}[Markov Kernels and Bi-Lipschitz Bijections]\label{prop:markov_kernels_bilipschitz}
    Let $K$ be a Markov transition kernel invariant with respect to a probability measure $\nu$. Let $F: \mathcal{X} \to \mathcal{X}$ be a bijection such that both $F$ and $F^{-1}$ are Lipschitz continuous. We denote the Lipschitz constants of $F$ and $F^{-1}$ by $L_F$ and $L_{F^{-1}}$, respectively
    
    Define the pushforward kernel $K_{F}$ by:
    \begin{equation}
        K_{F}(\hat{x}, \hat{A}) = K\left( F^{-1}(\hat{x}), F^{-1}(\hat{A}) \right).
    \end{equation}
    Let $\nu_{F} = F \# \nu$ be the pushforward of the invariant measure. Let $z \in \mathcal{X}$ be an arbitrary starting point, and let $x_0 = F^{-1}(z)$ be its preimage in the original space.
    
    The following properties hold:
    
    \begin{enumerate}
        \item (Invariant measure) The pushforward kernel $K_{F}$ is invariant with respect to $\nu_{F}$.    
        \item (Algebraic Iteration) For any measure $\mu$ and $n \geq 1$:
        \begin{equation}
            K^{n}_{F} \left( \mu \right) = F \# \left( K^n \left( F^{-1} \# \mu \right) \right).
        \end{equation}
        \item (Wasserstein Lipschitz Bound) For any two measures $\mu_1, \mu_2$:
        \begin{equation}
            W_1\left( F \# \mu_1 , F \# \mu_2 \right) \leq L_F \, W_1\left( \mu_1 , \mu_2 \right).
        \end{equation}
        \item (Kernel Wasserstein Relation) 
        \begin{equation}
            W_1\left( K_{F}^{n} \left( \delta_z \right), \nu_{F} \right) \leq L_F \, W_1\left( K^n \left( \delta_{x_0} \right), \nu \right).
        \end{equation}
        \item (Convergence Bounds) If there exists $\rho \in (0, 1)$ such that $W_1(K^n(\delta_x), \nu) \leq \rho^n W_1(\delta_x, \nu)$ for all $x$, then:
        \begin{gather}
            W_1\left( K_{F}^n \left( \delta_z \right), \nu_{F} \right) \leq L_F \, \rho^n \, W_1\left( \delta_{x_0}, \nu \right), \\
            W_1\left( K_{F}^n \left( \delta_z \right), \nu_{F} \right) \leq L_F L_{F^{-1}} \rho^n \, W_1\left( \delta_z, \nu_{F} \right).
        \end{gather}
    \end{enumerate}
\end{proposition}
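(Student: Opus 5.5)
I will prove the five items in order, since each uses the previous ones. The basic tool is the identity $F^{-1}(\hat A)\in\mathcal{B}(\mathcal{X})$ for Borel $\hat A$. This holds because $F$ is continuous, hence Borel measurable. It gives that $K_F$ is a Markov kernel: $\hat x\mapsto K(F^{-1}(\hat x),F^{-1}(\hat A))$ is the composition of a measurable map with the continuous $F^{-1}$.

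For item 1, compute $[K_F(\nu_F)](\hat A)=\int K(F^{-1}(\hat x),F^{-1}(\hat A))\,\mathrm{d}(F\#\nu)(\hat x)$. Change variables $\hat x=F(x)$ to get $\int K(x,F^{-1}(\hat A))\,\mathrm{d}\nu(x)=\nu(F^{-1}(\hat A))=\nu_F(\hat A)$, using invariance of $\nu$.

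For item 2, I first show the single-step identity $K_F(\mu)=F\#(K(F^{-1}\#\mu))$. Both sides evaluated on $\hat A$ equal $\int K(F^{-1}(\hat x),F^{-1}(\hat A))\,\mathrm{d}\mu(\hat x)$. The right side equals this after the change of variables $x=F^{-1}(\hat x)$. I then induct on $n$. The inner $F^{-1}\#F\#$ cancels because $F$ is a bijection, so
\[K_F^{n}(\mu)=K_F(K_F^{n-1}(\mu))=F\#K(F^{-1}\#F\#K^{n-1}(F^{-1}\#\mu))=F\#K^n(F^{-1}\#\mu).\]
Here I must check that $K_F^n$ as defined recursively agrees with applying the operator $n$ times. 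This is the standard Chapman–Kolmogorov/Fubini argument.

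For item 3, take any coupling $\gamma\in\Pi(\mu_1,\mu_2)$. Then $(F\times F)\#\gamma$ is a coupling of $F\#\mu_1$ and $F\#\mu_2$, and
\[\int d(F x,F y)\,\mathrm{d}\gamma\le L_F\int d(x,y)\,\mathrm{d}\gamma.\]
Taking the infimum over $\gamma$ gives the bound.

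For item 4, note $F^{-1}\#\delta_z=\delta_{x_0}$. Item 2 then gives $K_F^n(\delta_z)=F\#K^n(\delta_{x_0})$, and $\nu_F=F\#\nu$ by definition. Item 3 applied to this pair gives the claim.

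For item 5, the first inequality follows by inserting the contraction hypothesis at $x=x_0$ into item 4. For the second, I apply item 3 with the map $F^{-1}$ (Lipschitz constant $L_{F^{-1}}$) to the pair $\delta_z,\nu_F$. Since $F^{-1}\#\delta_z=\delta_{x_0}$ and $F^{-1}\#\nu_F=\nu$, this gives $W_1(\delta_{x_0},\nu)\le L_{F^{-1}}W_1(\delta_z,\nu_F)$. Substituting into the first inequality finishes.

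None of these steps is deep. The main obstacle is technical: making the measure-theoretic bookkeeping in item 2 rigorous, namely that the recursive kernel $K_F^n$ matches the $n$-fold operator action. I would handle it by proving the operator identity for $n=1$ on arbitrary $\mu$ and then using the associativity $K^n(\mu)=K(K^{n-1}(\mu))$, which follows from Fubini–Tonelli. A second care point is ensuring the $W_1$ distances are finite; if they are infinite the inequalities hold trivially.
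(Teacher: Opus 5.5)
Your proposal is correct and follows essentially the same route as the paper's proof. It uses change of variables plus invariance for item 1, the one-step identity and induction for item 2, and pushing a coupling forward by $(F,F)$ for item 3; items 4 and 5 then combine these with $F^{-1}\#\delta_z=\delta_{x_0}$ and the Lipschitz bound for $F^{-1}$. Your deviations are minor refinements that tighten but do not change the argument: you take an arbitrary coupling and then the infimum rather than assuming an optimal coupling exists, and you explicitly check that $K_F$ is a Markov kernel.
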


\begin{proof}
    We prove each item separately:

    \begin{enumerate}
        \item (Invariant measure) We first verify that $K_F$ is invariant with respect to $\nu_F$.
        \begin{align}
            K_{F} \left( \nu_{F} \right) (\hat{A}) & = \int K_{F} \left( \hat{x}, \hat{A} \right) \nu_{F} (\dd \hat{x}) = \int K \left( F^{-1}\left( \hat{x} \right), F^{-1}\left( \hat{A} \right) \right) \left( F \# \nu \right) (\dd \hat{x}) \stackrel{\text{(i)}}{=} \\
            & \stackrel{\text{(i)}}{=} \int K \left( x, F^{-1}\left( \hat{A} \right) \right) \nu (\dd x) \stackrel{\text{(ii)}}{=} \nu \left( F^{-1}\left( \hat{A} \right) \right) = \left(F \# \nu\right) \left( \hat{A} \right) = \nu_{F} \left( \hat{A} \right),
        \end{align}
        where (i) is due to the change of variables theorem ($\hat{x} = F(x)$), and (ii) is due to $K$ being invariant with respect to $\nu$.
        \item (Algebraic Iteration)
        We prove this by induction. For $n=1$, observe that:
        \begin{align*}
            \left(K_{F} \left( \mu \right) \right)\left(\hat{A}\right) &= \int K_{F}\left(\hat{x}, \hat{A}\right) \mu\left(\dd \hat{x}\right) = \int K\left(F^{-1}\left(\hat{x}\right), F^{-1}\left(\hat{A}\right)\right) \mu\left(\dd \hat{x}\right) \stackrel{\text{(i)}}{=} \\
            & \stackrel{\text{(i)}}{=}\int K\left(x, F^{-1}\left(\hat{A}\right)\right) \left(F^{-1} \# \mu\right)\left(\dd x\right) = K\left(F^{-1} \# \mu\right)\left(F^{-1}\left(\hat{A}\right)\right) = \\
            &= \left(F \# \left(K\left(F^{-1} \# \mu\right)\right)\right) \left(\hat{A}\right),
        \end{align*}
        where (i) follows from the change of variables.
        Suppose it is true for $n \geq 1$. Then:
        \begin{align*}
            K_{F}^{\left(n+1\right)}\left(\mu\right) &= K_{F}\left(K_{F}^{\left(n\right)} \left( \mu \right) \right) = K_{F}\left(F \# K^{\left(n\right)}\left(F^{-1} \# \mu\right)\right) = \\
            &= F \# \left(K\left(F^{-1} \# F \# K^{\left(n\right)}\left(F^{-1} \# \mu\right)\right)\right) = \\
            &= F \# \left( K^{\left(n+1\right)}\left(F^{-1} \# \mu\right) \right).
        \end{align*}
        \item (Wasserstein Lipschitz Bound) 
        Let $\gamma \in \Gamma\left( \mu_1, \mu_2 \right)$ be an optimal coupling for $W_1(\mu_1, \mu_2)$. Define the pushforward coupling $\hat{\gamma} = \left( F, F \right) \# \gamma$. 
        We first confirm that $\hat{\gamma} \in \Gamma\left( F \# \mu_1 , F \# \mu_2 \right)$:
        \begin{align*}
            \hat{\gamma}\left(\mathbb{R}^d \times \hat{A}\right) &= \gamma\left(\mathbb{R}^d \times F^{-1}\left(\hat{A}\right)\right) = \mu_2\left(F^{-1}\left(\hat{A}\right)\right) = \left(F \# \mu_2\right)\left(\hat{A}\right), \\
            \hat{\gamma}\left(\hat{A} \times \mathbb{R}^d\right) &= \gamma\left(F^{-1}\left(\hat{A}\right) \times \mathbb{R}^d\right) = \mu_1\left(F^{-1}\left(\hat{A}\right)\right) = \left(F \# \mu_1\right)\left(\hat{A}\right).
        \end{align*}
        Next, we bound the transport cost using the Lipschitz property of $F$:
        \begin{align*}
            W_1(F \# \mu_1, F \# \mu_2) &\leq \int \left\|\hat{x} - \hat{y}\right\| \, \hat{\gamma}\left(\dd \hat{x}, \dd \hat{y}\right) = \int \left\|F\left(x\right) - F\left(y\right)\right\| \, \gamma\left(\dd x, \dd y\right) \\
            &\leq \int L_F \left\|x-y\right\| \, \gamma\left(\dd x, \dd y\right) = L_F W_1\left(\mu_1, \mu_2\right).
        \end{align*}
        \item (Kernel Wasserstein Relation)
        We apply the previous two results. Using Property 1 with $\mu = \delta_z$:
        \begin{equation}
            K_F^n(\delta_z) = F \# \left( K^n( F^{-1} \# \delta_z ) \right) = F \# \left( K^n( \delta_{x_0} ) \right),
        \end{equation}
        where $x_0 = F^{-1}(z)$.
        Also recall $\nu_F = F \# \nu$.
        Now apply Property 2 with $\mu_1 = K^n(\delta_{x_0})$ and $\mu_2 = \nu$:
        \begin{align*}
            W_1\left( K_{F}^{n} \left( \delta_z \right), \nu_{F} \right) &= W_1\left( F \# K^n(\delta_{x_0}), F \# \nu \right) \\
            &\leq L_F W_1\left( K^n(\delta_{x_0}), \nu \right).
        \end{align*}
        \item (Convergence Bounds)
        Assume the base kernel satisfies $W_1(K^n(\delta_x), \nu) \leq \rho^n W_1(\delta_x, \nu)$.
        Substituting this into the result from Property 3:
        \begin{equation}
            W_1\left( K_{F}^n \left( \delta_z \right), \nu_{F} \right) \leq L_F \rho^n W_1\left( \delta_{x_0}, \nu \right).
        \end{equation}
        To obtain the second bound, we need to relate $W_1(\delta_{x_0}, \nu)$ back to $\nu_F$.
        Note that $x_0 = F^{-1}(z)$ and $\nu = F^{-1} \# \nu_F$.
        Applying the Lipschitz bound for the inverse map $F^{-1}$ (analogous to Property 2):
        \begin{equation}
            W_1(\delta_{x_0}, \nu) = W_1(F^{-1} \# \delta_z, F^{-1} \# \nu_F) \leq L_{F^{-1}} W_1(\delta_z, \nu_F).
        \end{equation}
        Combining these inequalities yields:
        \begin{equation}
            W_1\left( K_{F}^n \left( \delta_z \right), \nu_{F} \right) \leq L_F L_{F^{-1}} \rho^n W_1(\delta_z, \nu_F).
        \end{equation}
    \end{enumerate}    
\end{proof}

\begin{corollary}[Markov Kernels and Linear Interpolants]\label{cor:markov_kernels_linear_interpolants}
    Let $\mathcal{X} = \Rbb^D$.
    Let $K$ be a Markov transition kernel invariant with respect to a probability measure $\nu$. Let $F_{t \mid z}(x) = (1-t)z + tx$ for $t \in (0, 1]$ and $z \in \mathcal{X}$.
    
    Define the pushforward kernel $K_{t \mid z}$ by:
    \begin{equation}
        K_{t \mid z}(\hat{x}, \hat{A}) = K\left( F_{t \mid z}^{-1}(\hat{x}), F_{t \mid z}^{-1}(\hat{A}) \right). 
    \end{equation}
    The following properties hold:
    \begin{enumerate}
        \item (Invariant measure) The pushforward kernel $K_{t \mid z}$ is invariant with respect to $\nu_{t \mid z}$.
    
        \item (Algebraic Iteration) For any measure $\mu$ and $n \geq 1$:
        \begin{equation}
            K^{n}_{t \mid z} \left( \mu \right) = F_{t \mid z} \# \left( K^n \left( F^{-1}_{t \mid z} \# \mu \right) \right).
        \end{equation}
        
        \item (Wasserstein Equality) For any two measures $\mu_1, \mu_2$:
        \begin{equation}
            W_1\left( F_{t \mid z} \# \mu_1 , F_{t \mid z} \# \mu_2 \right) = t \, W_1\left( \mu_1 , \mu_2 \right).
        \end{equation}
        
        \item (Kernel Wasserstein Equality) 
        Since $F_{t \mid z}^{-1}(z) = z$, the bound becomes an equality:
        \begin{equation}
            W_1\left( K_{t \mid z}^{n} \left( \delta_z \right), \nu_{t \mid z} \right) = t \, W_1\left( K^n \left( \delta_z \right), \nu \right).
        \end{equation}
        
        \item (Convergence Bounds) If $W_1(K^n(\delta_z), \nu) \leq \rho^n W_1(\delta_z, \nu)$, then:
        \begin{gather}
            W_1\left( K_{t \mid z}^n \left( \delta_z \right), \nu_{t \mid z} \right) \leq t \, \rho^n \, W_1\left( \delta_z, \nu \right), \\
            W_1\left( K_{t \mid z}^n \left( \delta_z \right), \nu_{t \mid z} \right) \leq \rho^n \, W_1\left( \delta_z, \nu_{t \mid z} \right).
        \end{gather}
    \end{enumerate}
\end{corollary}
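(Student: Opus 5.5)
The plan is to specialize \autoref{prop:markov_kernels_bilipschitz} to $F = F_{t \mid z}$ with $F_{t \mid z}(x) = (1-t)z + tx$. Its inverse is $F_{t \mid z}^{-1}(\hat{x}) = z + (\hat{x} - z)/t$, so both maps are affine bijections of $\Rbb^D$ and are Lipschitz with $L_F = t$ and $L_{F^{-1}} = 1/t$. Items 1 and 2 (invariance of $K_{t \mid z}$ with respect to $\nu_{t \mid z} = F_{t \mid z} \# \nu$, and the algebraic iteration formula) then follow verbatim from the corresponding items of \autoref{prop:markov_kernels_bilipschitz}, since those items use only bijectivity and measurability.

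The only genuinely new ingredient is item 3, which upgrades the Lipschitz inequality to an equality. Applying item 3 of the proposition to $F_{t \mid z}$ gives
\[
W_1(F_{t \mid z}\#\mu_1, F_{t \mid z}\#\mu_2) \le t\, W_1(\mu_1,\mu_2).
\]
Applying the same bound to $F_{t \mid z}^{-1}$ with the measures $F_{t \mid z}\#\mu_i$, and using $F_{t \mid z}^{-1}\# F_{t \mid z}\#\mu_i = \mu_i$, gives
\[
W_1(\mu_1,\mu_2) \le t^{-1} W_1(F_{t \mid z}\#\mu_1, F_{t \mid z}\#\mu_2).
\]
Together these yield equality. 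The argument can also be phrased directly: $\|F_{t \mid z}(x) - F_{t \mid z}(y)\| = t\|x-y\|$ is an exact scaling, and $(F_{t \mid z},F_{t \mid z})\#$ is a bijection between the coupling sets $\Pi(\mu_1,\mu_2)$ and $\Pi(F_{t \mid z}\#\mu_1,F_{t \mid z}\#\mu_2)$, so taking infima gives the equality. This step is the main obstacle only in the sense that one must check that the coupling correspondence is onto. That holds because the inverse map pushes couplings back.

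For item 4, note that $F_{t \mid z}(z) = (1-t)z + tz = z$, so $x_0 = F_{t \mid z}^{-1}(z) = z$. Item 2 with $\mu = \delta_z$ gives $K_{t \mid z}^n(\delta_z) = F_{t \mid z}\#K^n(\delta_z)$. Combining this with $\nu_{t \mid z} = F_{t \mid z}\#\nu$ and the equality of item 3 gives $W_1(K^n_{t \mid z}(\delta_z),\nu_{t \mid z}) = t\,W_1(K^n(\delta_z),\nu)$.

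For item 5, the first bound comes from inserting the contraction hypothesis into item 4. For the second bound, apply item 3 once more with $\mu_1 = \delta_z = F_{t \mid z}\#\delta_z$ and $\mu_2 = \nu$ to obtain $W_1(\delta_z,\nu_{t \mid z}) = t\,W_1(\delta_z,\nu)$. Hence $t\rho^n W_1(\delta_z,\nu) = \rho^n W_1(\delta_z,\nu_{t \mid z})$. This matches the general bound of the proposition with $L_F L_{F^{-1}} = 1$.
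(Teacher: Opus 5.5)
Your proposal is correct and follows essentially the same route as the paper. You specialize \autoref{prop:markov_kernels_bilipschitz} with $L_F = t$, $L_{F^{-1}} = t^{-1}$ and the fixed point $F_{t \mid z}(z) = z$, obtain the equality in item 3 by applying the Lipschitz bound to both $F_{t \mid z}$ and $F_{t \mid z}^{-1}$, and then read off items 4 and 5. Your derivation of the second bound in item 5 from $W_1(\delta_z,\nu_{t \mid z}) = t\,W_1(\delta_z,\nu)$ is a slightly cleaner variant of the paper's ``plug in $L_F L_{F^{-1}} = 1$'' step, since it uses the contraction hypothesis only at $z$, exactly as the corollary states it.
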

\begin{proof}
    Results are followed by applying Proposition 1 to the specific linear bijection $F_{t \mid z}(x) = (1-t)z + tx$. Note that:
    \begin{itemize}
        \item (Lipschitz Constants) Since $F_{t \mid z}$ is a homothety with scaling factor $t$, we have $\|F_{t \mid z}(x) - F_{t \mid z}(y)\| = t\|x-y\|$ and $\|F^{-1}_{t \mid z}(x) - F^{-1}_{t \mid z}(y)\| = t^{-1}\|x-y\|$. Thus, $L_F = t$ and $L_{F^{-1}} = t^{-1}$.
        \item (Fixed Point) Direct evaluation shows $F_{t \mid z}(z) = z$. Thus, the preimage of the starting point is $x_0 = F_{t \mid z}^{-1}(z) = z$.
    \end{itemize}

    Properties 1 and 2 are independent of the specific properties of the map. Property 5 follows by plugging in the values of $L_F$ and $L_{F^{-1}}$.
    \begin{enumerate}
        \setcounter{enumi}{2}
        \item (Wasserstein Scaling)
        Proposition 1 (Property 3) provides the upper bound:
        \begin{equation}
            W_1\left( F_{t \mid z} \# \mu_1 , F_{t \mid z} \# \mu_2 \right) \leq t W_1\left( \mu_1 , \mu_2 \right).
        \end{equation}
        To prove equality, we apply the same general bound to the \textit{inverse} map $F^{-1}_{t \mid z}$ acting on the measures $\nu_1 = F_{t \mid z} \# \mu_1$ and $\nu_2 = F_{t \mid z} \# \mu_2$:
        \begin{equation}
            W_1\left( \mu_1 , \mu_2 \right) = W_1\left( F^{-1}_{t \mid z} \# \nu_1 , F^{-1}_{t \mid z} \# \nu_2 \right) \leq t^{-1} W_1\left( \nu_1 , \nu_2 \right).
        \end{equation}
        Multiplying by $t$ yields $t W_1\left( \mu_1 , \mu_2 \right) \leq W_1\left( F_{t \mid z} \# \mu_1 , F_{t \mid z} \# \mu_2 \right)$. Combining the upper and lower bounds confirms the equality:
        \begin{equation}
            W_1\left( F_{t \mid z} \# \mu_1 , F_{t \mid z} \# \mu_2 \right) = t W_1\left( \mu_1 , \mu_2 \right).
        \end{equation}

        \item (Kernel Wasserstein Relation)
        We start with the result from the previous property using $\mu = \delta_z$:
        \begin{equation}
            K_{t \mid z}^n(\delta_z) = F_{t \mid z} \# \left( K^n( F_{t \mid z}^{-1} \# \delta_z ) \right).
        \end{equation}
        Since $z$ is a fixed point ($F_{t \mid z}^{-1}(z) = z$), this simplifies to $F_{t \mid z} \# (K^n(\delta_z))$. Additionally, $\nu_{t \mid z} = F_{t \mid z} \# \nu$.
        Applying the exact scaling law derived in Item 2 with $\mu_1 = K^n(\delta_z)$ and $\mu_2 = \nu$:
        \begin{equation}
            W_1\left( K_{t \mid z}^n(\delta_z), \nu_{t \mid z} \right) = W_1\left( F_{t \mid z} \# K^n(\delta_z), F_{t \mid z} \# \nu \right) = t W_1\left( K^n(\delta_z), \nu \right).
        \end{equation}
    \end{enumerate}    
\end{proof}

\section{Sampling Tasks}
\label{app:sampling_tasks}

In this section, we provide more details about the tasks used in our experimental section. A summary of these tasks can be found in \autoref{tab:tasks_summary}.

\subsection{Gaussian mixture (GM)}
This task is inspired by the 2-dimensional Gaussian mixture target first used by \citet{midgley2023flow}.
We consider four target distributions of increasing complexity and dimensionality: \gm-2, \gmnu-2, \gm-16, and \gmnu-16. All targets are Gaussian mixture models; the first two are illustrated in \autoref{fig:gm2_gmnu2}. The notation \gmnu\ denotes non-uniform mixture weights, meaning that different components carry unequal probability mass, while the suffix indicates the dimensionality ($D=2$ or $D=16$). As dimensionality increases, generating high-quality samples becomes more challenging. Similarly, the non-uniform setting is expected to be harder, since a successful sampler must accurately capture the relative mass of each mode.

\subsection{Lennard-Jones (LJ)}
We consider the Lennard-Jones (LJ) potential \cite{jones1924determination}, a standard model for solid-state systems and rare-gas clusters. The potential energy for a system of $N$ particles is defined as:
\begin{equation}\label{eq:lennard_jones}
    E(x_1, \ldots, x_N) = \sum_{1 \le i < j \le N} 4\epsilon \left[ \left( \frac{\sigma}{r_{ij}} \right)^{12} - \left( \frac{\sigma}{r_{ij}} \right)^{6} \right],
\end{equation}
where $x_n \in \Rbb^3$, $r_{ij} = \|x_i - x_j\|_2$ is the Euclidean distance between particles $i$ and $j$, and $\epsilon > 0$ and $\sigma > 0$ are physical constants. 
Following prior work~\cite{klein2023equivariant}, we consider two Boltzmann distributions induced by this potential: LJ-13 and LJ-55, corresponding to systems with 13 and 55 particles, and dimensionalities $D=39$ and $D=155$, respectively.
To assess the quality of the generated samples, we use the test set provided by~\cite{klein2023equivariant} as ground-truth samples.

\subsection{Alanine Dipeptide (ALDP)}
We consider the alanine dipeptide molecule in vacuum at \( T = 300\,\mathrm{K} \), a standard benchmark for evaluating sampling methods \cite{smith1999alanine}. 
The system comprises 22 atoms, corresponding to a \( D = 66 \)-dimensional configuration space, and the target is the Boltzmann distribution induced by the molecular force field. 
Since potential evaluations are computationally expensive, this setting highlights the need for samplers that achieve high sample quality with few density evaluations. 
Ground truth samples are generated using OpenMM molecular dynamics simulations following the setup of \citet{midgley2023flow}.

\textbf{Ramachandran Histograms.}
To evaluate the quality of the generated samples, we project the 66-dimensional configuration space onto the two principal dihedral angles, $\phi$ and $\psi$ \cite{midgley2023flow}. 
These projection is done using the \texttt{mdtraj} package \cite{mcgibbon2015mdtraj}.
This results in a 2D histogram, known as a Ramachandran plot, which provides a visually intuitive and physically meaningful representation of the molecule's conformational states.
Following previous work \cite{noe2019boltzmann,midgley2023flow,rissanen2025progressive}, we use these plots to identify whether each sampler effectively navigates these barriers to achieve a global exploration of the conformational space compared to the MD ground truth.

\textbf{Chirality.}
Alanine dipeptide is a chiral molecule, meaning it can exist in two forms: the L-form and the D-form~\cite{midgley2023flow}.
While the D-form is found in synthetically created compounds, the L-form appears almost exclusively in nature. 
Consequently, the literature focuses predominantly on the L-form~\cite{smith1999alanine}. 
To restrict our experiments to this biologically relevant state, we filter out D-forms by adding a penalization term to the potential energy surface.
This term assigns a high energy penalty to D-configurations while remaining zero for L-configurations.
Formally, this term is defined as the signed volume, $\Omega$, of the parallelepiped spanned by the vectors connecting the central alpha-carbon (C$_\alpha$) to its neighbors. 
This is defined by the scalar triple product:
\begin{equation}
    \Omega = (\mathbf{r}_{\mathrm{N}} - \mathbf{r}_{\mathrm{C}_\alpha}) \cdot \left[ (\mathbf{r}_{\mathrm{C}} - \mathbf{r}_{\mathrm{C}_\alpha}) \times (\mathbf{r}_{\mathrm{C}_\beta} - \mathbf{r}_{\mathrm{C}_\alpha}) \right],
\end{equation}
where $\mathbf{r}_{\mathrm{X}}$ represents the position vector of atom $\mathrm{X} \in \left\{ \mathrm{N}, \mathrm{C}, \mathrm{C}_\beta, \mathrm{C}_\alpha \right\}$. 
The sign of $\Omega$ determines the chirality, allowing us to penalize the system when it transitions into the D-region.

\subsection{Bayesian Neural Network (BNN)}
We consider the problem of sampling from the posterior distribution over the parameters of a Bayesian neural network, which is known to be complex and highly multimodal~\cite{neal2012bayesian,izmailov2021bayesian}. 
Such samples are required to accurately compute posterior expectations and to quantify predictive uncertainty. 
Evaluating the target density involves a forward pass through the network, while computing the score of the log density requires a backward pass, both of which scale with the number of parameters. 
This makes it crucial to minimize the number of density evaluations needed to obtain high-quality samples.

\begin{figure}[t]
    \centering
    \begin{subfigure}[b]{0.2\columnwidth}
        \centering
        \includegraphics[width=1.0\linewidth]{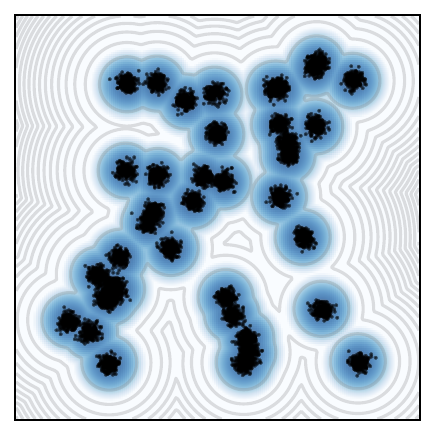}
        \caption{GM-2}
        \label{fig:samples_gm2_gt}
    \end{subfigure}
    \begin{subfigure}[b]{0.2\columnwidth}
        \centering
        \includegraphics[width=1.0\linewidth]{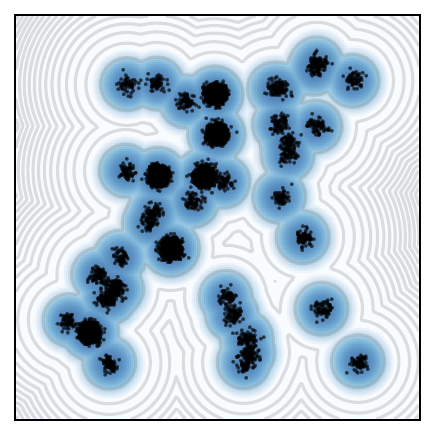}
        \caption{GMNU-2}
        \label{fig:samples_gmnu2_gt}
    \end{subfigure}
    \caption{GM task in two dimensions.}
    \label{fig:gm2_gmnu2}
\end{figure}

\section{Evaluation Metrics}
\label{app:metrics}

To evaluate performance, we consider a diverse set of metrics: Maximum Mean Discrepancy (MMD), Total Variation distance, the Wasserstein-2 distance ($W_2$), and the relative Mean Absolute Error (MAE).
In addition, we report two task-specific metrics: the Ramachandran KL divergence for the ALDP task, and the Test Negative Log Likelihood (Test NLL) for the BNN task.
Aggregate performance is quantified by the Mean Hypervolume Ratio (HVR) \cite{zitzler2003performance}, computed on normalized Pareto fronts to ensure comparability across objectives of varying scales.
Next, we describe these metrics.

\textbf{Maximum Mean Discrepancy (MMD).}
We use the definition of MMD given by \citet{gretton2012kernel}.
We compute MMD on the histograms of negative log-densities of the samples.
Given two sets of samples $X = \{x_1, \dots, x_n\}$ and $Y = \{y_1, \dots, y_m\}$, the squared MMD estimate is defined as:
\begin{equation}
    \widehat{\text{MMD}}^2(X, Y) = \frac{1}{n(n-1)} \sum_{i \neq j} k(x_i, x_j) - \frac{2}{nm} \sum_{i,j} k(x_i, y_j) + \frac{1}{m(m-1)} \sum_{i \neq j} k(y_i, y_j)
\end{equation}
where $k(\cdot, \cdot)$ is a positive definite kernel.
In this work, we use the Gaussian kernel $k(x, y) = \exp\left(-\frac{\|x - y\|^2}{2\sigma^2}\right)$.

\textbf{Total Variation (TV).}
We approximate the Total Variation distance using the discretized histograms of the negative log-densities. Let $H_P$ and $H_Q$ be the normalized histogram vectors (probability mass functions) for the true and generated data respectively, where $H(i)$ represents the probability mass in the $i$-th bin. The TV distance is computed as:
\begin{equation}
    \text{TV}(H_P, H_Q) = \frac{1}{2} \sum_{i} |H_P(i) - H_Q(i)|
\end{equation}

\textbf{Wasserstein-2 Distance ($W_2$).}
The Wasserstein-2 distance measures the cost of transporting the generated distribution to the target distribution. In practice, we compute $W_2$ using the Python Optimal Transport package \cite{flamary2021pot}. 

For the Lennard-Jones (LJ) task, the metric $d(x, y)$ must account for rotational and translational symmetries. Following previous work \cite{rissanen2025progressive}, we incorporate the Kabsch algorithm to ensure equivariance. The distance between two configurations $x$ and $y$ is defined as the root-mean-square deviation (RMSD) after optimal superposition:
\begin{equation}
    d_{\text{Kabsch}}(x, y) = \min_{R \in SO(3), t \in \mathbb{R}^3} \| R x + t - y \|_2
\end{equation}

\textbf{Relative Mean Absolute Error (MAE).}
The relative MAE evaluates the accuracy of the estimated expectation of a scalar observable $f(x)$.
We use the same quadratic observable as in \citet{midgley2023flow, chen2024diffusive}.
The relative error is given by:
\begin{equation}
    \text{RelMAE} = \frac{\left| \mathbb{E}_{x \sim \hat{\nu}}[f(x)] - \mathbb{E}_{x \sim \nu}[f(x)] \right|}{\left| \mathbb{E}_{x \sim \nu}[f(x)] \right|}
\end{equation}

\textbf{Ramachandran Kullback--Leibler Divergence (Ramachandran KL).}
For the ALDP task, we assess the structural fidelity by comparing the marginal distributions of the dihedral angles $\phi$ and $\psi$.
We compute the Kullback--Leibler (KL) divergence between discretized Ramachandran histograms of the ground-truth and generated samples.
See \autoref{app:sampling_tasks} for more details about the Ramachandran histograms. 

\begin{changes}
\textbf{Ramachandran Root Mean Squared Error (Ramachandran RMSE).}
To complement the KL divergence, we evaluate the thermodynamic landscapes through the Free Energy Surface (FES) obtained from the Ramachandran histograms. We quantify the discrepancy between the ground-truth and generated FES using the Root Mean Square Error (RMSE). See \autoref{app:sampling_tasks} for details about the Ramachandran histograms.
\end{changes}

\textbf{Test Negative Log Likelihood (Test NLL).}
For the BNN task, ground-truth samples from the posterior are unavailable, making the previous metrics inapplicable.
Instead, we evaluate models using the average Negative Log Likelihood (NLL) on a held-out test set.
For each posterior sample (corresponding to a set of network weights), we compute the test NLL and then report the average across samples. 

\textbf{Hypervolume Ratio (HVR).}\label{app:hvr_metric}
To quantitatively assess the quality of the Pareto fronts across tasks with varying objective scales, we employ the Hypervolume (HV) indicator~\cite{zitzler2003performance}. The HV indicator measures the volume of the objective space that is dominated by a set of non-dominated solutions, bounded by a reference point. It is widely regarded as the gold standard in multi-objective optimization because it is strictly monotonic with respect to Pareto dominance.
This means that a set of solutions that improves upon another in terms of convergence or diversity will strictly yield a higher hypervolume.

Since the metrics in our set of tasks operate on different scales, raw hypervolume scores cannot be aggregated directly. To enable a fair cross-task comparison, we compute the Mean Hypervolume Ratio (HVR) following standard benchmarking protocols~\cite{zitzler2003performance}. For each target distribution and evaluation metric, we employ the following procedure to compute the HVR:
\begin{enumerate}
    \item Normalization: We identify the global minimum and maximum objective values across all methods and all evaluations. The objective vectors for all methods are then normalized to the unit square $[0, 1]^2$ via linear rescaling.  
    \item Reference Front Construction: We construct a \emph{best known} Pareto front for each task by pooling the solutions from all methods and filtering for the non-dominated set. The reference hypervolume, $HV_{\text{ref}}$, is computed based on this combined front using a reference point of $(1.1, 1.1)$ in the normalized space to ensure all boundary solutions are captured.    
    \item Ratio Calculation: The HVR for a specific method is defined as the ratio of its hypervolume to the reference hypervolume:
    \begin{equation}
        \text{HVR}(\texttt{method}) = \frac{HV(\texttt{method})}{HV_{\text{ref}}}.
    \end{equation}
\end{enumerate}
An HVR of $1.0$ indicates that a method has successfully recovered the entire best-known Pareto front, while lower values indicate a failure to converge or a lack of diversity in the solution set.
To obtain the Mean HVR of a method, we average the HVR corresponding to that method over the selected metrics and targets.

\section{Experimental Setup}
\label{app:experimental_setup}

In this section, we describe the implementation details and hyperparameter configuration of the methods used in our experiments.
We also explain how to obtain the Pareto fronts depicted in \autoref{fig:pareto_summary} and \autoref{app:additional_experiments}.

\textbf{Implementation.}
All methods and target distributions are implemented in PyTorch, closely following their original formulations. 
The code used in our experiments has been uploaded as supplementary material.

For NRPT, we extend the implementation by \citet{rissanen2025progressive}, which is available at \url{https://github.com/cambridge-mlg/Progressive-Tempering-Sampler-with-Diffusion}.
For OASMC, we adapt to PyTorch the pseudo-code provided by \citet{syed2024optimised}.
For DiGS, we rely on the implementation available at the official repository: \url{https://github.com/Wenlin-Chen/DiGS}.
The implementations of MALA and HMC are standard, and we therefore use conventional versions of these algorithms.
In all cases, these implementations have been refined to maximize performance while minimizing density evaluations.
The implementation of CDS closely follows \autoref{alg:cond_diffusion_sampler}.

% The only difference is that, instead of directly specifying the number of PT steps, we introduce a hyperparameter $\rho$, referred to as the \emph{jump proportion}.
% Given a total computational budget of $N$ steps, the number of steps allocated to each stage is defined as
% \begin{equation}
%     N_{\textnormal{Stage1}} = \rho N, \qquad 
%     N_{\textnormal{Stage2}} = (1 - \rho) N.
% \end{equation}
% Thus, $\rho$ directly controls the fraction of steps devoted to the first stage of CDS.
% This formulation allows us to explicitly study the trade-off between the two stages (see \autoref{app:ablation}).

Finally, all samplers are implemented to prioritize computational efficiency by minimizing target density evaluations. 
To achieve this, each sampler caches its current state, comprising the sample coordinates $x$, the log unnormalized density $\log \pi(x)$, and the score $\nabla \log \pi(x)$. 
This caching mechanism prevents redundant gradient and density computations during internal steps.

\begin{changes}
\textbf{Methods, Hyperparameters and Configuration.}
In this work, we consider the following methods: Non-Reversible PT (NRPT)~\cite{syed2022non}, Optimized Annealed SMC (OASMC)~\cite{syed2024optimised}, Diffusive Gibbs Sampling (DiGS)~\cite{chen2024diffusive}, Metropolis--Adjusted Langevin Algorithm (MALA), Hamiltonian Monte Carlo (HMC), No-U-Turn Sampler (NUTS, \citet{hoffman2014no}), Stein Variational Gradient Descent (SVGD, \citet{liu2016stein}), and the Metropolis Adjusted Microcanonical Sampler (MAMS, \citet{robnik2025metropolis}).

To ensure a fair evaluation, we explore multiple hyperparameter values for each method and select the optimal configurations via grid search.
% \autoref{tab:hyperparameters} reports a reduced grid for each method, including the optimal hyperparameters found in our search. 

Several settings are shared across the evaluated methods. For the exploration (or denoising) kernels, all methods utilize the Metropolis-Adjusted Langevin Algorithm (MALA) across all tasks, with the exception of the ALDP task, which utilizes Hamiltonian Monte Carlo (HMC). 
To ensure numerical stability, we initialize a task-specific base step size that is consistent across all samplers: $0.1$ for the GM task, $0.0001$ for the LJ task, $0.000001$ for the ALDP task, and $0.00001$ for the BNN task.
From this base initialization, all methods employ an adaptive step size: following each step, the size is updated to target theoretically optimal acceptance rates of $0.574$ for MALA and $0.651$ for HMC \cite{roberts1998optimal,beskos2013optimal}. Furthermore, for methods requiring a reference distribution (NRPT and OASMC), we use a flat distribution (i.e., a uniform distribution with infinite support) in the GM, LJ, and ALDP tasks, and the prior distribution in the BNN task. 

The method-specific hyperparameters are detailed below:
\begin{itemize}
    \item \textbf{CDS:} The hyperparameters include the first-stage sampler configurations, the number of integration steps, the corrector kernel and steps, the initial time $t_0$, and the noise schedule $\sigma_t$. We employ NRPT for the first stage, fixing its hyperparameters to near-optimal values obtained for the NRPT baseline. We note that these values were not explicitly jointly optimized for CDS, where the first stage targets a distribution closer to the reference than the standalone NRPT baseline (and thus we expect them to differ). For the integration phase, we consider $\{10, 100, 1000\}$ for the integration steps and tune $t_0$ independently for each task ($t_0 = 0.01$ for GM, $t_0 \in \left\{ 0.2, 0.3 \right\}$ for LJ, $t_0 \in \left\{ 0.1, 0.2 \right\}$ for ALDP, $t_0 = 0.1$ for BNN). We utilize MALA as the corrector kernel, applying either 0 (no correction) or 1 corrector step. Finally, we employ a constant noise schedule, fixing $\sigma_t$ to the same task-specific base step size shared across the other evaluated methods. 
    \item \textbf{NRPT:} The hyperparameters include the number of replicas and the annealing schedule. We select the number of replicas from the set $\{3, 5, 10\}$. For the annealing schedule, we adopt a geometric schedule and tune the initial value, $\beta_{\min}$, per task ($\beta_{\min} \in \left\{ 0.001, 0.01 \right\}$ for GM, $\beta_{\min} \in \left\{ 0.7, 0.8 \right\}$ for LJ, $\beta_{\min} \in \left\{ 0.3, 0.5 \right\}$ for ALDP, $\beta_{\min} \in \left\{ 0.7, 0.8 \right\}$ for BNN). Additionally, we consider the schedule optimization procedure proposed by \cite{syed2022non}.
    \item \textbf{OASMC:} Hyperparameters include the number of particles, the resampling threshold, the resampling mechanism, and the annealing schedule. We consider $\{10, 100, 1000\}$ particles for all tasks. For the resampling mechanism, we used both multinomial and systematic resampling, and observed that the latter provides a stronger empirical performance \citep{smith2013sequential}. Resampling events are triggered whenever the effective sample size (ESS) proportion falls below a specified threshold, evaluating values in $\{0.5, 1.0\}$. Similar to NRPT, we utilize a geometric annealing schedule, tune $\beta_{\min}$ per task, and apply the schedule optimization procedure introduced by \cite{syed2024optimised}.
    \item \textbf{DiGS:} The key hyperparameters are the noise schedule, the number of denoising steps, and the number of noise levels. The noise schedule follows the one used in the original paper \citep{chen2024diffusive}. It is determined by the parameters $\alpha_{\min}$ and $\alpha_{\max}$, for which we consider $\{0.1, 0.4\}$ and $\{0.6, 0.9\}$, respectively. We consider $\{1, 4\}$ for the number of denoising steps and $\{1, 5\}$ for the number of noise levels.
    \item \textbf{HMC:} In addition to the shared adaptive step size, we tune the number of leapfrog steps, considering the set $\{3, 5\}$ across all tasks.
    \item \textbf{MALA:} The sole hyperparameter is the step size, which relies entirely on the shared adaptive procedure described above.
    \item \textbf{NUTS:} In addition to the shared adaptive step size, we tune the maximum tree depth (which bounds the trajectory length); we consider the set $\left\{ 3, 5, 8\right\}$ for all tasks.
    \item \textbf{MAMS:} Similar to HMC, we tune the number of integration steps, considering the set $\{3, 5\}$ across all tasks.
    \item \textbf{SVGD:} The key hyperparameters include the step size (learning rate) for the particle updates and the kernel configuration. We adopt the shared task-specific base step size used across the other evaluated methods, and employ a Radial Basis Function (RBF) kernel with its bandwidth determined via the median heuristic.
\end{itemize}
\end{changes}
% All methods, except HMC, employ MALA as the exploration kernel with an adaptive step size.
% Specifically, the step size is updated after each iteration to target an acceptance rate of $0.574$, which is theoretically optimal under strong assumptions on the target distribution \cite{roberts1998optimal}.
% The two annealing-based methods, NRPT and OASMC, use a geometric annealing schedule with $\beta_0 = \beta_{\min}$.
% In addition, we consider the schedule optimization procedure proposed in \cite{syed2022non, syed2024optimised}.

% For CDS, we employ either zero or a single corrector step, using MALA as the corrector kernel.
% The noise schedule $\sigma_t$ is fixed to a constant value across all experiments.
% CDS uses NRPT in its first stage, with the corresponding hyperparameters fixed to the values highlighted with an underscore in \autoref{tab:hyperparameters}.

\textbf{Initialization and Sample Generation.}
To ensure a fair comparison, all samplers share a common initialization scheme. 
First, we identify one mode of each target density, $x_\pi^* = \arg \max_x \log \pi (x)$, using $1,000$ iterations of standard gradient descent to ensure convergence. 
All samplers are then initialized at $x_\pi^*$. 
For CDS, which requires initialization from a reference distribution $z \sim \nuref$, we define $\nuref = \mathcal{N}(x_\pi^*, \tau^2 I)$ and set the starting point to the mean, $z = x_\pi^*$. 
We fix $\tau = 1.0$ across all target densities.

For each method, the total sample count is set to $10^4$ for the GM and LJ tasks, $10^5$ for the ALDP task, and $10^3$ for the BNN task. 
To minimize sample autocorrelation, we employ a fully parallelized scheme where each sampler runs as many chains as the number of required samples. 
This is implemented efficiently in PyTorch using vectorized operations, generating a tensor of shape $(\texttt{num\_samples}, D)$.

\textbf{Pareto Fronts.}
To construct the Pareto fronts presented in this paper,  for each dataset, we fix a set of computational budgets, measured in terms of the number of density evaluations.
For each budget and each method, we run all hyperparameter configurations for a number of iterations chosen to match the prescribed budget.
Each configuration is repeated three times to obtain uncertainty estimates.

Pareto fronts are then constructed using a bootstrap procedure adapted from \cite{grunert2001inferential}.
For each method, we perform 50 bootstrap iterations; in each iteration, we resample the experimental replicates to estimate the mean computational cost and performance.
We then identify the non-dominated configurations and represent the resulting frontier as a monotonic step function.
Finally, we aggregate the bootstrapped frontiers to compute the median Pareto front, together with 5th–95th percentile confidence bands.

\section{Additional Experiments and Results.}
\label{app:additional_experiments}

% \subsection{Analysis of the Global Communication Barrier}
\subsection{Communication Efficiency Analysis.}
\label{app:comm_efficiency_analysis}

\begin{figure}[t]
\centering
\includegraphics[width=0.9\columnwidth]{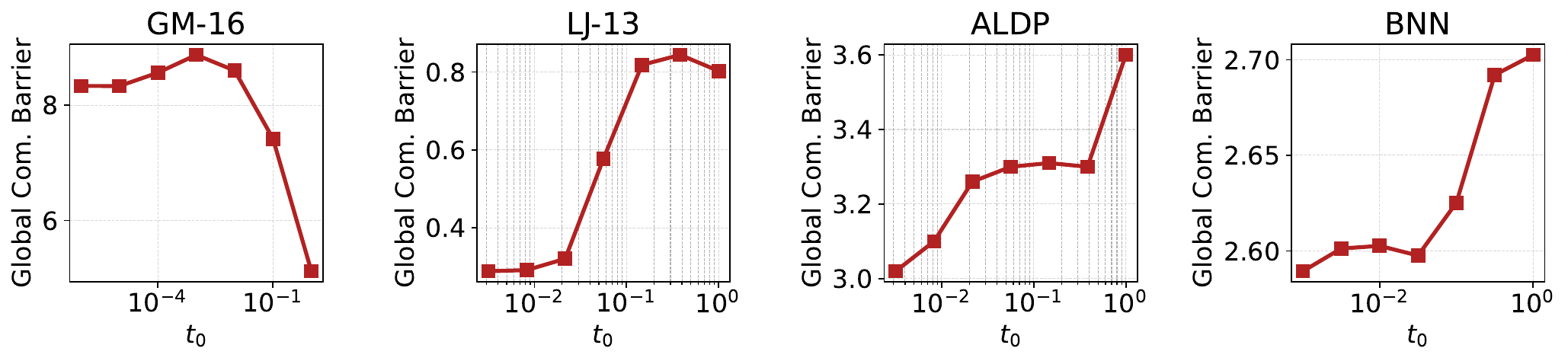}
\caption{
\textbf{Global Communication Barrier (GCB) as a function of $t_0$}. Across LJ-13, ALDP, and BNN, decreasing $t_0$ initially improves communication efficiency (lower GCB).
% However, as $t_0$ approaches zero, the GCB rises again due to the singular nature of $\nu_{t_0 \mid z}$, reducing overlap between adjacent chains.
}
\label{fig:gcb}
\end{figure}

\begin{changes}
    
We study the communication efficiency of PT when targeting the conditional distribution $\nu_{t \mid z}$. Recall that the first stage of CDS samples from $\nu_{t_0 \mid z}$ for a fixed $t_0 > 0$. As $t \to 0$, $\nu_{t \mid z}$ concentrates to a Dirac mass at $z$ (a sample from the reference distribution), suggesting that communication should improve for small $t$. To test this hypothesis, we run NRPT targeting $\nu_{t_0 \mid z}$ across different values of $t_0$, keeping all hyperparameters fixed except for $t_0$.

Two factors influence PT communication efficiency: the annealing schedule and the number of replicas. We optimize the annealing schedule for each run, and set the number of replicas to the ceiling of the GCB estimated from a pilot run, following \cite{syed2022non}. These choices effectively control for their impact, isolating the effect of $t_0$ on communication efficiency. 

\textbf{Round Trips (RTs).}
RTs is the number of times a replica traverses between the reference and target distributions.
It is usually used as the main measure to assess PT efficiency, as higher RTs indicate better mixing \cite{syed2022non}.
As shown in \autoref{fig:rtc}, decreasing $t_0$ from $1.0$ generally increases RTs across all tasks, confirming that sampling $\nu_{t_0 \mid z}$ is more efficient than sampling $\nu$ directly.
However, efficiency starts to drop as $t_0 \to 0$, where the distribution becomes excessively peaked, reducing replica overlap.
As depicted in \autoref{fig:gm2_evolution}, this suggests an optimal range for $t_0$: small enough so that the target overlaps with the reference, but high enough to avoid singularities.

\textbf{Global Communication Barrier.}
In addition to Round Trips, we evaluate communication efficiency using the GCB \cite{syed2022non}.
The GCB aggregates the rejection probabilities of swap proposals between adjacent replicas, providing a global measure of the difficulty of the annealing path.
Lower GCB values indicate easier communication and higher overall efficiency.
The GCB results are reported in \autoref{fig:gcb}.
Consistent with the RT analysis, we observe that decreasing $t$ generally leads to improved communication, as reflected by a decreasing GCB in the LJ-55, ALDP, and BNN tasks.
This further supports the claim that the conditional distributions $\nu_{t \mid z}$ are easier to sample than the original target.

\textbf{Sample Quality.}
Sample quality improves as $t_0$ decreases from $1.0$, reflecting the enhanced communication efficiency of PT in this regime. However, this trend reverses as $t_0 \to 0$: the target distribution becomes increasingly concentrated, reducing overlap between replicas and degrading mixing. As a result, sample quality deteriorates for very small $t_0$, indicating the existence of an optimal intermediate range that balances overlap and numerical stability.
\end{changes}

\subsection{Initializing with a Different Sampler}
\label{app:sampler_at_initialization}

\begin{figure}[t]
\centering
\includegraphics[width=0.95\columnwidth]{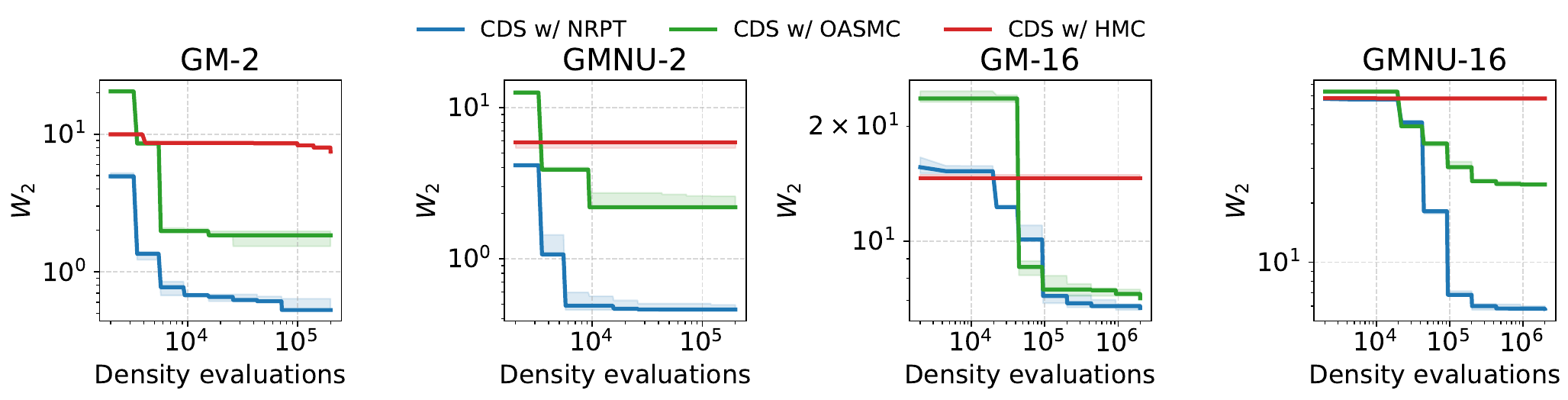}
\caption{
    \textbf{Effect of the Stage 1 sampler on CDS performance.} We compare NRPT, OASMC, and HMC while keeping Stage 2 (SDE integration) fixed. NRPT consistently outperforms OASMC, whereas HMC struggles to explore the multimodal distribution and exhibits significantly degraded performance.
}
\label{fig:sampler_at_initialization}
\end{figure}

\begin{changes}
    
We study the impact of the sampler used in Stage 1 of CDS.
The objective of this stage is to draw samples from $\nu_{t_0 \mid z}$ for a fixed $t_0 > 0$.
In principle, any sampling procedure could be employed for this purpose. However, since $\nu_{t_0 \mid z}$ remains multimodal, we expect \textit{local} MCMC methods (e.g., MALA or HMC) to exhibit poor mixing.
In contrast, annealing-based methods (such as PT and SMC), which progressively bridge the reference and target distributions, are better suited to this setting, particularly because $\nu_{t_0 \mid z}$ approaches the reference distribution as $t_0 \to 0$.

We compare CDS performance under different choices of sampler in Stage 1.
Specifically, we consider NRPT, OASMC, and HMC, all tuned as described in \autoref{app:experimental_setup}, while keeping the parameters of Stage 2 (SDE integration) fixed.
The results, shown in \autoref{fig:sampler_at_initialization}, indicate that although OASMC is a competitive alternative, it is consistently outperformed by NRPT.
In contrast, HMC performs poorly: it becomes trapped in local modes and fails to adequately explore the target distribution.

\end{changes}

\subsection{Additional Baselines}
\label{app:additional_baselines}

\begin{figure}[t]
\centering
\includegraphics[width=0.95\columnwidth]{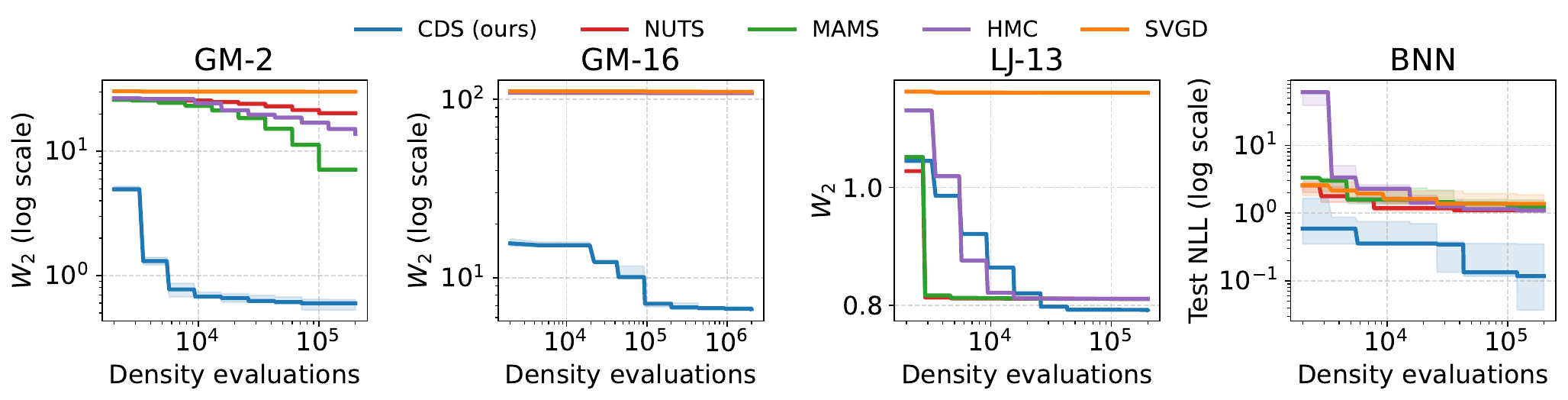}
\caption{
    \textbf{Comparison of CDS against additional baselines (NUTS, SVGD, MAMS).} CDS consistently outperforms all methods in GM and BNN, while in LJ, MAMS and NUTS perform best in the low-budget regime but are surpassed by CDS as the number of density evaluations increases.
}
\label{fig:additional_baselines}
\end{figure}

\begin{changes}
    
We compare the proposed CDS against additional baselines consisting of variants of the local samplers (MALA and HMC) considered in the main text. Specifically, we include the No-U-Turn Sampler (NUTS, \citet{hoffman2014no}), Stein Variational Gradient Descent (SVGD, \citet{liu2016stein}), and the Metropolis Adjusted Microcanonical Sampler (MAMS, \citet{robnik2025metropolis}).
Results are reported in \autoref{fig:additional_baselines}. As NUTS and MAMS are extensions of HMC, we also include HMC for reference. 

In the GM task, CDS consistently outperforms all baselines. The competing methods exhibit behavior similar to HMC and MALA, becoming trapped in local modes and failing to mix effectively.
In the LJ task, MAMS and NUTS achieve near-optimal performance with fewer than $10^4$ density evaluations, outperforming both HMC and CDS in this low-budget regime. This is expected, as local samplers such as MALA and HMC are sufficient to accurately capture the target distribution in this setting. However, as the computational budget increases, CDS surpasses all methods and achieves the best overall performance.
Finally, in the BNN task, CDS again delivers the strongest results. While the additional baselines improve upon HMC, they remain unable to match the performance of CDS.

\end{changes}

\subsection{Qualitative Analysis}
\label{app:qualitative_analysis}

\begin{figure}[t]
    \centering
    \begin{subfigure}[b]{0.95\columnwidth}
        \centering
        \includegraphics[width=1.0\linewidth]{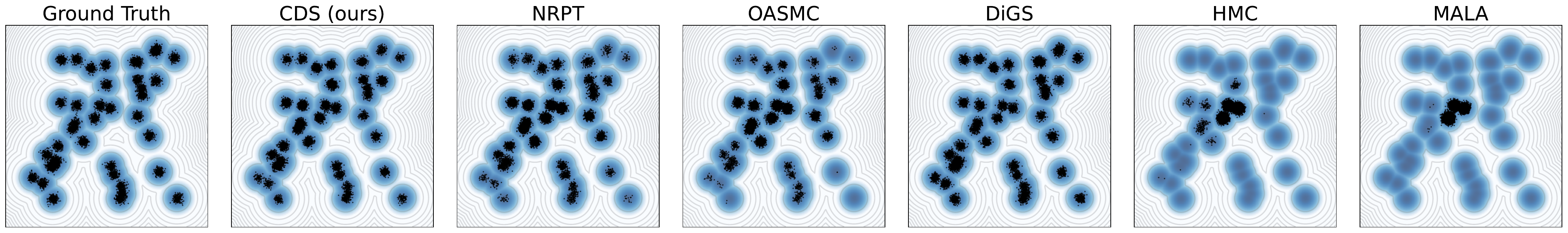}
        \caption{GM-2 target.}
        \label{fig:gm2_samples}
    \end{subfigure}
    \begin{subfigure}[b]{0.95\columnwidth}
        \centering
        \includegraphics[width=1.0\linewidth]{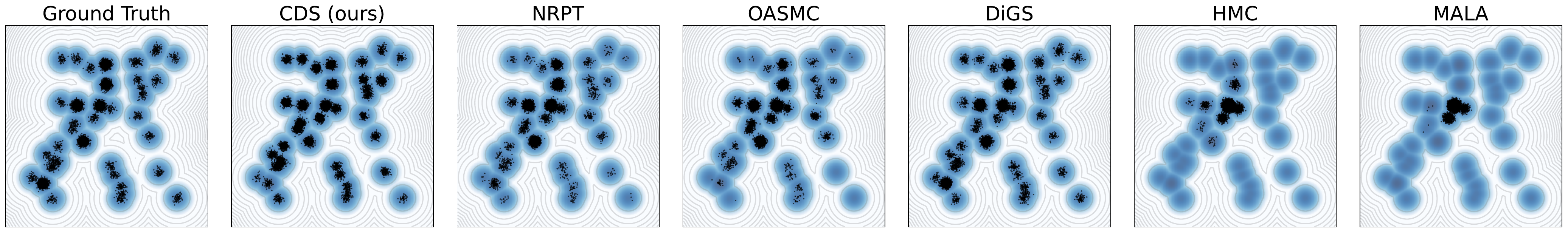}
        \caption{GMNU-2 target.}
        \label{fig:gmnu2_samples}
    \end{subfigure}
    \caption{\textbf{Comparison of ground truth and generated samples for the Gaussian Mixture (GM) task.} Each method uses a fixed budget of $2 \cdot 10^3$ density evaluations.}
    \label{fig:gm_samples}
\end{figure}

\begin{figure}[t]
\centering
\includegraphics[width=0.6\columnwidth]{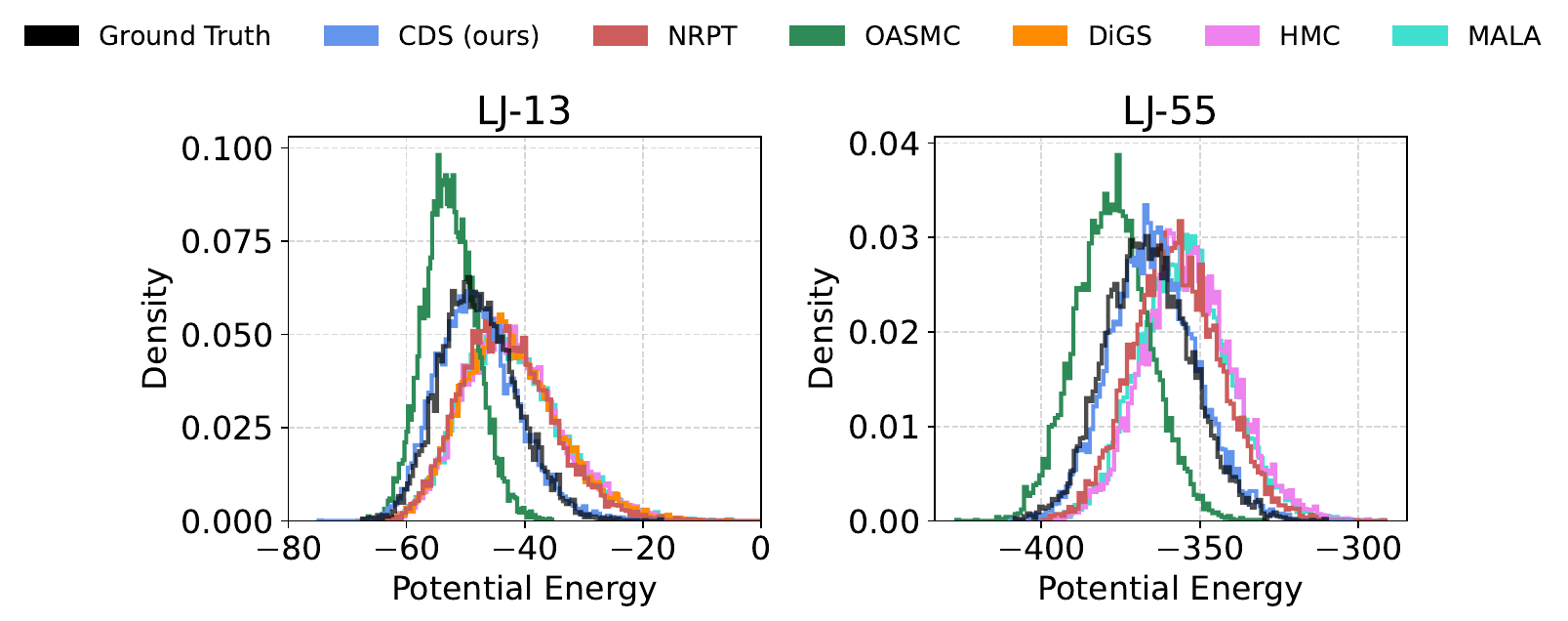}
\caption{
\textbf{Comparison of Lennard-Jones (LJ) potential energy histograms.} All samplers use a fixed budget of $2 \cdot 10^4$ and $2 \cdot 10^5$ density evaluations for the LJ-13 and LJ-55 targets, respectively. 
DiGS is omitted from the LJ55 plot as it produces a degenerate histogram. 
}
\label{fig:histograms_lj13_lj55_all}
\end{figure}

In this section, we analyze the samples generated by each method across the different tasks. Our goal is to determine how quantitative performance differences translate into qualitative sample fidelity.

\textbf{GM Task.} For the GM task, we focus on the GM-2 and GMNU-2 targets, as their two-dimensional nature allows for direct visualization. As shown in \autoref{fig:gm_samples}, when methods are restricted to a limited budget of density evaluations, only the proposed CDS and DiGS successfully recover all modes. This aligns with the Pareto front results in \autoref{fig:pareto_gm_all}, where CDS and DiGS demonstrated superior performance.

\textbf{LJ Task.} Energy histograms for the LJ systems are displayed in \autoref{fig:histograms_lj13_lj55_all}. We observe that CDS most accurately reproduces the target energy distribution, followed by NRPT and OASMC. Notably, for the LJ-55 target, only CDS and NRPT are able to accurately approximate the target distribution.

\textbf{ALDP Task.} Ramachandran histograms for the ALDP task are presented in \autoref{fig:aldp_ramachandran}. The ground truth samples reveal two distinct metastable states separated by a high-energy barrier. Both CDS (ours) and NRPT successfully recover these modes with accurate density allocations; however, NRPT slightly outperforms CDS in this specific instance. Conversely, OASMC and DiGS exhibit artifacts in high-energy transition regions, while local samplers (MALA and HMC) suffer from mode collapse, failing to traverse the energy barrier.

% \begin{figure}[t]
%     \centering
%     \includegraphics[width=1.0\linewidth]{figures/ramachandran_plots_aldp_vacuum.pdf}
%     \caption{
%         \textbf{Ramachandran histograms of Alanine Dipeptide (ALDP) in vacuum at $T = 300 \rm{K}$.} Each method uses a fixed budget of $2 \cdot 10^6$ density evaluations.
%     }
%     \label{fig:aldp_ramachandran}
% \end{figure}

\subsection{Ablation study}
\label{app:ablation}

\begin{changes}
We analyze the impact of key hyperparameters on CDS performance: the computational budget allocation between phases, the number of integration steps, the use of corrector steps, and the choice of noise schedule.

\begin{figure}[t]
\centering
\includegraphics[width=0.95\columnwidth]{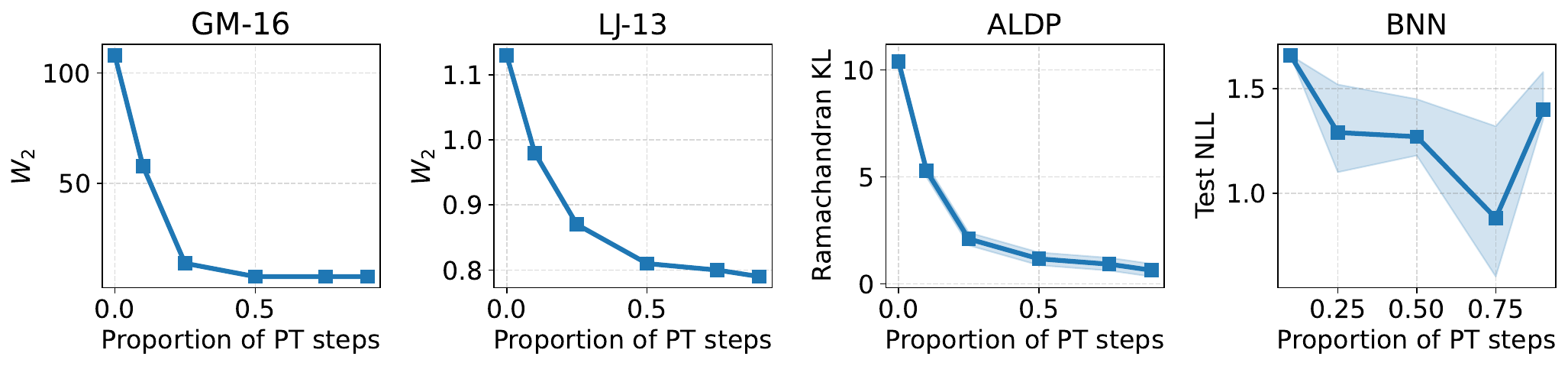}
\caption{
    \textbf{Impact of steps allocation on CDS performance}. We evaluate the trade-off between Parallel Tempering (PT) and SDE integration by varying the proportion of total steps dedicated to the PT phase.
}
\label{fig:jump_prop_ablation}
\end{figure}

\textbf{Computational Budget Allocation Between Phases.}
We study the balance between the two stages of CDS to understand how to allocate computational resources effectively. To this end, we introduce a hyperparameter $\rho$, which controls the fraction of steps assigned to Stage 1. Given a total computational budget of $N$ steps, we define:
\begin{equation}
N_{\textnormal{Stage1}} = \rho N, \qquad
N_{\textnormal{Stage2}} = (1 - \rho) N.
\end{equation}
Thus, $\rho$ determines the relative emphasis between the two stages of CDS, enabling a controlled analysis of their trade-off.

We fix all other configurations and evaluate performance as a function of $\rho$ across tasks, see \autoref{fig:jump_prop_ablation}.
As expected, $\rho = 0$ (pure SDE integration) performs poorly, confirming that starting the integration at $t=0$ leads to insufficient exploration of the target distribution.
Increasing $\rho$ consistently improves performance by incorporating more PT steps, which enhance global exploration.

For the GM, LJ, and ALDP tasks, performance continues to improve as $\rho$ increases, indicating that only a small number of integration steps are sufficient to transport samples effectively to the target distribution. In contrast, for the BNN task, overly large values of $\rho$ degrade performance, as too few SDE integration steps hinder accurate transport to the target.

\begin{figure}[t]
\centering
\includegraphics[width=0.95\columnwidth]{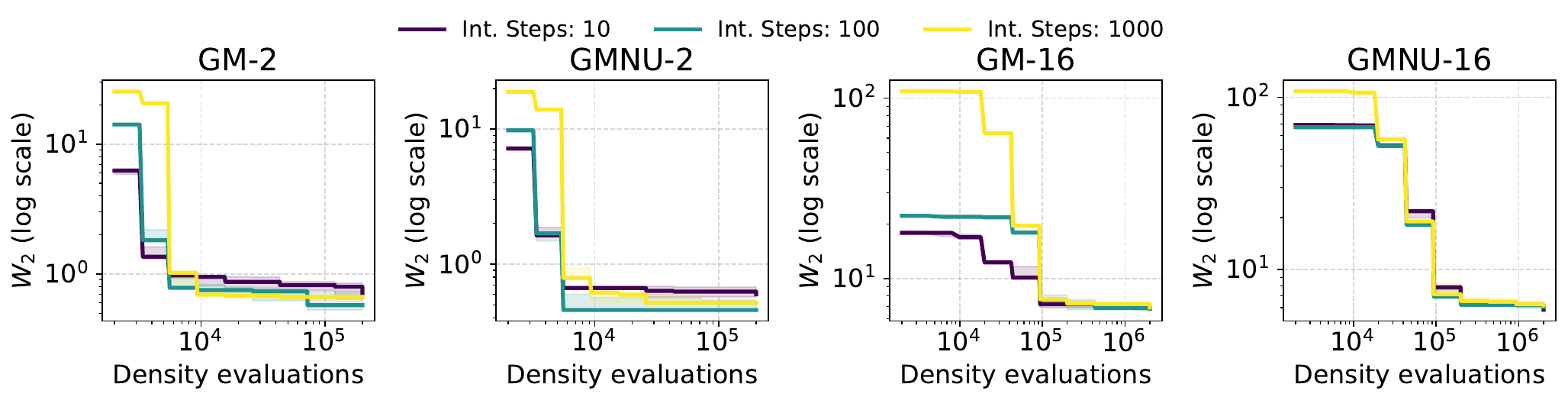}
\caption{
    \textbf{Impact of integration steps on CDS performance.} Under a limited evaluation budget, fewer integration steps are preferable as they allow more resources for exploration in the first stage. In contrast, with a larger budget, increasing the number of integration steps improves the accuracy of the transport to the target distribution.
}
\label{fig:int_steps_ablation}
\end{figure}

\textbf{Integration Steps.}
From the previous analysis, we observe that allocating more steps to the first stage of CDS is beneficial.
Next, we investigate deeper on how to choose the number of integration steps.

As before, we fix all other configurations and evaluate performance as a function of the number of integration steps. Results are shown in \autoref{fig:int_steps_ablation}, where we report Pareto fronts for different choices of this parameter. We observe that, under a limited density evaluation budget, it is more effective to use fewer integration steps, as this allows more steps to be devoted to exploration in the first stage. In contrast, when the budget is sufficiently large, allocating more integration steps becomes advantageous, improving the accuracy of the transport to the target distribution.

Overall, our results highlight the need for a balanced allocation of computational resources: sufficient effort must be devoted to the PT stage to ensure effective global exploration, while retaining enough integration steps to accurately map samples to the target distribution.

\begin{figure}[t]
\centering
\includegraphics[width=0.95\columnwidth]{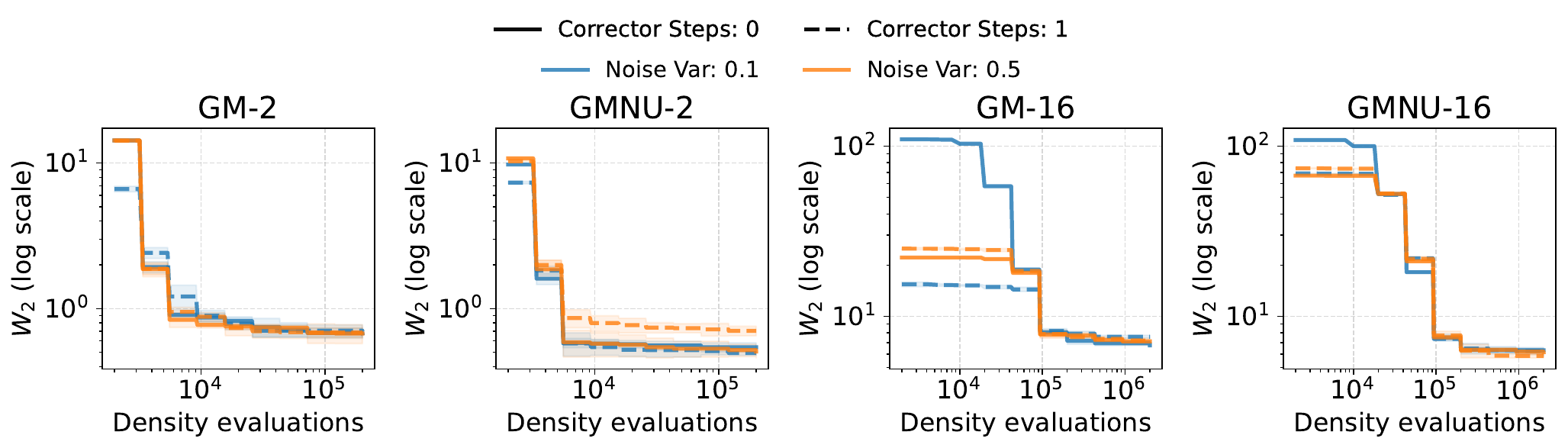}
\caption{
    \textbf{Impact of corrector steps and noise variance on CDS performance.} Corrector steps improve results only at low noise levels, while their impact becomes negligible as the diffusion variance increases.
}
\label{fig:corrector_vs_noise_ablation}
\end{figure}

\textbf{Diffusion Noise and Corrector Steps.}
We study two closely related hyperparameters: the diffusion noise variance ($\sigma_t^2$) and the use of corrector steps. Both affect the second stage of the method, in which the interpolation SDE is integrated. Corrector steps were originally introduced to mitigate errors arising from the numerical discretization of the SDE. In turn, the diffusion variance controls the weight of the score term in the SDE, and therefore modulates the strength of the guidance toward the target distribution.

Importantly, the effectiveness of corrector steps depends on the choice of diffusion variance. In particular, smaller variance reduces the influence of the score term, which can lead to samples drifting away from the intermediate distribution at each integration step.

We empirically verify this interaction by evaluating CDS across different configurations of these hyperparameters on the GM task. The results are shown in \autoref{fig:corrector_vs_noise_ablation}. We observe that corrector steps improve performance only when the diffusion variance is low. In contrast, for higher noise levels, introducing corrector steps does not yield any noticeable benefit.

\end{changes}

\newpage

\subsection{Additional pareto fronts}

\begin{figure}[H]
\centering
\includegraphics[width=0.8\columnwidth]{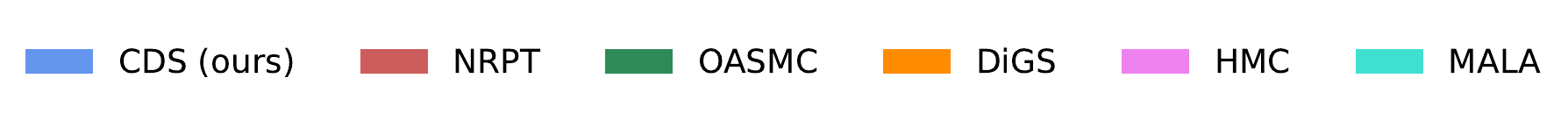}
\includegraphics[width=0.9\columnwidth]{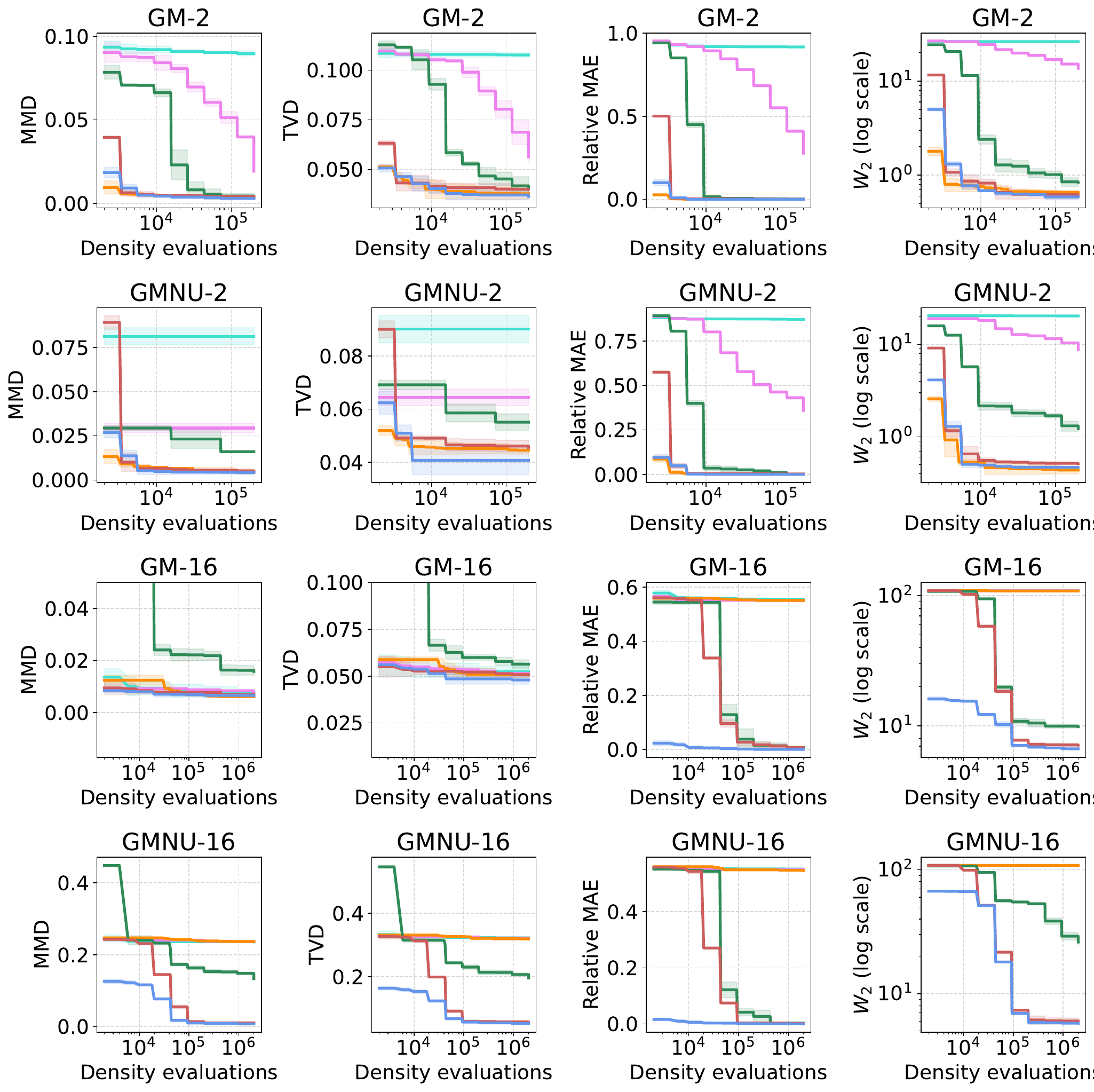}
\caption{\textbf{Pareto fronts for the Gaussian Mixture (GM) task across different evaluation metrics}. Evolution of performance across different evaluation criteria (defined in \autoref{app:metrics}). Each curve represents the optimal trade-off between computational budget and sample quality.}
\label{fig:pareto_gm_all}
\end{figure}

\begin{figure}[H]
\centering
\includegraphics[width=0.8\columnwidth]{figures/legend_samplers_horizontal_v2.pdf}
\includegraphics[width=0.9\columnwidth]{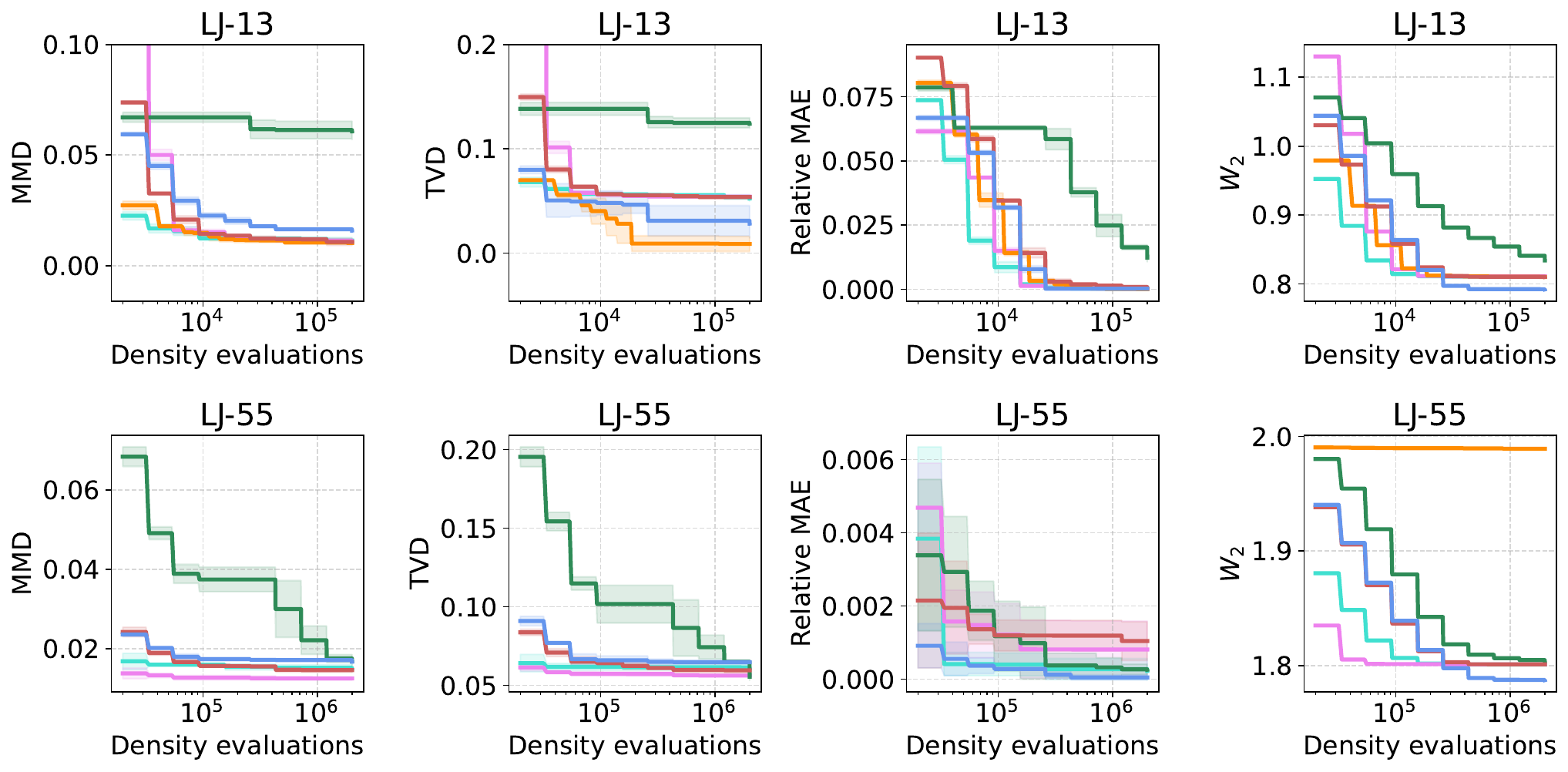}
\caption{\textbf{Pareto fronts for the Lennard-Jones (LJ) task across different evaluation metrics}. Evolution of performance across different evaluation criteria (defined in \autoref{app:metrics}). Each curve represents the optimal trade-off between computational budget and sample quality.}
\label{fig:pareto_lj_all}
\end{figure}

\begin{figure}[h]
  \vskip 0.2in
\centering
\includegraphics[width=0.8\columnwidth]{figures/legend_samplers_horizontal_v2.pdf} \\
\vspace{-0.5em}
\includegraphics[width=0.95\columnwidth]{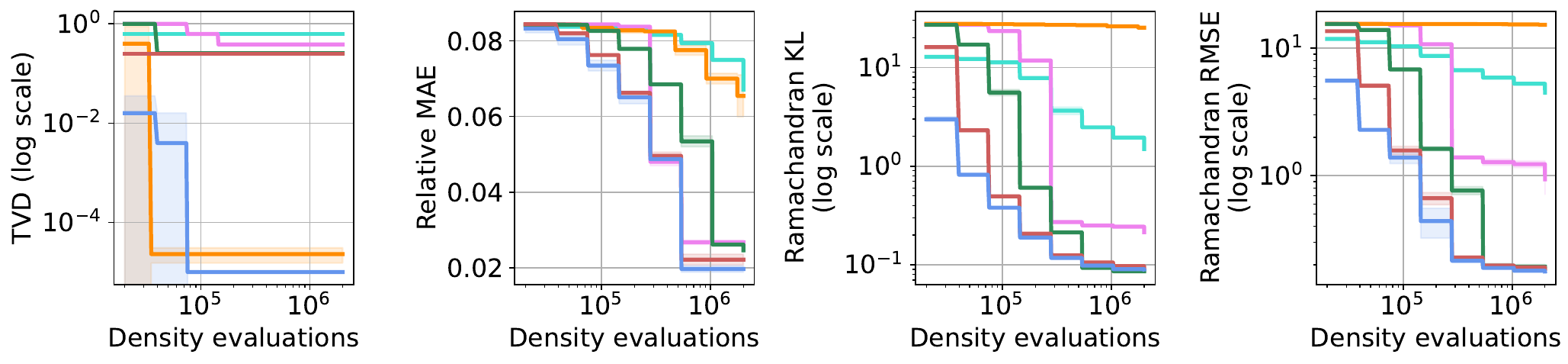}
\caption{
\textbf{Pareto fronts for the Alanine Dipeptide (ALDP) task across different evaluation metrics}. Evolution of performance across different evaluation criteria (defined in \autoref{app:metrics}). Each curve represents the optimal trade-off between computational budget and sample quality.
}
\label{fig:pareto_aldp_all}
\end{figure}

% \begin{figure}[h]
%   \vskip 0.2in
% \centering
% \includegraphics[width=0.8\columnwidth]{figures/legend_samplers_horizontal_v2.pdf} \\
% \vspace{-0.5em}
% \includegraphics[width=0.2\columnwidth]{figures/pareto_bnn_all.pdf}
% \caption{}
% \label{fig:pareto_bnn_all}
% \end{figure}

\end{document}

%% file: definitions.tex
\usepackage{bm}
\usepackage{amssymb}
\usepackage{amsmath}

\def\gm{GM}
\def\gmnu{GMNU}

\def\nuref{{\nu_{\text{ref}}}}

\def\piref{{\pi_{\text{ref}}}}

\def\dd{{\mathrm{d}}}

\def\SKL{{\operatorname{SKL}}}
\def\KL{{\operatorname{KL}}}

\def\Rbb{{\mathbb{R}}}

\def\bx{{\mathbf x}}